\documentclass{article}

\usepackage[preprint]{neurips_2026}
\usepackage[utf8]{inputenc}
\usepackage[T1]{fontenc}
\usepackage{hyperref}
\usepackage{url}
\usepackage{booktabs}
\usepackage{amsfonts}
\usepackage{nicefrac}
\usepackage{microtype}
\usepackage{xcolor}
\usepackage{graphicx}
\usepackage{amsmath}
\usepackage{amssymb}
\usepackage{amsthm}
\usepackage{multirow}
\usepackage{makecell}
\usepackage{subcaption}
\usepackage{array}
\usepackage{enumitem}
\usepackage{placeins}
\usepackage{float}
\usepackage[ruled,vlined,linesnumbered]{algorithm2e}

\makeatletter
\IfFileExists{\jobname.aux}{}{\IfFileExists{bibcite_preamble.tex}{\bibcite{alvarez2018towards}{{1}{2018}{{Alvarez~Melis and Jaakkola}}{{}}}
\bibcite{bandi2018detection}{{2}{2018}{{Bandi et~al.}}{{Bandi, Geessink, Manson, Van~Dijk, Balkenhol, Hermsen, Bejnordi, Lee, Paeng, Zhong, et~al.}}}
\bibcite{brancati2022bracs}{{3}{2022}{{Brancati et~al.}}{{Brancati, Anniciello, Pati, Riccio, Scognamiglio, Jaume, De~Pietro, Di~Bonito, Foncubierta, Botti, et~al.}}}
\bibcite{bulten2022artificial}{{4}{2022}{{Bulten et~al.}}{{Bulten, Kartasalo, Chen, Str{\"o}m, Pinckaers, Nagpal, Cai, Steiner, Van~Boven, Vink, et~al.}}}
\bibcite{bunch1978free}{{5}{1978}{{Bunch}}{{}}}
\bibcite{buzzard2024paths}{{6}{2024}{{Buzzard et~al.}}{{Buzzard, Hemker, Simidjievski, and Jamnik}}}
\bibcite{campanella2019clinical}{{7}{2019}{{Campanella et~al.}}{{Campanella, Hanna, Geneslaw, Miraflor, Werneck Krauss~Silva, Busam, Brogi, Reuter, Klimstra, and Fuchs}}}
\bibcite{caron2021emerging}{{8}{2021}{{Caron et~al.}}{{Caron, Touvron, Misra, J{\'e}gou, Mairal, Bojanowski, and Joulin}}}
\bibcite{chen2019looks}{{9}{2019}{{Chen et~al.}}{{Chen, Li, Tao, Barnett, Rudin, and Su}}}
\bibcite{chen2021annotation}{{10}{2021{a}}{{Chen et~al.}}{{Chen, Chen, Yu, Chen, Chang, Hsu, Hsiao, Yeh, and Chen}}}
\bibcite{chen2018learning}{{11}{2018}{{Chen et~al.}}{{Chen, Song, Wainwright, and Jordan}}}
\bibcite{chen2021whole}{{12}{2021{b}}{{Chen et~al.}}{{Chen, Lu, Shaban, Chen, Chen, Williamson, and Mahmood}}}
\bibcite{chen2024towards}{{13}{2024}{{Chen et~al.}}{{Chen, Ding, Lu, Williamson, Jaume, Song, Chen, Zhang, Shao, Shaban, et~al.}}}
\bibcite{chen2020simple}{{14}{2020}{{Chen et~al.}}{{Chen, Kornblith, Norouzi, and Hinton}}}
\bibcite{cheng2021computational}{{15}{2021}{{Cheng et~al.}}{{Cheng, Liu, Huang, Hong, Wang, Zhan, Han, Ni, Huang, and Zhang}}}
\bibcite{cover1999elements}{{16}{1999}{{Cover}}{{}}}
\bibcite{deyoung2020eraser}{{17}{2020}{{DeYoung et~al.}}{{DeYoung, Jain, Rajani, Lehman, Xiong, Socher, and Wallace}}}
\bibcite{dietterich1997solving}{{18}{1997}{{Dietterich et~al.}}{{Dietterich, Lathrop, and Lozano-P{\'e}rez}}}
\bibcite{ding2025multimodal}{{19}{2025}{{Ding et~al.}}{{Ding, Wagner, Song, Chen, Lu, Zhang, Vaidya, Jaume, Shaban, Kim, et~al.}}}
\bibcite{dong2025fast}{{20}{2025}{{Dong et~al.}}{{Dong, Jiang, Jiang, Li, and Zhang}}}
\bibcite{du2025rethinking}{{21}{2025}{{Du et~al.}}{{Du, Mao, Lu, Qi, Zhang, Gu, and Jiao}}}
\bibcite{ehteshami2017diagnostic}{{22}{2017}{{Ehteshami~Bejnordi et~al.}}{{Ehteshami~Bejnordi, Veta, Johannes~van Diest, Van~Ginneken, Karssemeijer, Litjens, Van Der~Laak, consortium, Hermsen, Manson, et~al.}}}
\bibcite{fong2017interpretable}{{23}{2017}{{Fong and Vedaldi}}{{}}}
\bibcite{fourkioticamil}{{24}{2024}{{Fourkioti et~al.}}{{Fourkioti, {De Vries}, and Bakal}}}
\bibcite{guo2023higt}{{25}{2023}{{Guo et~al.}}{{Guo, Zhao, Wang, and Yu}}}
\bibcite{he2016deep}{{26}{2016}{{He et~al.}}{{He, Zhang, Ren, and Sun}}}
\bibcite{he2020momentum}{{27}{2020}{{He et~al.}}{{He, Fan, Wu, Xie, and Girshick}}}
\bibcite{hinton2015distilling}{{28}{2015}{{Hinton et~al.}}{{Hinton, Vinyals, and Dean}}}
\bibcite{ilse2018attention}{{29}{2018}{{Ilse et~al.}}{{Ilse, Tomczak, and Welling}}}
\bibcite{jang2017categorical}{{30}{2017}{{Jang et~al.}}{{Jang, Gu, and Poole}}}
\bibcite{kang2023benchmarking}{{31}{2023}{{Kang et~al.}}{{Kang, Song, Park, Yoo, and Pereira}}}
\bibcite{kapse2025gecko}{{32}{2025}{{Kapse et~al.}}{{Kapse, Pati, Yellapragada, Das, Gupta, Saltz, Samaras, and Prasanna}}}
\bibcite{kim2018interpretability}{{33}{2018}{{Kim et~al.}}{{Kim, Wattenberg, Gilmer, Cai, Wexler, Viegas, et~al.}}}
\bibcite{kingma2014adam}{{34}{2014}{{Kingma and Ba}}{{}}}
\bibcite{koh2020concept}{{35}{2020}{{Koh et~al.}}{{Koh, Nguyen, Tang, Mussmann, Pierson, Kim, and Liang}}}
\bibcite{kraus2016classifying}{{36}{2016}{{Kraus et~al.}}{{Kraus, Ba, and Frey}}}
\bibcite{li2021dual}{{37}{2021}{{Li et~al.}}{{Li, Li, and Eliceiri}}}
\bibcite{li2019attention}{{38}{2019}{{Li et~al.}}{{Li, Li, Gertych, Knudsen, Speier, and Arnold}}}
\bibcite{lin2023interventional}{{39}{2023}{{Lin et~al.}}{{Lin, Yu, Hu, Xu, and Chen}}}
\bibcite{louizos2018learning}{{40}{2018}{{Louizos et~al.}}{{Louizos, Welling, and Kingma}}}
\bibcite{lu2021data}{{41}{2021}{{Lu et~al.}}{{Lu, Williamson, Chen, Chen, Barbieri, and Mahmood}}}
\bibcite{lu2023visual}{{42}{2023}{{Lu et~al.}}{{Lu, Chen, Zhang, Williamson, Chen, Ding, Le, Chuang, and Mahmood}}}
\bibcite{maddison2017concrete}{{43}{2017}{{Maddison et~al.}}{{Maddison, Mnih, and Teh}}}
\bibcite{maron1997framework}{{44}{1997}{{Maron and Lozano-P{\'e}rez}}{{}}}
\bibcite{martins2016softmax}{{45}{2016}{{Martins and Astudillo}}{{}}}
\bibcite{miller1969froc}{{46}{1969}{{Miller}}{{}}}
\bibcite{nemhauser1978analysis}{{47}{1978}{{Nemhauser et~al.}}{{Nemhauser, Wolsey, and Fisher}}}
\bibcite{oquab2024dinov2}{{48}{2024}{{Oquab et~al.}}{{Oquab, Darcet, Moutakanni, Vo, Szafraniec, Khalidov, Fernandez, Haziza, Massa, El-Nouby, et~al.}}}
\bibcite{pantanowitz2011review}{{49}{2011}{{Pantanowitz et~al.}}{{Pantanowitz, Valenstein, Evans, Kaplan, Pfeifer, Wilbur, Collins, and Colgan}}}
\bibcite{Petsiuk2018rise}{{50}{2018}{{Petsiuk et~al.}}{{Petsiuk, Das, and Saenko}}}
\bibcite{qu2024rethinking}{{51}{2024}{{Qu et~al.}}{{Qu, Ma, Luo, Guo, Wang, and Song}}}
\bibcite{russakovsky2015imagenet}{{52}{2015}{{Russakovsky et~al.}}{{Russakovsky, Deng, Su, Krause, Satheesh, Ma, Huang, Karpathy, Khosla, Bernstein, et~al.}}}
\bibcite{shannon1948mathematical}{{53}{1948}{{Shannon}}{{}}}
\bibcite{shao2025multiple}{{54}{2025}{{Shao et~al.}}{{Shao, Chen, Song, Runevic, Lu, Ding, and Mahmood}}}
\bibcite{shao2021transmil}{{55}{2021}{{Shao et~al.}}{{Shao, Bian, Chen, Wang, Zhang, Ji, et~al.}}}
\bibcite{simonyan2013deep}{{56}{2013}{{Simonyan et~al.}}{{Simonyan, Vedaldi, and Zisserman}}}
\bibcite{song2023artificial}{{57}{2023}{{Song et~al.}}{{Song, Jaume, Williamson, Lu, Vaidya, Miller, and Mahmood}}}
\bibcite{srivastava2014dropout}{{58}{2014}{{Srivastava et~al.}}{{Srivastava, Hinton, Krizhevsky, Sutskever, and Salakhutdinov}}}
\bibcite{sundararajan2017axiomatic}{{59}{2017}{{Sundararajan et~al.}}{{Sundararajan, Taly, and Yan}}}
\bibcite{tang2023multiple}{{60}{2023}{{Tang et~al.}}{{Tang, Huang, Zhang, Zhou, Zhang, and Liu}}}
\bibcite{tarvainen2017mean}{{61}{2017}{{Tarvainen and Valpola}}{{}}}
\bibcite{tourniaire2023ms}{{62}{2023}{{Tourniaire et~al.}}{{Tourniaire, Ilie, Hofman, Ayache, and Delingette}}}
\bibcite{tsallis1988possible}{{63}{1988}{{Tsallis}}{{}}}
\bibcite{van2008visualizing}{{64}{2008}{{Van~der Maaten and Hinton}}{{}}}
\bibcite{verghese2023computational}{{65}{2023}{{Verghese et~al.}}{{Verghese, Lennerz, Ruta, Ng, Thavaraj, Siziopikou, Naidoo, Rane, Salgado, Pinder, et~al.}}}
\bibcite{wang2019weakly}{{66}{2019}{{Wang et~al.}}{{Wang, Chen, Gan, Lin, Dou, Tsougenis, Huang, Cai, and Heng}}}
\bibcite{weinstein2013cancer}{{67}{2013}{{Weinstein et~al.}}{{Weinstein, Collisson, Mills, Shaw, Ozenberger, Ellrott, Shmulevich, Sander, and Stuart}}}
\bibcite{xiong2021nystromformer}{{68}{2021}{{Xiong et~al.}}{{Xiong, Zeng, Chakraborty, Tan, Fung, Li, and Singh}}}
\bibcite{xu2014weakly}{{69}{2014}{{Xu et~al.}}{{Xu, Zhu, Eric, Chang, Lai, and Tu}}}
\bibcite{yang2024mambamil}{{70}{2024}{{Yang et~al.}}{{Yang, Wang, and Chen}}}
\bibcite{yao2020whole}{{71}{2020}{{Yao et~al.}}{{Yao, Zhu, Jonnagaddala, Hawkins, and Huang}}}
\bibcite{ye2026asmil}{{72}{2026}{{Ye et~al.}}{{Ye, Hamidi, Chi, Li, Pilanci, Ogawa, Haseyama, and Plataniotis}}}
\bibcite{yoon2018invase}{{73}{2018}{{Yoon et~al.}}{{Yoon, Jordon, and Van~der Schaar}}}
\bibcite{zeiler2014visualizing}{{74}{2014}{{Zeiler and Fergus}}{{}}}
\bibcite{zhang2022dtfd}{{75}{2022}{{Zhang et~al.}}{{Zhang, Meng, Zhao, Qiao, Yang, Coupland, and Zheng}}}
\bibcite{zhang2024attention}{{76}{2024}{{Zhang et~al.}}{{Zhang, Li, Sun, Zheng, Zhu, and Yang}}}
\bibcite{zhang2025aem}{{77}{2025}{{Zhang et~al.}}{{Zhang, Li, Sun, Shui, Li, Zhu, and Yang}}}
}{}}
\makeatother

\newcommand{\gce}{\textit{+GCE}}
\newcommand{\best}[1]{\textbf{#1}}
\newcommand{\second}[1]{\underline{#1}}
\newcommand{\uptri}[1]{\ensuremath{\triangle #1}}
\newcommand{\downtri}[1]{\ensuremath{\triangledown #1}}

\theoremstyle{definition}
\newtheorem{definition}{Definition}
\theoremstyle{plain}
\newtheorem{proposition}{Proposition}

\title{GCE-MIL: Faithful and Recoverable Evidence\\for Multiple Instance Learning in Whole-Slide Imaging}
\author{%
Xiangyu Li\\
College of Intelligence and Computing\\
Tianjin University\\
\texttt{xiangyuli@tju.edu.cn}
\And
Ran Su\thanks{Corresponding author.}\\
College of Intelligence and Computing\\
Tianjin University\\
\texttt{ran.su@tju.edu.cn}
}

\begin{document}
\raggedbottom
\maketitle

\begin{abstract}
Multiple instance learning (MIL) is the standard approach for whole-slide image (WSI) classification and survival prediction, where attention-based models aggregate patch features into slide-level predictions.
These models treat attention weights as evidence for their predictions, but attention is optimized for classification, not for identifying which patches actually support the diagnosis.
This conflation leads to three failures: selected patches are \emph{insufficient} (keeping them alone drops Macro-F1 by $0.078$), \emph{unnecessary} (removing them barely changes the prediction), and \emph{unrecoverable} (continuous attention scores disagree with discrete patch subsets used at inference).
The central premise is that evidence quality should be optimized directly through explicit criteria---Sufficiency, Necessity, and Recoverability (S/N/R)---rather than inherited as a byproduct of classification.
GCE-MIL is a backbone-agnostic wrapper implemented through three injection modes and three evidence components: a grounding mechanism that aligns selection with domain-specific concepts, noisy-OR coverage that acts as a differentiable proxy for interventional evidence search, and threshold-plus-repair recovery that converts continuous selectors into discrete subsets through marginal-guided repair.
Across $9$ backbones $\times$ $9$ datasets ($81$ configurations), GCE-MIL improves average Macro-F1 by $0.024$ and C-index by $0.014$, reduces the continuous-discrete gap by $4$--$7\times$, and increases complement degradation by $2$--$4\times$.
With optional tile prefiltering after discrete recovery, inference runs up to $5\times$ faster while retaining $0.989\times$ full-bag utility.
\end{abstract}

\section{Introduction}
\label{sec:intro}

Whole-slide images contain thousands of patches, diagnostically relevant tissue is sparse, and training supervision is usually available only at the slide level.
Multiple instance learning (MIL) is therefore the standard modeling choice: a slide is treated as a bag of patches, and a bag-level predictor is trained from slide labels \citep{ilse2018attention,lu2021data,shao2021transmil}.
The same models are often asked to provide evidence by visualizing attention weights or instance scores.
This reuse is convenient, but it leaves a clinical question unresolved: which patches actually support the prediction under intervention?

Current WSI MIL methods mainly improve the predictor.
ABMIL introduces gated attention pooling \citep{ilse2018attention}; CLAM adds clustering-constrained attention \citep{lu2021data}; TransMIL models inter-instance correlation with transformers \citep{shao2021transmil}; DSMIL, DTFD-MIL, IBMIL, MHIM-MIL, CAMIL, and HDMIL improve aggregation, regularization, context, or efficiency \citep{li2021dual,zhang2022dtfd,lin2023interventional,tang2023multiple,fourkioticamil,dong2025fast}.
Attention-regularized variants such as ACMIL, AEM, and ASMIL stabilize or deconcentrate attention maps \citep{zhang2024attention,zhang2025aem,ye2026asmil}.
All these methods, however, share an implicit assumption that classification accuracy is the optimization target and that attention or score rankings are an interpretable byproduct.
This conflates two distinct goals: predicting correctly and learning correct evidence.

The failure is not simply that attention chooses the wrong number of patches; in the BRACS diagnostic, WSI evidence is often structurally non-unique.
Table~\ref{tab:minimal_subset_main} reports a recursive BRACS diagnostic using an ABMIL teacher.
For $k=8$, sufficient-prefix search finds $2.2302$ disjoint sufficient subsets per slide on average, and $72.67\%$ of validation slides admit at least two such subsets.
Attention top-$k$ also finds multiple candidate subsets, but its keep-only drop remains higher ($0.0766$ at $k=8$), while random top-$k$ almost never finds reusable evidence.
Multiple tissue regions can therefore each preserve the same slide-level decision, while a single softmax ranking collapses them into one list.
This BRACS diagnostic motivates the S/N/R framework, which is then evaluated across the full nine-dataset benchmark.
Figure~\ref{fig:motivation_main} summarizes the resulting S/N/R tension.

\begin{table*}[t]
\centering
\caption{\textbf{Minimal sufficient subset analysis on BRACS validation slides using an ABMIL teacher.} A subset is sufficient if it preserves the full-bag predicted class and has probability drop at most $0.05$.}
\label{tab:minimal_subset_main}
\resizebox{\textwidth}{!}{%
\begin{tabular}{llccccccc}
\toprule
$k$ & Policy & Subsets/slide$\uparrow$ & Slides $\ge 2$ (\%)$\uparrow$ & Slides $\ge 3$ (\%)$\uparrow$ & Keep-only drop$\downarrow$ & Remove-union-1$\uparrow$ & Remove-union-2$\uparrow$ & Remove-union-3$\uparrow$ \\
\midrule
8 & Sufficient-prefix & 2.2302$\pm$0.2635 & 72.67$\pm$13.28 & 50.35$\pm$13.16 & 0.0183$\pm$0.0019 & 0.0806$\pm$0.0361 & 0.0685$\pm$0.0200 & 0.0588$\pm$0.0102 \\
8 & Attention top-$k$ & 2.1031$\pm$0.2113 & 70.89$\pm$9.30 & 39.43$\pm$12.08 & 0.0766$\pm$0.0180 & 0.1134$\pm$0.0294 & 0.1163$\pm$0.0256 & 0.1252$\pm$0.0270 \\
8 & Random top-$k$ & 0.0688$\pm$0.0411 & 0.38$\pm$0.47 & 0.00$\pm$0.00 & 0.1763$\pm$0.0198 & 0.0000$\pm$0.0000 & 0.0000$\pm$0.0001 & 0.0000$\pm$0.0001 \\
\midrule
16 & Sufficient-prefix & 1.7295$\pm$0.2188 & 51.07$\pm$10.66 & 21.88$\pm$11.42 & 0.0162$\pm$0.0025 & 0.1086$\pm$0.0343 & 0.1014$\pm$0.0323 & 0.0862$\pm$0.0200 \\
16 & Attention top-$k$ & 1.9251$\pm$0.1473 & 59.89$\pm$5.94 & 32.62$\pm$8.98 & 0.0595$\pm$0.0199 & 0.0823$\pm$0.0294 & 0.0963$\pm$0.0309 & 0.1002$\pm$0.0280 \\
16 & Random top-$k$ & 0.1054$\pm$0.0459 & 1.34$\pm$0.99 & 0.00$\pm$0.00 & 0.1728$\pm$0.0345 & 0.0001$\pm$0.0001 & 0.0003$\pm$0.0003 & 0.0003$\pm$0.0003 \\
\midrule
32 & Sufficient-prefix & 1.3682$\pm$0.0675 & 29.00$\pm$5.34 & 7.83$\pm$1.54 & 0.0113$\pm$0.0016 & 0.1085$\pm$0.0261 & 0.0779$\pm$0.0094 & 0.0687$\pm$0.0203 \\
32 & Attention top-$k$ & 1.2785$\pm$0.0536 & 22.70$\pm$4.27 & 5.15$\pm$1.75 & 0.0877$\pm$0.0225 & 0.1260$\pm$0.0343 & 0.1371$\pm$0.0337 & 0.1358$\pm$0.0318 \\
32 & Random top-$k$ & 0.2156$\pm$0.0173 & 2.86$\pm$1.20 & 0.00$\pm$0.00 & 0.1335$\pm$0.0093 & 0.0001$\pm$0.0002 & 0.0001$\pm$0.0002 & 0.0001$\pm$0.0002 \\
\bottomrule
\end{tabular}}
\end{table*}

\begin{figure*}[t]
\centering
\includegraphics[width=\textwidth]{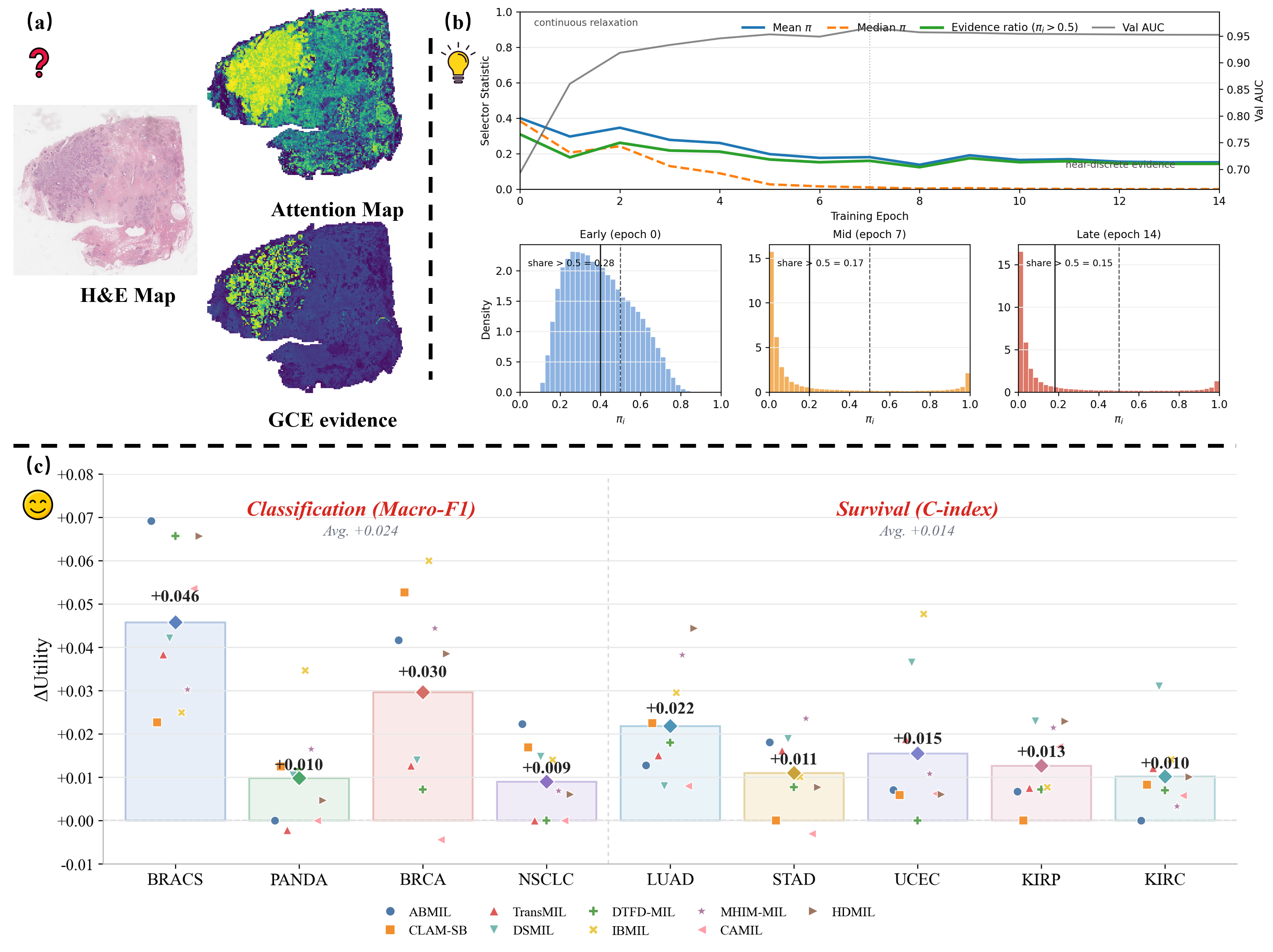}
\caption{\textbf{Three evidence failures in classification-optimized MIL.}
(Left) Attention top-$k$ achieves only $0.640$ keep-only Macro-F1 vs.\ GCE $0.722$: attention is not sufficient evidence.
(Middle) The continuous selector becomes bimodal during training, enabling discretization with C-D gap $0.004$.
(Right) Adding GCE preserves $0.99\times$ full-bag performance across backbones.}
\label{fig:motivation_main}
\end{figure*}

Evidence quality is formalized through three model-relative criteria.
\textbf{Sufficiency} asks whether the selected subset alone preserves the prediction; \textbf{Necessity} asks whether removing the subset degrades the prediction; \textbf{Recoverability} asks whether the continuous selector learned during training yields a faithful discrete subset at inference.
GCE-MIL targets these criteria directly with a semantic anchor bank, a continuous selector trained through exact noisy-OR coverage, and threshold-plus-greedy recovery.
It is a wrapper rather than a replacement backbone: the host MIL architecture remains unchanged, and the evidence gate is injected as attention-logit bias, feature reweighting, or a hybrid according to the host aggregation type.

The paper makes four contributions:
\begin{itemize}[leftmargin=1.2em,itemsep=2pt]
\item A formalization of MIL evidence quality through Sufficiency, Necessity, and Recoverability, with supporting theoretical notes on independence, recoverability, and coverage in Appendix~\ref{app:theory}.
\item A BRACS minimal-subset diagnostic showing evidence non-uniqueness in validation slides, motivating evidence selection beyond a single attention ranking.
\item GCE-MIL, a plug-in wrapper that combines semantic grounding, noisy-OR coverage, and discrete recovery through attention-bias, feature-reweighting, and hybrid injection modes.
\item An evaluation across $9$ backbones and $9$ datasets, covering prediction metrics, intervention diagnostics, localization, stability, ablations, and computational cost.
\end{itemize}

\section{Related Work}
\label{sec:related}

MIL research for WSI can be read as a sequence of improvements to bag prediction.
Pooling design is addressed by ABMIL, CLAM, DSMIL, and TransMIL \citep{ilse2018attention,lu2021data,li2021dual,shao2021transmil}; attention concentration is mitigated by ACMIL, AEM, and ASMIL \citep{zhang2024attention,zhang2025aem,ye2026asmil}; overfitting is reduced by DTFD-MIL and MHIM-MIL \citep{zhang2022dtfd,tang2023multiple}; spatial context and efficiency are modeled by CAMIL and HDMIL \citep{fourkioticamil,dong2025fast}.
These works are useful host architectures for GCE-MIL.
Their optimization target remains slide-level prediction, however, while evidence sufficiency, necessity, and recoverability are usually evaluated only after training.
GCE-MIL is therefore orthogonal: it plugs into these backbones and adds evidence objectives rather than competing as another pooling module.

Sparse selection and post-hoc attribution address explanation more directly but still leave S/N/R under-specified.
$L_0$, Concrete, and Gumbel relaxations provide differentiable gates \citep{louizos2018learning,maddison2017concrete,jang2017categorical}, yet they do not ground selected patches in pathology concepts or model multi-source diagnostic coverage.
Gradient saliency, integrated gradients, and occlusion provide post-hoc scores \citep{simonyan2013deep,sundararajan2017axiomatic,zeiler2014visualizing}, but the predictor is already fixed and the thresholded subset is not optimized to be sufficient or necessary.
GCE-MIL differs by training selection and prediction jointly, grounding gates with TITAN text anchors, and evaluating the recovered subset through explicit S/N/R interventions.

Subset and concept explanation methods provide useful context for this formulation.
L2X and INVASE learn instance- or feature-level rationales \citep{chen2018learning,yoon2018invase}, perturbation methods such as Meaningful Perturbations and RISE score regions through input interventions \citep{fong2017interpretable,Petsiuk2018rise}, and ERASER popularizes sufficiency/necessity-style rationale evaluation in NLP \citep{deyoung2020eraser}.
Concept methods such as SENN, ProtoPNet, concept bottleneck models, and TCAV connect predictions to human-readable concepts \citep{alvarez2018towards,chen2019looks,koh2020concept,kim2018interpretability}.
GCE-MIL draws on these ideas but targets the WSI MIL setting: the object being recovered is a slide-level, sparse patch subset whose continuous selector and discrete evidence are evaluated under the same bag predictor.

\section{Preliminaries and Motivation}
\label{sec:preliminaries}

\subsection{Notation and MIL Formulation}
\label{sec:mil_wsi}
In MIL for computational pathology, supervision is provided only at the slide level.
A whole-slide image is represented as a bag $X=\{x_i\}_{i=1}^{N}$ of tissue patches, where $N$ can range from hundreds to tens of thousands.
A pretrained encoder maps each patch $x_i$ to a fixed-dimensional embedding $h_i\in\mathbb{R}^{d}$ ($d=1024$ throughout).
An attention-based MIL model assigns a scalar attention score to each embedding via a learnable scorer:
\begin{equation}
z_i = f_\theta(h_i), \qquad \alpha_i = \frac{\exp(z_i)}{\sum_{j=1}^{N}\exp(z_j)},
\label{eq:attention}
\end{equation}
where $\alpha_i$ lies on the probability simplex $\Delta^N$.
The slide-level representation $h_{\mathrm{bag}} = \sum_{i=1}^{N} \alpha_i h_i$ is a convex combination of instance features weighted by the attention distribution, and is passed to a classifier to produce the bag-level prediction $\hat{y}=f(h_{\mathrm{bag}})$.

After training, the attention weights $\{\alpha_i\}$ are reused as evidence: the top-ranked patches are presented as the model's explanation for its prediction.
However, this reuse conflates two distinct objectives---classification accuracy and evidence quality---because the attention mechanism is optimized solely for the former.

\subsection{Motivation: Three Evidence Failures}
\label{sec:motivation}
Three systematic failures arise when attention is treated as evidence, motivating the S/N/R criteria formalized below.

\textbf{(P1) Insufficiency.}
Keeping only the top-attended patches should preserve the prediction if they constitute sufficient evidence.
Table~\ref{tab:causal} shows this fails: keeping attention top-$k$ drops Macro-F1 by $0.078$ from the full bag, averaged across nine datasets and nine backbones.

\textbf{(P2) Unnecessity.}
Removing the top-attended patches should degrade the prediction if they are necessary evidence.
However, removing attention top-$k$ changes Macro-F1 by only $0.033$ (Table~\ref{tab:causal}), indicating the model largely recovers from the remaining patches.

\textbf{(P3) Unrecoverability.}
During training, the selector operates in continuous space, but at inference a discrete subset must be extracted by thresholding.
The continuous-discrete gap reaches $0.029$ for ABMIL attention, compared with $0.005$--$0.011$ for GCE-wrapped backbones (Appendix Table~\ref{tab:diagnostics_bracs}), meaning the discrete inference-time evidence disagrees with the continuous signal used during training.

These failures persist across ABMIL, TransMIL, CLAM, DSMIL, and other architectures \citep{ilse2018attention,shao2021transmil,lu2021data,li2021dual}.
The problem is compounded by evidence non-uniqueness: a recursive minimal-subset diagnostic on BRACS (Table~\ref{tab:minimal_subset_main}) reveals that $72.67\%$ of slides admit at least two disjoint sufficient subsets, yet attention produces a single global ranking that conflates these sources.
This diagnostic motivates evidence selection beyond a single attention ranking; the subsequent experiments evaluate whether optimizing S/N/R improves evidence quality across datasets and backbones.

\subsection{S/N/R: Three Criteria for Evidence Quality}
\label{sec:snr}
The following definitions formalize what it means for an evidence subset to be ``correct,'' with each criterion addressing one failure.

\begin{definition}[$\delta_s$-Sufficiency, addressing P1]
For a bag predictor $f$ and subset $S\subseteq\{1,\ldots,N\}$, let $X_S=\{x_i:i\in S\}$.
$S$ is $\delta_s$-sufficient if $|f(X_S)-f(X)|\leq \delta_s$.
\end{definition}

\begin{definition}[$\delta_n$-Necessity, addressing P2]
Let $X_{\neg S}=\{x_i:i\notin S\}$.
$S$ is $\delta_n$-necessary if $|f(X_{\neg S})-f(X)|\geq \delta_n$.
\end{definition}

\begin{definition}[$\delta_r$-Recoverability, addressing P3]
For a continuous selector $\pi\in[0,1]^N$, let $X_{\pi}=\{\pi_i x_i\}$ and $S(\pi)=\{i:\pi_i\geq\tau\}$.
$\pi$ is $\delta_r$-recoverable if $|f(X_{\pi})-f(X_{S(\pi)})|\leq \delta_r$.
\end{definition}

Sufficiency ensures the evidence is self-contained; Necessity prevents trivial solutions (e.g., selecting the entire bag); Recoverability bridges training and inference.
Together, they separate ``correct prediction'' from ``correct explanation'' as two evaluation axes.
Appendix Table~\ref{tab:metric_defs} summarizes the operational diagnostics used in the experiments.
The next section presents GCE-MIL, which simultaneously addresses \textbf{(P1)}, \textbf{(P2)}, and \textbf{(P3)}.

\section{Grounded Continuous Evidence MIL}
\label{sec:method}

\subsection{Overview}
\label{sec:method_overview}
GCE-MIL is a plug-in wrapper that adds evidence optimization to any existing MIL backbone $f_\theta$ without modifying its architecture.
The wrapper introduces three components, each targeting one of the S/N/R criteria identified in Section~\ref{sec:preliminaries} (Figure~\ref{fig:method_overview}):
\begin{itemize}[leftmargin=1.2em,itemsep=2pt]
\item A \emph{semantic anchor bank} that grounds patch selection in pathology-specific concepts, addressing Necessity~\textbf{(P2)} by tying evidence to diagnostic structures rather than arbitrary attention scores.
\item A \emph{continuous selector with noisy-OR coverage} that produces soft gates $\pi\in[0,1]^N$, addressing Sufficiency~\textbf{(P1)} by modeling multi-source evidence coverage across diagnostic concepts.
\item A \emph{discrete recovery} procedure that converts $\pi$ into an inference-time subset using the same submodular coverage utility, addressing Recoverability~\textbf{(P3)} by keeping the discrete evidence close to the continuous selector.
\end{itemize}
The backbone $f_\theta$ remains structurally unchanged---GCE only adds a soft evidence mask that modulates the backbone's inputs.
The gate $\pi$ is injected according to the backbone's aggregation type: as an attention-logit bias $\alpha_i \leftarrow \alpha_i+\log \pi_i$ for attention-based backbones (ABMIL, CLAM-SB, IBMIL), as feature reweighting $h_i \leftarrow \pi_i\cdot h_i$ for token-based backbones (TransMIL, DTFD-MIL, HDMIL, CAMIL), or as a hybrid of both for multi-path backbones (DSMIL, MHIM-MIL).
This injection preserves the host backbone's scoring head while giving GCE a consistent interface for evidence evaluation.

\begin{figure*}[t]
\centering
\includegraphics[width=\textwidth]{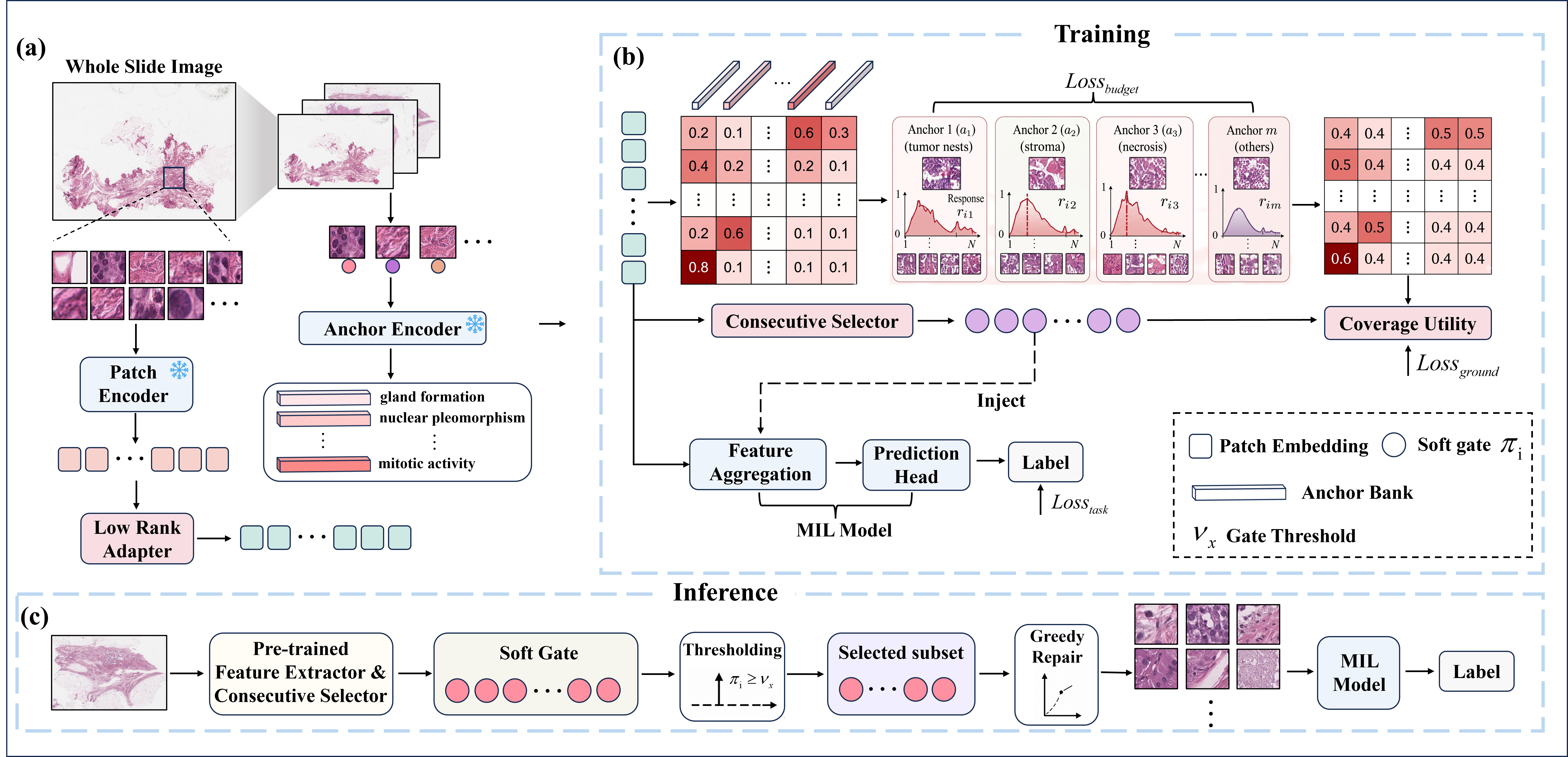}
\caption{\textbf{GCE-MIL architecture.}
The framework wraps existing MIL backbones with three components: (1) low-rank adapter and semantic bridge for anchor grounding (drives Necessity via concept coverage), (2) continuous selector with exact noisy-OR coverage (drives Sufficiency via multi-source evidence), (3) threshold-plus-repair discrete recovery (drives Recoverability via the same marginal coverage objective).
The host backbone remains unchanged, making GCE a plug-in wrapper.}
\label{fig:method_overview}
\end{figure*}

\subsection{Semantic Anchor Grounding}
\label{sec:anchor_bridge}
GCE-MIL grounds evidence selection in domain-specific semantic concepts rather than learning selection from classification gradients alone.
Diagnostically relevant structures---tumor nests, stromal reactions, necrosis, mitotic figures---have well-defined morphological descriptions that can serve as selection anchors, enabling the selector to distinguish diagnostically informative patches from visually salient but irrelevant ones.

GCE-MIL uses $M=8$ semantic anchors defined as frozen text embeddings from TITAN \citep{ding2025multimodal}, a pathology vision-language model.
The anchor prompts are task-specific morphology descriptions chosen before training from disease and histology priors, without validation-set tuning or patch-level concept labels.
Examples include ``gland formation,'' ``nuclear pleomorphism,'' ``mitotic activity,'' and ``necrotic tumor cells''; Appendix Table~\ref{tab:anchor_protocol} records the prompt categories used for each dataset family.
The text embeddings are computed once and fixed during training, and only the adapter, bridge, selector, and host MIL parameters are learned.

Each patch embedding passes through a low-rank residual adapter $e_i=\mathrm{norm}((I+UV^\top)h_i)$, where $U,V\in\mathbb{R}^{d\times r}$ ($r=32$) are initialized near zero.
A separate bridge $B(\cdot)$ maps raw features into the anchor space.
The patch-anchor response is:
\begin{equation}
r_{im}=\sigma\left(\gamma(\cos(B(h_i),a_m)-\delta)\right),
\label{eq:anchor_response}
\end{equation}
where $a_m$ is the frozen anchor embedding, $\gamma=8.0$ sharpens the response, and $\delta=0.15$ suppresses weak matches.
Disease-specific anchors improve over generic prompts in Table~\ref{tab:grounding}, lowering the C-D gap from $0.015$ to $0.010$ and raising complement degradation from $0.210$ to $0.290$.
Random and shuffled prompts remain close to no grounding, while generic prompts, disease-specific prompts, and constrained TITAN grounding improve in order; this pattern suggests that the gain is not only an effect of adding selector capacity.
The full TITAN anchor configuration with the constrained bridge further reaches $0.004$ C-D gap and $0.412$ complement degradation.

\subsection{Continuous Selector and Noisy-OR Coverage}
\label{sec:selector_coverage}
Given the anchor responses $\{r_{im}\}$, the continuous selector determines which patches to include in the evidence subset.
For each patch, a small MLP receives the adapted embedding $e_i$ and spatial coordinates $c_i$, and outputs a scalar score $s_i$.
The inclusion gate is computed as:
\begin{equation}
\pi_i=\sigma\left(\frac{s_i-\nu_x}{T}\right),
\label{eq:selector}
\end{equation}
where $\nu_x=0$ is a centering constant and $T$ is a temperature that is annealed from $1.0$ to $0.4$ during training.
This annealing gradually pushes the gate distribution toward a bimodal regime (Figure~\ref{fig:motivation_main}, middle panel), making the continuous selector increasingly discrete-like and facilitating recovery at inference time.

\textbf{Why noisy-OR for coverage?}
The S/N/R criteria impose specific requirements on how per-patch anchor responses are aggregated into coverage.
Mean pooling conflates ``many weak responses'' with ``one strong response,'' violating coverage semantics.
Attention pooling reintroduces softmax concentration.
Noisy-OR provides the right inductive bias: for anchor $m$, coverage under continuous gates $\pi$ is
\begin{equation}
v_m(\pi)=1-\prod_i(1-\pi_i r_{im}).
\label{eq:noisy_or}
\end{equation}
This models each patch as an independent evidence channel with diminishing marginal returns.
The class-level utility aggregates coverage across anchors:
\begin{equation}
U_c(\pi)=\sum_m \alpha_{cm}v_m(\pi), \qquad \alpha_{cm}\ge 0,
\label{eq:class_utility}
\end{equation}
where $\alpha_{cm}$ are learnable class-anchor weights.
Crucially, noisy-OR provides closed-form marginals for greedy repair: $\partial U_c/\partial \pi_i = \sum_m \alpha_{cm}r_{im}\prod_{j\neq i}(1-\pi_j r_{jm})$.
The marginal gain decreases as more patches are selected, which is the diminishing-returns property needed for Necessity.
Appendix~\ref{app:theory} gives the corresponding modeling interpretation, including S/N/R independence, a gate-margin recoverability bound, conditional coverage bounds, and a Cox risk-pathway view.

\begin{proposition}[Submodularity of Noisy-OR Coverage]
\label{prop:submodular}
For fixed anchor responses $r_{im}$ and class weights $\alpha_{cm} \ge 0$, the utility $U_c(S) = \sum_m \alpha_{cm} [1 - \prod_{i \in S} (1 - r_{im})]$ is monotone submodular in $S$.
\end{proposition}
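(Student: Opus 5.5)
The argument reduces to a single anchor. Nonnegative combinations of monotone submodular set functions are monotone submodular, and $\alpha_{cm}\ge 0$. So it suffices to show that, for each fixed $m$, the function
\[
g_m(S)=1-\prod_{i\in S}(1-r_{im})
\]
is monotone submodular on subsets of $\{1,\ldots,N\}$. Throughout we use that $r_{im}\in(0,1)$, since it is a sigmoid output in Eq.~\eqref{eq:anchor_response}; the argument only needs $r_{im}\in[0,1]$. We also use the convention that the empty product equals $1$, so $g_m(\emptyset)=0$.

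The key step is a closed form for the marginal gain. Write $P_m(S)=\prod_{i\in S}(1-r_{im})\in[0,1]$. For $j\notin S$,
\[
g_m(S\cup\{j\})-g_m(S)=P_m(S)-P_m(S)(1-r_{jm})=r_{jm}\,P_m(S).
\]
Since $r_{jm}\ge 0$ and $P_m(S)\ge 0$, the gain is nonnegative, which gives monotonicity.

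For submodularity, take $S\subseteq T$ and $j\notin T$. Then
\[
P_m(T)=P_m(S)\prod_{i\in T\setminus S}(1-r_{im})\le P_m(S),
\]
because each factor lies in $[0,1]$. Multiplying by $r_{jm}\ge 0$ gives the diminishing-returns inequality
\[
g_m(S\cup\{j\})-g_m(S)\ge g_m(T\cup\{j\})-g_m(T),
\]
which is equivalent to submodularity.

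Finally, summing with weights $\alpha_{cm}\ge0$ preserves both inequalities term by term, so $U_c$ is monotone submodular.

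There is no real obstacle. The only step needing care is making explicit that every factor $1-r_{im}$ lies in $[0,1]$, because this drives both the sign and the monotone decrease of $P_m$. It holds because the sigmoid in Eq.~\eqref{eq:anchor_response} keeps $r_{im}\in(0,1)$.

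It is also worth noting how this relates to the continuous gradient formula given above. The marginal $r_{jm}P_m(S)$ is exactly that gradient evaluated at the indicator vector of $S$. This identifies the greedy repair marginals with the set-function gains. That identification is not needed for the proof itself.
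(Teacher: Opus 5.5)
Your proof is correct and follows essentially the paper's argument: the closed-form marginal gain $r_{jm}\prod_{i\in S}(1-r_{im})$, its decrease when $S\subseteq T$, and summation with weights $\alpha_{cm}\ge 0$. You also make explicit two things the paper leaves implicit: the nonnegativity of the marginal gain, which gives monotonicity, and the requirement $r_{im}\in[0,1]$, which is what makes the product over $S$ at least as large as the product over $T$.
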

\begin{proof}
For $S \subseteq T$ and $i \notin T$, the marginal gain is $\Delta_m(i | S) = r_{im} \prod_{j \in S} (1 - r_{jm}) \ge r_{im} \prod_{j \in T} (1 - r_{jm}) = \Delta_m(i | T)$, since $S \subseteq T$ implies the product over $S$ is at least as large.
Summing over $m$ with $\alpha_{cm} \ge 0$ preserves the inequality.
\end{proof}
This submodularity justifies greedy marginal repair at the coverage-utility level: under the standard cardinality-limited coverage setting, greedy selection attains the usual $(1-1/e)$ approximation \citep{nemhauser1978analysis}; Appendix~\ref{app:theory} gives the curvature-aware refinement.
The implemented repair additionally checks threshold recovery and prediction sufficiency, so the claim is a coverage-property statement rather than a global optimality statement about the classifier.

\subsection{Training Objective and Discrete Recovery}
\label{sec:training_recovery}
GCE-MIL trains the host backbone and selector jointly with a composite loss:
\begin{equation}
\mathcal{L}
=\mathcal{L}_{\mathrm{task}}
+\lambda_b\mathcal{L}_{\mathrm{budget}}
+\lambda_g\mathcal{L}_{\mathrm{ground}},
\label{eq:loss}
\end{equation}
where each term targets a specific S/N/R criterion.
$\mathcal{L}_{\mathrm{task}}$ is the unmodified backbone loss (cross-entropy for classification, Cox partial likelihood for survival), preserving the host model's predictive capacity.
$\mathcal{L}_{\mathrm{budget}}=\operatorname{ReLU}(\mathbb{E}[\pi]-\rho)^2$ enforces sparsity, driving \textbf{Sufficiency} by requiring the selector to preserve the prediction with a compact subset.
The reported benchmark uses the operating evidence budget $\rho=0.05$, selected by the validation sweep in Appendix Table~\ref{tab:budget}; larger budgets are reported as sensitivity points rather than mixed into the main tables.
$\mathcal{L}_{\mathrm{ground}}$ aligns $\pi$ with noisy-OR anchor responses, driving \textbf{Necessity} by ensuring selected patches are grounded in diagnostic concepts rather than arbitrary features.
Recoverability is enforced by temperature annealing and the threshold-plus-repair procedure, which make the learned continuous gate compatible with discrete evidence extraction at inference.
The weights $\lambda_b=0.1$ and $\lambda_g=0.5$ define the reported cross-dataset setting and are kept fixed across datasets and backbones after selection on BRACS validation folds.
Table~\ref{tab:ablation} validates each component's contribution: adding budget control reduces the C-D gap from $0.055$ to $0.011$; adding grounding increases complement degradation from $0.318$ to $0.403$; the full pipeline reaches $0.004$ gap and $0.412$ degradation.

\textbf{Discrete recovery at inference.}
At test time, GCE-MIL converts the continuous selector into a discrete evidence subset via threshold-plus-repair (Algorithm~\ref{alg:recovery}).
The initial subset $S_0 = \{i : \pi_i > 0.5\}$ is obtained by thresholding; if empty, the top-$1$ patch is used as a fallback.
Greedy repair then adds patches in decreasing order of marginal coverage gain until the coverage target $c=0.95$ is met.
Because the coverage utility is monotone submodular (Proposition~\ref{prop:submodular}), this greedy procedure has a principled diminishing-returns objective rather than an unrelated post-hoc ranking.
The pseudocode is provided in Appendix~\ref{app:recovery_algorithm}.

\begin{proposition}[Greedy recovery scope]
\label{prop:recovery_scope}
Let $\pi \in [0,1]^N$ be the continuous selector, $S_0 = \{i : \pi_i > 0.5\}$ be the thresholded subset, and $S^*$ be the output of Algorithm~\ref{alg:recovery} with coverage target $c$.
Then the following statements hold:
\begin{enumerate}[leftmargin=1.4em,itemsep=1pt,topsep=2pt]
\item If the loop terminates by satisfying the coverage condition, then $\min_m v_m(\mathbf{1}_{S^*}) \ge c$ by construction.
\item Each added patch maximizes the exact one-step marginal gain of the noisy-OR utility used during training.
\item If repair is restricted to a fixed-size shortlist and evaluated only as coverage maximization, the greedy part inherits the standard $(1-1/e)$ approximation to the best shortlist subset of that size.
\end{enumerate}
These are coverage-level properties; they do not assert global optimality of the host classifier under arbitrary interventions.
\end{proposition}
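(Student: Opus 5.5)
The plan is to prove the three items separately. Each one follows by unpacking the definition of Algorithm~\ref{alg:recovery} together with Proposition~\ref{prop:submodular}, so no new machinery is needed. Throughout, write $v_m(\mathbf{1}_S)=1-\prod_{i\in S}(1-r_{im})$ for the noisy-OR coverage of Eq.~\eqref{eq:noisy_or} evaluated at the indicator vector of $S$. This coincides with the set function used in Proposition~\ref{prop:submodular}.

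For item 1, I will read off the loop invariant of Algorithm~\ref{alg:recovery}. The loop exits through the coverage branch only when the tested condition $\min_m v_m(\mathbf{1}_{S})\ge c$ holds, and the returned $S^*$ is the current $S$ at that moment, with no later modification. The only point to verify is that the algorithm tests per-anchor coverage and not some aggregate such as $U_c$. If the pseudocode instead tests $U_c(\mathbf{1}_S)\ge c\sum_m\alpha_{cm}$, I would restate the item in that form, or note that it implies the min-form only when a single anchor carries weight.

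For item 2, I will show that the repair step's score equals the exact one-step gain. Fix a current set $S$ and a candidate $i\notin S$. By direct expansion,
\[
U_c(S\cup\{i\})-U_c(S)=\sum_m\alpha_{cm}\,r_{im}\prod_{j\in S}(1-r_{jm}).
\]
This equals the closed-form partial derivative $\partial U_c/\partial\pi_i$ stated in Section~\ref{sec:selector_coverage}, evaluated at $\pi=\mathbf{1}_S$. The reason is that the factor $(1-\pi_i r_{im})$ with $\pi_i=0$ drops out, and for $j\notin S$ each factor equals $1$. Noisy-OR is multilinear, so the finite difference and the derivative agree exactly, with no first-order approximation error. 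The algorithm takes the argmax of this quantity, which is therefore the maximizer of the exact marginal gain of the same utility used during training.

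For item 3, restrict the ground set to a fixed shortlist $L$ and view the repair as greedily maximizing $F(A)=U_c(S_0\cup A)-U_c(S_0)$ over $A\subseteq L$ with $|A|\le k$. I will check that $F$ is normalized ($F(\emptyset)=0$), monotone and submodular. Conditioning a monotone submodular function on a fixed set preserves both properties, and $U_c$ has them by Proposition~\ref{prop:submodular}. Then I invoke \citet{nemhauser1978analysis} to get $F(A_k)\ge(1-1/e)\max_{|A|\le k}F(A)$. The main obstacle is the stopping rule: the algorithm halts when coverage is reached rather than after exactly $k$ steps. I would handle this by interpreting ``the greedy part'' as the first $k$ greedy additions, or by padding with zero-gain steps. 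The guarantee is then stated relative to the best size-$k$ shortlist subset, with $S_0$ fixed, and the $(1-1/e)$ bound holds for the incremental coverage. The final disclaimer in the statement needs no proof, since all three items concern only $U_c$ and $v_m$.
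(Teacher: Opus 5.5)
Your proposal is correct and follows the same route as the paper's sketch: the termination condition for item 1, the derivative-based ranking for item 2, and the Nemhauser--Wolsey--Fisher greedy bound for item 3. It also supplies details the sketch omits, namely the multilinearity argument that $\partial U_c/\partial\pi_i$ at $\mathbf{1}_S$ equals the exact marginal gain, and the contraction $F(A)=U_c(S_0\cup A)-U_c(S_0)$ that keeps the bound valid when repair starts from a nonempty set.

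One small fix in item 3: drop the ``padding with zero-gain steps'' option. Non-greedy padding does not give $(1-1/e)$ against the best size-$k$ subset when the loop stops early on the coverage test. Instead, use the fact that a $j$-step greedy prefix is exactly the greedy solution for budget $j$, so the bound holds against the best shortlist subset of the same size as the actual output. Take the base set to be the initial set after the top-$1$ fallback when $S_0=\emptyset$.
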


\begin{proof}[Proof sketch]
The first claim follows directly from the termination condition.
The second follows because Algorithm~\ref{alg:recovery} ranks candidates by $\partial U_c/\partial \pi_i$ computed from the noisy-OR utility.
The third follows from standard greedy analysis for monotone submodular maximization under a cardinality budget \citep{nemhauser1978analysis}; prediction sufficiency is then checked empirically by the intervention diagnostics rather than assumed by the theorem.
\end{proof}

\section{Experiments}
\label{sec:experiments}

\subsection{Setup}
\label{sec:setup}
\textbf{Datasets.}
The evaluation covers $9$ datasets spanning two tasks: $4$ classification benchmarks (BRACS, PANDA, TCGA-BRCA, TCGA-NSCLC) and $5$ survival cohorts (TCGA-LUAD, STAD, UCEC, KIRP, KIRC).
These datasets cover diverse tissue types, label granularities (7-class fine-grained to binary subtyping), and bag sizes (hundreds to tens of thousands of patches).
Dataset details are provided in Appendix~\ref{app:datasets}.

\textbf{Backbones.}
GCE-MIL is attached to $9$ host backbones spanning the major MIL families: attention-based (ABMIL \citep{ilse2018attention}, CLAM-SB \citep{lu2021data}, IBMIL \citep{lin2023interventional}), transformer-based (TransMIL \citep{shao2021transmil}), dual-stream (DSMIL \citep{li2021dual}), pseudo-bag (DTFD-MIL \citep{zhang2022dtfd}), hard-mining (MHIM-MIL \citep{tang2023multiple}), context-aware (CAMIL \citep{fourkioticamil}), and hierarchical (HDMIL \citep{dong2025fast}).
This $9\times 9$ grid ($81$ configurations) tests whether GCE generalizes across backbone architectures and dataset characteristics.
All training and evaluation protocol details are provided in Appendix~\ref{app:implementation}.

\subsection{Main Classification Results}
\label{sec:main_results}
Table~\ref{tab:cls_main} reports the classification half of the $9\times 9$ benchmark; the survival half is reported in Appendix Table~\ref{tab:surv_main}.
The central question is whether optimizing evidence quality changes slide-level prediction, or merely reshuffles which patches are selected without affecting accuracy.
GCE gives positive Macro-F1 changes on most backbone-dataset pairs, with the largest gains on the most challenging dataset (BRACS, 7-class fine-grained classification).
On BRACS, ABMIL improves from $0.634$ to $0.703$ Macro-F1 ($+6.9$ points), HDMIL from $0.707$ to $0.773$ ($+6.6$ points), and IBMIL from $0.776$ to $0.801$ ($+2.5$ points).
The gains are smaller on easier datasets (NSCLC, binary subtyping) where baselines already achieve $>0.90$ Macro-F1, consistent with the expectation that evidence optimization matters most when the classification task requires integrating multiple diagnostic concepts.
On PANDA, GCE preserves performance (HDMIL: $0.696\to0.701$) while compacting evidence to $\sim5\%$ of patches.
On BRCA, the hardest binary task, CLAM-SB improves from $0.523$ to $0.576$ and MHIM-MIL from $0.556$ to $0.600$.

\begin{table*}[t]
\centering
\caption{Classification performance on four histopathology benchmarks (5-fold cross-validation). Baseline rows report absolute mean$\pm$std; \gce{} rows report the change from the immediately preceding baseline using triangle markers.}
\label{tab:cls_main}
\resizebox{\textwidth}{!}{%
\begin{tabular}{l ccc ccc ccc ccc}
\toprule
& \multicolumn{3}{c}{BRACS} & \multicolumn{3}{c}{NSCLC} & \multicolumn{3}{c}{PANDA} & \multicolumn{3}{c}{BRCA} \\
\cmidrule(lr){2-4}\cmidrule(lr){5-7}\cmidrule(lr){8-10}\cmidrule(lr){11-13}
Method & Accuracy & Macro-F1 & AUC & Accuracy & Macro-F1 & AUC & Accuracy & Macro-F1 & AUC & Accuracy & Macro-F1 & AUC \\
\midrule
ABMIL & 0.754$\pm$0.028 & 0.634$\pm$0.045 & 0.864$\pm$0.019 & 0.909$\pm$0.014 & 0.895$\pm$0.016 & 0.948$\pm$0.009 & 0.701$\pm$0.015 & 0.643$\pm$0.018 & 0.925$\pm$0.008 & 0.805$\pm$0.027 & 0.449$\pm$0.049 & 0.803$\pm$0.022 \\
\gce{} $\Delta$ & \uptri{0.036} & \uptri{0.069} & \uptri{0.051} & \uptri{0.023} & \uptri{0.022} & \uptri{0.013} & \uptri{0.007} & \uptri{0.000} & \uptri{0.009} & \uptri{0.014} & \uptri{0.042} & \uptri{0.017} \\
\midrule
CLAM-SB & 0.807$\pm$0.025 & 0.742$\pm$0.038 & 0.919$\pm$0.016 & 0.923$\pm$0.012 & 0.911$\pm$0.014 & 0.956$\pm$0.007 & 0.723$\pm$0.013 & 0.665$\pm$0.014 & 0.931$\pm$0.006 & 0.818$\pm$0.024 & 0.523$\pm$0.042 & 0.821$\pm$0.019 \\
\gce{} $\Delta$ & \uptri{0.015} & \uptri{0.023} & \uptri{0.008} & \uptri{0.017} & \uptri{0.017} & \uptri{0.000} & \uptri{0.006} & \uptri{0.012} & \uptri{0.001} & \uptri{0.021} & \uptri{0.053} & \uptri{0.018} \\
\midrule
TransMIL & 0.738$\pm$0.031 & 0.676$\pm$0.048 & 0.883$\pm$0.021 & 0.936$\pm$0.013 & 0.924$\pm$0.015 & 0.969$\pm$0.008 & 0.730$\pm$0.014 & 0.670$\pm$0.016 & 0.923$\pm$0.007 & 0.820$\pm$0.025 & 0.548$\pm$0.045 & 0.813$\pm$0.021 \\
\gce{} $\Delta$ & \uptri{0.030} & \uptri{0.038} & \uptri{0.021} & \uptri{0.000} & \uptri{0.000} & \uptri{0.003} & \uptri{0.009} & \downtri{0.002} & \uptri{0.012} & \uptri{0.000} & \uptri{0.013} & \uptri{0.015} \\
\midrule
DSMIL & 0.765$\pm$0.029 & 0.684$\pm$0.044 & 0.888$\pm$0.018 & 0.930$\pm$0.012 & 0.915$\pm$0.015 & 0.962$\pm$0.007 & 0.696$\pm$0.016 & 0.628$\pm$0.019 & 0.911$\pm$0.008 & 0.846$\pm$0.022 & 0.529$\pm$0.044 & 0.839$\pm$0.018 \\
\gce{} $\Delta$ & \uptri{0.014} & \uptri{0.042} & \uptri{0.021} & \uptri{0.015} & \uptri{0.015} & \uptri{0.008} & \uptri{0.006} & \uptri{0.010} & \uptri{0.005} & \downtri{0.001} & \uptri{0.014} & \uptri{0.000} \\
\midrule
DTFD-MIL & 0.784$\pm$0.026 & 0.691$\pm$0.041 & 0.917$\pm$0.015 & 0.933$\pm$0.011 & 0.921$\pm$0.013 & 0.965$\pm$0.007 & 0.736$\pm$0.012 & 0.676$\pm$0.015 & 0.932$\pm$0.006 & 0.899$\pm$0.018 & 0.684$\pm$0.032 & 0.840$\pm$0.016 \\
\gce{} $\Delta$ & \uptri{0.026} & \uptri{0.066} & \uptri{0.009} & \uptri{0.007} & \uptri{0.000} & \uptri{0.003} & \uptri{0.002} & \uptri{0.012} & \uptri{0.004} & \uptri{0.000} & \uptri{0.007} & \uptri{0.011} \\
\midrule
IBMIL & 0.828$\pm$0.022 & 0.776$\pm$0.035 & 0.918$\pm$0.014 & 0.935$\pm$0.010 & 0.922$\pm$0.012 & 0.968$\pm$0.006 & 0.715$\pm$0.014 & 0.648$\pm$0.016 & 0.923$\pm$0.007 & 0.814$\pm$0.023 & 0.487$\pm$0.039 & 0.829$\pm$0.017 \\
\gce{} $\Delta$ & \uptri{0.008} & \uptri{0.025} & \uptri{0.013} & \uptri{0.014} & \uptri{0.014} & \downtri{0.003} & \uptri{0.019} & \uptri{0.034} & \uptri{0.007} & \uptri{0.020} & \uptri{0.060} & \uptri{0.007} \\
\midrule
MHIM-MIL & 0.796$\pm$0.025 & 0.718$\pm$0.042 & 0.899$\pm$0.017 & 0.938$\pm$0.011 & 0.926$\pm$0.013 & 0.970$\pm$0.007 & 0.713$\pm$0.013 & 0.655$\pm$0.015 & 0.919$\pm$0.007 & 0.839$\pm$0.021 & 0.556$\pm$0.038 & 0.846$\pm$0.016 \\
\gce{} $\Delta$ & \uptri{0.012} & \uptri{0.030} & \uptri{0.023} & \uptri{0.007} & \uptri{0.006} & \uptri{0.000} & \uptri{0.012} & \uptri{0.017} & \uptri{0.004} & \uptri{0.017} & \uptri{0.044} & \uptri{0.012} \\
\midrule
CAMIL & 0.738$\pm$0.030 & 0.666$\pm$0.049 & 0.879$\pm$0.020 & 0.944$\pm$0.011 & 0.931$\pm$0.013 & 0.975$\pm$0.006 & 0.719$\pm$0.014 & 0.657$\pm$0.017 & 0.926$\pm$0.008 & 0.839$\pm$0.022 & 0.608$\pm$0.043 & 0.811$\pm$0.018 \\
\gce{} $\Delta$ & \uptri{0.039} & \uptri{0.054} & \uptri{0.029} & \uptri{0.006} & \uptri{0.000} & \downtri{0.002} & \uptri{0.006} & \uptri{0.000} & \uptri{0.004} & \uptri{0.003} & \uptri{0.002} & \uptri{0.006} \\
\midrule
HDMIL & 0.781$\pm$0.027 & 0.707$\pm$0.040 & 0.891$\pm$0.018 & 0.940$\pm$0.010 & 0.927$\pm$0.012 & 0.970$\pm$0.006 & 0.746$\pm$0.011 & 0.696$\pm$0.013 & 0.932$\pm$0.006 & 0.866$\pm$0.019 & 0.603$\pm$0.035 & 0.835$\pm$0.017 \\
\gce{} $\Delta$ & \uptri{0.038} & \uptri{0.066} & \uptri{0.038} & \uptri{0.000} & \uptri{0.006} & \uptri{0.003} & \uptri{0.000} & \uptri{0.005} & \uptri{0.006} & \uptri{0.006} & \uptri{0.039} & \uptri{0.009} \\
\bottomrule
\end{tabular}}
\end{table*}

\subsection{Ablation Study}
\label{sec:ablations}
Table~\ref{tab:ablation} isolates the contribution of each component.
A naive selector leaves a large C-D gap ($0.055$) and weak complement degradation ($0.090$); adding budget control, recovery, and grounding progressively yields the full GCE result: $0.748$ Macro-F1, $0.004$ C-D gap, and $0.412$ complement degradation.

\begin{table}[H]
\centering
\caption{Component ablation on BRACS. Each row adds one module to the pipeline.}
\label{tab:ablation}
\scriptsize
\setlength{\tabcolsep}{2pt}
\renewcommand{\arraystretch}{0.86}
\resizebox{\columnwidth}{!}{%
\begin{tabular}{lcccc}
\toprule
Variant & Macro-F1 & C-D Gap$\downarrow$ & Compl.~Degr.$\uparrow$ & Evid.~Suff.$\uparrow$ \\
\midrule
Backbone only        & 0.699$\pm$0.047 & ---   & ---  & --- \\
Naive selector       & 0.708$\pm$0.045 & 0.055$\pm$0.014  & 0.090$\pm$0.023 & 0.381$\pm$0.061 \\
+\,Budget control    & 0.738$\pm$0.036 & 0.011$\pm$0.004  & 0.318$\pm$0.058 & 0.596$\pm$0.094 \\
+\,Discrete recovery & 0.744$\pm$0.033 & 0.006$\pm$0.002  & 0.377$\pm$0.071 & 0.630$\pm$0.102 \\
+\,Semantic grounding  & 0.746$\pm$0.035 & 0.005$\pm$0.002  & 0.403$\pm$0.082 & 0.649$\pm$0.111 \\
Full GCE             & \best{0.748}$\pm$0.032 & \best{0.004}$\pm$0.001 & \best{0.412}$\pm$0.086 & \best{0.659}$\pm$0.108 \\
\bottomrule
\end{tabular}}
\end{table}

The ablation also separates the three S/N/R mechanisms.
Budget control gives the first large improvement: the gap falls from $0.055$ to $0.011$ and complement degradation rises from $0.090$ to $0.318$, consistent with the claim that sparse evidence must carry more of the decision.
Discrete recovery contributes mainly to Recoverability ($0.011\to0.006$), while semantic grounding contributes mainly to Necessity ($0.377\to0.403$), showing that anchor coverage is not merely a parameter increase but changes which patches are treated as decision-critical.

\subsection{Qualitative Evidence Behavior}
\label{sec:qualitative_main}
Figure~\ref{fig:attention_zero} places the learned evidence mask next to the host attention map.
The qualitative pattern matches the intervention diagnostics: attention highlights broad high-score regions, while GCE recovers a compact subset that remains spatially coherent after thresholding and repair.
This figure is included in the main text because it clarifies what the S/N/R metrics measure at the slide level.

\begin{figure*}[t]
\centering
\includegraphics[width=0.90\textwidth]{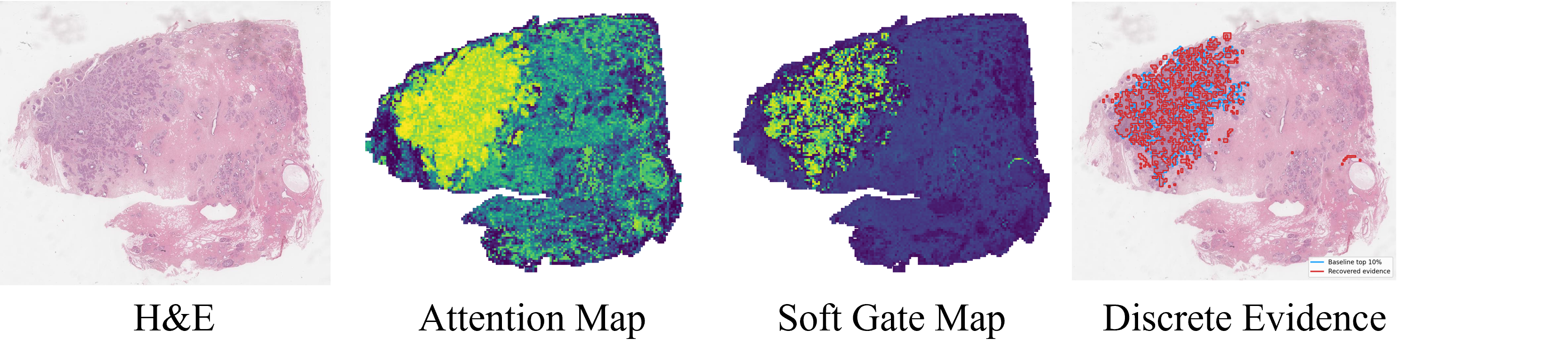}
\caption{\textbf{Main qualitative evidence example.} The host attention map and the recovered GCE evidence subset are shown on the same slide. GCE selects a compact, recoverable evidence set rather than simply visualizing the original attention ranking.}
\label{fig:attention_zero}
\end{figure*}

\subsection{Intervention and Budget-Matched Evidence}
\label{sec:causal_interventions}
\label{sec:same_budget_main}
Table~\ref{tab:causal} reports the direct intervention diagnostic for Sufficiency and Necessity.
Keeping attention top-$k$ changes the full-bag score by \downtri{0.078}, whereas keeping GCE evidence changes it by only \uptri{0.004}; removing attention changes the score by \downtri{0.033}, but removing GCE evidence causes a \downtri{0.176} drop.
Table~\ref{tab:same_budget_main} controls for evidence size by forcing all subset rules to select approximately $5\%$ of each BRACS bag.
At the same budget, attention top-$k$ reaches $0.597$ Macro-F1 and $0.151$ complement degradation, whereas discrete GCE reaches $0.748$ and $0.412$; the prediction gap falls from $0.029$ to $0.004$.
The gains therefore do not come from selecting more tissue, but from recovering a subset that is sufficient, necessary, and discrete-faithful.

\begin{table*}[t]
\centering
\begin{minipage}[t]{0.48\textwidth}
\centering
\caption{\textbf{Intervention diagnostic.} Values are changes relative to the full-bag score averaged over the nine-dataset benchmark.}
\label{tab:causal}
\resizebox{\linewidth}{!}{%
\begin{tabular}{lcc}
\toprule
Subset rule & Keep only $\Delta$ & Remove $\Delta$ \\
\midrule
Random-$k$        & \downtri{0.180} & \uptri{0.001} \\
Attention top-$k$ & \downtri{0.078} & \downtri{0.033} \\
Saliency top-$k$  & \downtri{0.063} & \downtri{0.042} \\
\midrule
GCE evidence      & \best{\uptri{0.004}} & \best{\downtri{0.176}} \\
\bottomrule
\end{tabular}}
\end{minipage}
\hfill
\begin{minipage}[t]{0.48\textwidth}
\centering
\caption{\textbf{Same-budget evidence comparison on BRACS.} All subset rules select approximately $5\%$ of patches.}
\label{tab:same_budget_main}
\resizebox{\linewidth}{!}{%
\begin{tabular}{lccc}
\toprule
Subset rule & Macro-F1$\uparrow$ & Gap$\downarrow$ & Compl.~Degr.$\uparrow$ \\
\midrule
Random-$k$        & 0.431$\pm$0.054 & 0.043$\pm$0.012 & 0.048$\pm$0.015 \\
Attention top-$k$ & 0.597$\pm$0.045 & 0.029$\pm$0.008 & 0.151$\pm$0.036 \\
Gradient top-$k$  & 0.613$\pm$0.041 & 0.025$\pm$0.006 & 0.166$\pm$0.031 \\
Occlusion top-$k$ & 0.628$\pm$0.038 & 0.022$\pm$0.005 & 0.196$\pm$0.039 \\
\midrule
GCE discrete      & \best{0.748}$\pm$0.032 & \best{0.004}$\pm$0.001 & \best{0.412}$\pm$0.086 \\
\bottomrule
\end{tabular}}
\end{minipage}
\end{table*}

\section{Conclusion}
\label{sec:conclusion}
This work formalizes the gap between classification accuracy and evidence quality in MIL through three criteria---Sufficiency, Necessity, and Recoverability---and shows that existing attention-based methods fail all three.
GCE-MIL addresses these failures with semantic anchor grounding, noisy-OR coverage with closed-form marginals, and threshold-plus-repair discrete recovery.
Across $81$ backbone-dataset configurations, GCE-MIL improves both prediction and evidence quality, and optional tile prefiltering enables up to $5\times$ faster end-to-end inference at $0.989\times$ relative utility.

\clearpage
\nocite{brancati2022bracs,bulten2022artificial,weinstein2013cancer,ehteshami2017diagnostic,bandi2018detection,campanella2019clinical,pantanowitz2011review,verghese2023computational,song2023artificial,dietterich1997solving,maron1997framework,wang2019weakly,chen2021annotation,li2019attention,xu2014weakly,kraus2016classifying,tourniaire2023ms,yao2020whole,chen2021whole,cheng2021computational,kang2023benchmarking,chen2024towards,lu2023visual,kapse2025gecko,he2016deep,russakovsky2015imagenet,oquab2024dinov2,he2020momentum,chen2020simple,caron2021emerging,van2008visualizing,miller1969froc,bunch1978free,kingma2014adam,srivastava2014dropout,hinton2015distilling,tarvainen2017mean,martins2016softmax,tsallis1988possible,cover1999elements,shannon1948mathematical,qu2024rethinking,shao2025multiple,guo2023higt,buzzard2024paths,dong2025fast,yang2024mambamil,du2025rethinking,xiong2021nystromformer}
\bibliographystyle{plainnat}
\bibliography{references_qill0506}

\clearpage
\appendix
\makeatletter
\renewcommand{\section}{%
  \@startsection{section}{1}{\z@}%
                {-1.1ex \@plus -0.2ex \@minus -0.1ex}%
                {0.6ex \@plus 0.1ex}%
                {\large\bf\raggedright}%
}
\renewcommand{\subsection}{%
  \@startsection{subsection}{2}{\z@}%
                {-0.9ex \@plus -0.2ex \@minus -0.1ex}%
                {0.35ex \@plus 0.1ex}%
                {\normalsize\bf\raggedright}%
}
\makeatother
\setlength{\parskip}{0pt}
\setlength{\textfloatsep}{6pt plus 1pt minus 1pt}
\setlength{\floatsep}{6pt plus 1pt minus 1pt}
\setlength{\intextsep}{6pt plus 1pt minus 1pt}

\section{Overview}
\label{app:overview}
This appendix provides the theoretical, experimental, and implementation details that complement the main paper.
We first state the scope of the theoretical claims and derive the coverage-level properties of noisy-OR recovery in Appendix~\ref{app:theory}.
We then give the detailed discrete recovery algorithm in Appendix~\ref{app:recovery_algorithm}, followed by additional evidence diagnostics and baseline comparisons in Appendix~\ref{app:evidence_diagnostics}.
Appendix~\ref{app:survival_results} reports the survival prediction results and explains how the same coverage view extends to Cox risk modeling.
Appendix~\ref{app:additional_experiments} presents CAMELYON-16 localization and significance tests, while Appendix~\ref{app:ablation_details} expands the ablation and sensitivity analyses.
Appendix~\ref{app:backbone_generalization} reports multi-backbone and multi-encoder generalization, and Appendix~\ref{app:failure} summarizes the failure-case audit.
Finally, Appendix~\ref{app:implementation} provides hardware, training, anchor-prompt, and computational-cost details; Appendix~\ref{app:datasets} describes datasets and preprocessing; Appendix~\ref{app:figures} collects additional visualizations; and Appendix~\ref{app:limitations} discusses limitations and broader implications.

\section{Theoretical Foundation and Notes}
\label{app:theory}

\noindent\fbox{%
\begin{minipage}{0.96\linewidth}
\textbf{Scope of theory.}
The results in this appendix are coverage-level statements about the noisy-OR utility, not claims of global classifier optimality.
Submodularity justifies the marginal-gain recovery objective under fixed anchor responses, while the conditional interventional bound explains when uncovered anchor mass can upper-bound omitted representation contribution.
Classifier-level faithfulness is therefore evaluated empirically through keep/remove interventions and, separately, through CAMELYON-16 localization against pixel-level annotations.
\end{minipage}}
\vspace{0.6em}

\subsection{Coverage as an Interventional Proxy}
Noisy-OR is used as a differentiable proxy for intervention search rather than as the definition of evidence faithfulness.
The target quantity is the intervention variation $V(S)=\|f(X)-f(X_S)\|$, which is expensive to optimize directly over discrete subsets.
The following proposition states the assumption under which uncovered anchor mass controls this variation.

\begin{proposition}[Coverage residual bounds representation variation]
\label{prop:interventional_bound}
Let a MIL predictor decompose as $f(X)=q(g(X))$, where $q$ is $L_q$-Lipschitz and the bag representation is additive, $g(X)=\sum_i w_i h_i$ with $w_i\ge 0$.
Assume the anchor bank is $\eta$-complete for the omitted patch contribution: for any subset $S$,
\[
\left\|\sum_{i\notin S} w_i h_i\right\|
\leq \eta \sum_m \alpha_m \prod_{i\in S}(1-r_{im}),
\qquad \alpha_m\ge 0 .
\]
Then
\[
\|f(X)-f(X_S)\|
\leq L_q\eta \sum_m \alpha_m\bigl(1-v_m(\mathbf{1}_S)\bigr).
\]
\end{proposition}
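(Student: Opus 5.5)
The bound follows by chaining the Lipschitz property of $q$ with the additive structure of $g$, and then applying the $\eta$-completeness assumption and the definition of noisy-OR coverage at a binary gate.

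\textbf{Step 1: reduce to representations.} We use the decomposition $f=q\circ g$. The intervened bag $X_S$ is interpreted as the additive representation restricted to the kept patches, $g(X_S)=\sum_{i\in S} w_i h_i$, with the same weights $w_i$ as in the full bag. This is the ``kept-mass'' reading of the additive assumption: no renormalization of the weights over $S$, as for attention-bias injection, where $\log\pi_i=-\infty$ on dropped patches. Lipschitzness of $q$ then gives
\[
\|f(X)-f(X_S)\|\le L_q\,\|g(X)-g(X_S)\|.
\]

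\textbf{Step 2: identify the residual.} By additivity,
\[
g(X)-g(X_S)=\sum_{i\notin S} w_i h_i .
\]
The $\eta$-completeness hypothesis bounds its norm by $\eta\sum_m\alpha_m\prod_{i\in S}(1-r_{im})$.

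\textbf{Step 3: rewrite in coverage form.} Evaluate Eq.~\eqref{eq:noisy_or} at the indicator gate $\pi=\mathbf{1}_S$. Factors with $i\notin S$ equal $1$, so
\[
v_m(\mathbf{1}_S)=1-\prod_{i\in S}(1-r_{im}),
\qquad\text{hence}\qquad
\prod_{i\in S}(1-r_{im})=1-v_m(\mathbf{1}_S).
\]
Substituting this identity into the bound from Step 2, and multiplying by $L_q$ (nonnegativity of $L_q$, $\eta$, and $\alpha_m$ keeps the inequality direction), yields
\[
\|f(X)-f(X_S)\|\le L_q\eta\sum_m\alpha_m\bigl(1-v_m(\mathbf{1}_S)\bigr).
\]

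\textbf{Main obstacle.} The only delicate point is Step 1, namely fixing the semantics of $X_S$ so that $g(X)-g(X_S)$ is exactly the omitted sum. If a backbone renormalizes attention over $S$ (softmax over the kept set), the difference gains an extra rescaling term $(1/Z_S-1)\sum_{i\in S}w_ih_i$. In that case the argument holds only under the stated additive model with fixed $w_i$, or else one must add a separate term controlling $1-\sum_{i\in S}w_i$. The plan is to state explicitly that the proposition uses fixed weights, consistent with its hypothesis $g(X)=\sum_i w_ih_i$, and to note the renormalized case as outside its scope. Everything after Step 1 is a direct substitution.
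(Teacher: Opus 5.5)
Your proposal is correct and matches the paper's proof step for step: the Lipschitz reduction to $\|g(X)-g(X_S)\|$, the additive identity $g(X)-g(X_S)=\sum_{i\notin S}w_ih_i$, the $\eta$-completeness hypothesis, and the noisy-OR identity $1-v_m(\mathbf{1}_S)=\prod_{i\in S}(1-r_{im})$. Your remark that $X_S$ must keep the unrenormalized weights $w_i$ makes explicit an assumption the paper uses silently when it writes that additive difference.
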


\begin{proof}
By Lipschitz continuity,
$\|f(X)-f(X_S)\|\leq L_q\|g(X)-g(X_S)\|$.
For the additive representation, $g(X)-g(X_S)=\sum_{i\notin S}w_i h_i$.
Applying the anchor-completeness assumption gives the stated bound, and $1-v_m(\mathbf{1}_S)=\prod_{i\in S}(1-r_{im})$ by the noisy-OR definition.
\end{proof}

Proposition~\ref{prop:interventional_bound} is intentionally conditional.
It does not claim that anchors perfectly explain every backbone; instead, it states what the noisy-OR objective is approximating when anchors cover the residual directions relevant to the host predictor.
Direct keep-only and remove interventions are therefore still reported: the bound motivates the proxy, while the interventions test whether the proxy actually tracks prediction changes.

\subsection{Independence of S/N/R Criteria}
Sufficiency, Necessity, and Recoverability are separate desiderata rather than three names for the same intervention score.
The following proposition formalizes why all three diagnostics are reported.

\begin{proposition}[S/N/R are logically independent]
\label{prop:snr_independence}
For a fixed model class, none of Sufficiency, Necessity, and Recoverability implies either of the other two in general.
\end{proposition}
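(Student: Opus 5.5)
The plan is to prove independence by explicit counterexamples. For each ordered pair (A, B) of distinct criteria, we exhibit a predictor $f$, a bag $X$, a subset $S$ and, where needed, a selector $\pi$ such that A holds and B fails. That requires six implications to be refuted. Five concrete instances of the model class cover them: a linear additive MIL predictor $f(X)=\langle u,\sum_{i} h_i\rangle$, which covers mean or sum pooling, together with a max-pooling predictor $f(X)=\max_i \langle u,h_i\rangle$. Throughout we fix thresholds with $\delta_s<\delta_n$, and we take $\delta_r$ small and $\tau=1/2$.

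\textbf{Sufficiency versus Necessity.} Use max pooling on a bag with two identical high-scoring patches, $\langle u,h_1\rangle=\langle u,h_2\rangle=1$, and all other patches at $0$. The set $S=\{1\}$ is $0$-sufficient, since $f(X_S)=f(X)$. It is not $\delta_n$-necessary for any $\delta_n>0$, because removing patch $1$ leaves patch $2$ and $f(X_{\neg S})=f(X)$. This is exactly the redundancy or non-uniqueness phenomenon behind Table~\ref{tab:minimal_subset_main}.

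\textbf{Necessity versus Sufficiency.} Use the sum predictor with two patches of score $1$ each, so $f(X)=2$. The set $S=\{1\}$ gives $f(X_{\neg S})=1$ and $f(X_S)=1$. It is therefore $1$-necessary but not $\delta_s$-sufficient for $\delta_s<1$.

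\textbf{Recoverability versus Sufficiency and Necessity.} Take $\pi$ binary, with $\pi_i\in\{0,1\}$. Then $X_\pi$ and $X_{S(\pi)}$ coincide up to zero features. For sum pooling this gives $f(X_\pi)=f(X_{S(\pi)})$, so $\pi$ is $0$-recoverable for any support. Choosing the support as the insufficient set from the previous paragraph, or the unnecessary set from the first one, shows that Recoverability implies neither criterion.

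\textbf{Sufficiency and Necessity versus Recoverability.} We need a selector whose thresholded set $S(\pi)$ is sufficient and necessary while $X_\pi$ is far from $X_{S(\pi)}$. Under sum pooling, take one patch with score $1$ and many ($K$) patches with score $0$ but a nonzero feature component along $u$. The cleanest version uses a small positive score $\epsilon$ on the $K$ patches, so that $f(X)=1+K\epsilon$.

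Set $\pi_1=1$ and $\pi_i=0.49$ on the other patches, so that $S(\pi)=\{1\}$. Then $f(X_\pi)-f(X_{S(\pi)})=0.49K\epsilon$, which violates $\delta_r$ once $K\epsilon$ is large. To keep $S(\pi)$ sufficient and necessary at the same time, calibrate with a bounded nonlinearity $q$. One option is $f=q(\text{sum})$ with $q$ a clipped ramp, for instance $q(t)=\min(1,t)$. With this choice the full bag and $\{1\}$ both give $1$, so the set is sufficient. The complement gives $\min(1,K\epsilon)$, which we keep small by choosing $K\epsilon<1$, so the set is also necessary. Finally, $X_\pi$ gives $\min(1, 1+0.49K\epsilon)=1$, which is unfortunately equal to the others.

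This last case is the main obstacle. Sufficiency requires the selected set to dominate, yet the below-threshold mass must still move $f(X_\pi)$. The fix is to exploit non-monotone pooling, as with mean pooling, where down-weighting features changes the normalization. Concretely, take $f$ to be the mean of $\langle u,h_i\rangle$ over patches with nonzero features, so that soft gates shrink features without removing them from the denominator. Under sufficiency the subset mean matches the full mean, and necessity holds because the complement mean is low. Meanwhile $X_\pi$ has scaled-down features still counted in the denominator, so its mean drops. I would verify this numerically with three or four patches. If it works, this closes the sixth implication. Collecting all six counterexamples then proves Proposition~\ref{prop:snr_independence}.
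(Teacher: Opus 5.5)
Your first three cases are correct and mirror the paper's sketch with explicit numbers. You use redundant regions for Sufficiency $\not\Rightarrow$ Necessity. For Necessity $\not\Rightarrow$ Sufficiency you split the score between the subset and its complement, in place of the paper's context interaction. For Recoverability $\not\Rightarrow$ the other two you use an exactly binary gate on an uninformative support. In the Recoverability case the unnecessary support comes from the max-pooling bag, so you should check $0$-recoverability there too; it holds because zeroed patches score $0\le 1$.

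The gap is in the last step. Sufficiency $\not\Rightarrow$ Recoverability and Necessity $\not\Rightarrow$ Recoverability are left conditional on ``if it works,'' and the mechanism you give for the mean-pooling fix cannot hold. Write $m_X,m_S,m_{\neg S}$ for the three means. Then $N m_X=|S|m_S+(N-|S|)m_{\neg S}$ gives $m_{\neg S}-m_X=\frac{|S|}{N-|S|}(m_X-m_S)$. So if the subset mean matches the full mean, the complement mean matches too, and necessity fails. Under your convention $\delta_s<\delta_n$, the construction can survive only when $|S|>N/2$, not because ``the complement mean is low.''

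The obstacle is self-inflicted. You put the whole continuous--discrete discrepancy on the below-threshold gates, whose mass sufficiency caps. But above-threshold gates strictly below $1$ also create a gap, and Sufficiency and Necessity do not constrain them at all, since they are evaluated on hard subsets. Take sum pooling with one patch of score $c$, all other patches of score $0$, and $\pi_1=0.51$, $\pi_i=0$ otherwise. Then $S(\pi)=\{1\}$ is $0$-sufficient and $c$-necessary, while $|f(X_\pi)-f(X_{S(\pi)})|=0.49c>\delta_r$ once $c>\delta_r/0.49$. Taking also $c\ge\delta_n$ handles arbitrary thresholds. This is exactly the paper's remark that sufficient or necessary subsets can be generated by gates sitting near the threshold.

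Your unclipped sum example already works as well. It needs only $\delta_r/0.49<K\epsilon\le\delta_s$, which is feasible whenever $\delta_r<0.49\,\delta_s$. ``Large'' only had to mean large relative to $\delta_r$, so the clipping that erased the gap was never needed. Finally, the two remaining non-implications need only separate witnesses, not one subset that is both sufficient and necessary.
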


\begin{proof}[Proof sketch]
A selector that returns the whole bag is sufficient but not necessary, because its complement is empty or uninformative.
A selector that returns one of several redundant diagnostic regions can be sufficient but not necessary, because the complement still contains another sufficient region.
A selector can be necessary but not sufficient when the model relies on an interaction between the selected region and contextual tissue outside the subset.
Finally, a continuous gate can be recoverable under thresholding while the resulting discrete subset is neither sufficient nor necessary, for example when the gate is stable but selects background or redundant tissue.
Conversely, sufficient or necessary subsets can be generated by gates with many values near the threshold, making them poorly recoverable even when the intervention property holds.
These constructions are model-relative and do not rely on a particular backbone.
\end{proof}

\subsection{Recoverability Bound for Thresholded Gates}
Recoverability is evaluated by comparing the continuous gated input with its thresholded subset.
Let $F_X(\pi)$ denote the prediction of a fixed trained MIL model on slide $X$ after applying gate vector $\pi$ to the bag, and let $S_\tau(\pi)=\{i:\pi_i\ge\tau\}$.

\begin{proposition}[Gate-margin recoverability bound]
\label{prop:recoverability_bound}
If $F_X$ is $L_X$-Lipschitz in the gate vector under norm $\|\cdot\|$, then
\[
\|F_X(\pi)-F_X(\mathbf{1}_{S_\tau(\pi)})\|
\leq
L_X\|\pi-\mathbf{1}_{S_\tau(\pi)}\|.
\]
In particular, if every gate satisfies $\min_i |\pi_i-\tau|\ge m$ and $\pi_i\in[0,1]$, the bound tightens as annealing drives gates toward $\{0,1\}$ and away from the threshold.
\end{proposition}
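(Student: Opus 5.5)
The first inequality follows directly from the Lipschitz hypothesis applied to the two gate vectors $\pi$ and $\mathbf{1}_{S_\tau(\pi)}$. Both lie in $[0,1]^N$, and $F_X(\mathbf{1}_{S_\tau(\pi)})$ is exactly the prediction on the thresholded subset $X_{S_\tau(\pi)}$, since a gate value of $1$ keeps a patch unchanged and a gate value of $0$ removes it. This identification depends on the injection mode. For feature reweighting, a zero gate gives a zero feature. For attention-logit bias, $\log 0=-\infty$ zeroes the softmax weight. I will state this as the convention defining $F_X$ on binary gates, so that the left-hand side matches the recoverability quantity $|f(X_\pi)-f(X_{S(\pi)})|$ of the $\delta_r$-Recoverability definition. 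Given that convention, the bound is a single line:
\[
\|F_X(\pi)-F_X(\mathbf{1}_{S_\tau(\pi)})\|\le L_X\|\pi-\mathbf{1}_{S_\tau(\pi)}\|.
\]

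For the ``in particular'' clause, I make the right-hand side explicit coordinatewise. If $\pi_i\ge\tau$ then the coordinate error is $1-\pi_i$; if $\pi_i<\tau$ it is $\pi_i$. Under the margin assumption $|\pi_i-\tau|\ge m$, each selected coordinate satisfies $\pi_i\ge\tau+m$, so its error is at most $1-\tau-m$. Each rejected coordinate satisfies $\pi_i\le\tau-m$, so its error is at most $\tau-m$. With the $\ell_\infty$ norm this gives
\[
\|F_X(\pi)-F_X(\mathbf{1}_{S_\tau(\pi)})\|\le L_X\max(1-\tau-m,\ \tau-m).
\]
With the $\ell_1$ norm the bound is $L_X\bigl(|S_\tau|(1-\tau-m)+(N-|S_\tau|)(\tau-m)\bigr)$. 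At $\tau=1/2$ both reduce to multiples of $\tfrac12-m$, and they vanish as $m\to\tfrac12$, which is the regime where every gate sits at $0$ or $1$.

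To connect this with annealing, I will use Eq.~\eqref{eq:selector}. For $\pi_i=\sigma(s_i/T)$ and $\tau=1/2$, the margin is
\[
|\pi_i-\tfrac12|=\bigl|\sigma(s_i/T)-\tfrac12\bigr|.
\]
This is increasing in $|s_i|/T$. So for fixed scores bounded away from zero, $\min_i|s_i|\ge s_0>0$, lowering $T$ enlarges $m$ monotonically and tightens the bound. This is the sense in which ``the bound tightens'': it is monotone improvement, not convergence to zero at the finite final temperature $T=0.4$.

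The main obstacle is not the inequality itself but justifying the Lipschitz hypothesis at binary gates under attention-logit injection. There the map $\pi_i\mapsto\log\pi_i$ is not Lipschitz near $0$. I plan to handle this by noting that the softmax weights under the bias are proportional to $\pi_i e^{z_i}$, i.e.
\[
\alpha_i(\pi)=\frac{\pi_i e^{z_i}}{\sum_j \pi_j e^{z_j}},
\]
which is smooth in $\pi$ as long as the denominator stays bounded below. The empty-set fallback to the top-$1$ patch guarantees a nonzero denominator, so $F_X$ is Lipschitz on the relevant region with a constant depending on $\max_i e^{z_i}/\sum_{j\in S}e^{z_j}$. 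If that fails, I would restrict the claim to the feature-reweighting mode, where $F_X$ is a composition of Lipschitz maps, and treat $L_X$ as an assumption for the other modes.
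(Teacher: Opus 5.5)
Your proof is correct and takes the same route as the paper: the paper also applies the Lipschitz hypothesis to the pair $\pi$ and $\mathbf{1}_{S_\tau(\pi)}$ and says only qualitatively that annealing shrinks $\|\pi-\mathbf{1}_{S_\tau(\pi)}\|$, so your margin bounds (e.g.\ $L_X(\tfrac12-m)$ in $\ell_\infty$ at $\tau=\tfrac12$) and the monotonicity of $|\sigma(s_i/T)-\tfrac12|$ in $|s_i|/T$ make that remark precise. One caution on a side claim the proposition does not need: under feature reweighting a zero gate gives a zero token that still enters softmax or self-attention normalizations, so $F_X(\mathbf{1}_S)$ generally differs from $f(X_S)$, and imposing that equality by convention would clash with your composition-of-Lipschitz-maps justification for that mode.
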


\begin{proof}
The inequality follows directly from Lipschitz continuity of $F_X$ with respect to the gate vector.
Temperature annealing affects the right-hand side by reducing the distance between the continuous gate and the thresholded binary gate.
The proposition therefore explains the role of annealing in reducing the C-D gap, but it does not assert that the recovered subset is clinically causal evidence.
\end{proof}

\subsection{Why Noisy-OR}
Noisy-OR is a natural coverage model when multiple independent evidence sources can activate the same latent concept.
For WSI evidence, this matches the setting where several patches may each support gland formation, necrosis, or inflammatory response.
Unlike mean pooling, noisy-OR saturates when a concept is already covered; unlike attention, it does not force concepts to compete through a simplex.
This saturation behavior is important for Necessity: once an anchor is covered, adding another redundant patch has lower marginal gain, so greedy recovery is biased toward complementary evidence.

For a monotone submodular utility $U$, the total curvature is
\[
\kappa
=
1-\min_{i:U(\{i\})>0}
\frac{U(V)-U(V\setminus\{i\})}{U(\{i\})},
\qquad 0\le\kappa\le 1 .
\]
For noisy-OR coverage, this curvature is determined by anchor-response redundancy: patches whose anchor responses are already covered by other selected patches have lower late-stage marginal gain.
Under the same cardinality-limited coverage setting used for greedy repair, the standard greedy guarantee can therefore be sharpened to the curvature-aware form
\[
U(S_{\mathrm{greedy}})
\ge
\frac{1-e^{-\kappa}}{\kappa} U(S^\star),
\]
with the convention that the factor equals $1$ when $\kappa=0$.
This does not change the coverage-level scope of the theorem, but it clarifies why greedy repair is most effective when the selected patches cover complementary anchors rather than redundant copies of the same morphology.

\subsection{Risk-Pathway Interpretation for Survival}
For survival tasks, GCE-MIL uses the same selector and coverage utility but trains the slide output as a scalar Cox risk score.
The noisy-OR term has a compatible interpretation if each anchor is viewed as a latent risk pathway.
Let $z_m(X)$ denote whether pathway $m$ is present somewhere in the slide, and let $r_{im}$ be the probability that patch $i$ expresses that pathway.
Under conditional independence of pathway evidence across selected patches,
\[
P(z_m=1\mid S)=1-\prod_{i\in S}(1-r_{im})=v_m(\mathbf{1}_S).
\]
A Cox-style log-risk model can then be written as
\[
h(X_S)=\sum_m \beta_m v_m(\mathbf{1}_S),\qquad
\lambda(t\mid X_S)=\lambda_0(t)\exp(h(X_S)).
\]
This mapping does not turn the method into a full competing-risks model; it only explains why the same saturating coverage form is reasonable for scalar risk prediction.
Once a high-risk morphology is already covered, additional patches showing the same morphology have diminishing marginal effect on the log-risk, whereas a patch activating a different risk pathway can still change the risk score.

\subsection{Sufficiency-Necessity Duality for MIL Classifiers}
Sufficiency and Necessity are complementary but not equivalent.
A subset can preserve the prediction when kept while still being redundant with other evidence in the complement.
Conversely, a subset can be necessary when removed but insufficient on its own if it interacts with context.
GCE-MIL therefore reports both keep-only and remove interventions.

\subsection{Recoverability under Temperature Annealing}
Recoverability depends on the gap between the continuous selector and the thresholded subset.
Temperature annealing encourages $\pi_i$ values to move toward $0$ or $1$, reducing ambiguity near the threshold.
Greedy repair then corrects residual coverage failures after thresholding.
The empirical counterpart is the C-D gap: selector and grounding ablations that leave many gates near the threshold produce larger gaps, while the final model reduces the gap to $0.004$--$0.005$ in the main diagnostics.

\FloatBarrier
\section{Detailed Recovery Algorithm}
\label{app:recovery_algorithm}
Algorithm~\ref{alg:recovery} gives the discrete recovery procedure used at inference.
It first thresholds the continuous gate $\pi$ and then greedily repairs coverage failures using the exact noisy-OR marginal.
The main method section therefore remains focused on the S/N/R design while the full implementation detail is provided here.
The key implementation choice is that repair ranks candidates by the exact marginal of the same noisy-OR utility used during training.
Thus the discrete subset is not a separate heuristic explanation; it is the hard recovery of the trained continuous evidence objective.

\begin{algorithm}[H]
\caption{GCE-MIL Discrete Recovery}
\label{alg:recovery}
\KwIn{continuous gates $\pi$, anchor responses $r$, coverage target $c=0.95$}
\KwOut{discrete subset $S^*$}
$S \leftarrow \{i:\pi_i>0.5\}$\;
\If{$S=\emptyset$}{
  $S \leftarrow \{\arg\max_i \pi_i\}$\;
}
\While{$\min_m v_m(\mathbf{1}_S)<c$}{
  $i^\star \leftarrow \arg\max_{i\notin S}\partial U_c(\mathbf{1}_S)/\partial \pi_i$\;
  $S \leftarrow S\cup\{i^\star\}$\;
}
\Return{$S^*=S$}\;
\end{algorithm}

\section{Evidence Diagnostics and Baseline Comparisons}
\label{app:evidence_diagnostics}
Table~\ref{tab:metric_defs} defines the operational diagnostics used throughout the evidence experiments.
Let $\Phi(\cdot)$ denote the intervention output compared within a slide (class probability for classification, normalized risk for survival), and let $M(\cdot)$ denote the aggregate task metric (Macro-F1 or C-index) evaluated after replacing the full bag by the specified subset.

\begin{table}[!htbp]
\centering
\caption{\textbf{Operational S/N/R diagnostics.} The C-D gap is an optimization-aligned recoverability diagnostic; keep/remove interventions and CAMELYON localization provide additional model-relative and spatial checks.}
\label{tab:metric_defs}
\small
\begin{tabular}{p{0.24\linewidth}p{0.43\linewidth}p{0.22\linewidth}}
\toprule
Diagnostic & Definition & Evidence role \\
\midrule
Keep-only drop & $M(X)-M(X_S)$ after retaining only $S$ & Sufficiency; lower is better \\
Evidence sufficiency & $M(X_S)$, the task metric of the evidence-only bag & Sufficiency; higher is better \\
Complement degradation & $M(X)-M(X_{\neg S})$ after removing $S$ & Necessity; higher is better \\
C-D gap & $\mathbb{E}_{X}\!\left[|\Phi(X_{\pi})-\Phi(X_{S(\pi)})|\right]$ & Recoverability; lower is better \\
\bottomrule
\end{tabular}
\end{table}

Table~\ref{tab:diagnostics_bracs} provides a compact BRACS diagnostic for the S/N/R metrics.
This appendix groups the evidence diagnostics with budget-matched and post-hoc comparisons, so each S/N/R claim is accompanied by the table used to support it.
On BRACS, DTFD-MIL+GCE reaches $0.484$ complement degradation and HDMIL+GCE has the highest evidence sufficiency ($0.685$), while the C-D gap remains below $0.012$ for every host backbone.
This compact table gives single-dataset intuition before the larger three-dataset diagnostic table: different backbones trade off Necessity and Sufficiency, but all recovered subsets remain close to their continuous selectors.

\begin{table}[!htbp]
\centering
\caption{\textbf{BRACS evidence diagnostics.} Single-column S/N/R summary for GCE-wrapped backbones.}
\label{tab:diagnostics_bracs}
\resizebox{\columnwidth}{!}{%
\begin{tabular}{lccc}
\toprule
Method & C-D Gap$\downarrow$ & Compl.~Degr.$\uparrow$ & Evid.~Suff.$\uparrow$ \\
\midrule
ABMIL\gce      & 0.011$\pm$0.004 & 0.230$\pm$0.042 & 0.430$\pm$0.068 \\
CLAM-SB\gce    & 0.006$\pm$0.003 & 0.332$\pm$0.051 & 0.564$\pm$0.077 \\
TransMIL\gce   & 0.006$\pm$0.002 & 0.221$\pm$0.047 & 0.483$\pm$0.084 \\
DSMIL\gce      & 0.008$\pm$0.003 & 0.243$\pm$0.046 & 0.468$\pm$0.082 \\
DTFD-MIL\gce   & 0.005$\pm$0.002 & \best{0.484}$\pm$0.082 & 0.645$\pm$0.091 \\
IBMIL\gce      & 0.005$\pm$0.002 & \second{0.450}$\pm$0.076 & \second{0.660}$\pm$0.088 \\
MHIM-MIL\gce   & \best{0.004}$\pm$0.001 & 0.236$\pm$0.044 & 0.475$\pm$0.081 \\
CAMIL\gce      & 0.006$\pm$0.002 & 0.346$\pm$0.065 & 0.570$\pm$0.095 \\
HDMIL\gce      & \best{0.004}$\pm$0.002 & 0.441$\pm$0.071 & \best{0.685}$\pm$0.094 \\
\bottomrule
\end{tabular}}
\end{table}

Table~\ref{tab:diagnostics} reports the three S/N/R metrics for each backbone after GCE training and discrete recovery.
The C-D gap is an optimization-aligned recoverability diagnostic: HDMIL achieves $0.004$ on BRACS and $0.002$ on NSCLC, indicating that the continuous selector and recovered discrete subset remain close under the reported recovery protocol.
Complement degradation (Necessity) reaches $0.484$ for DTFD-MIL on BRACS, meaning removing the evidence subset drops performance by nearly half---the selected patches are genuinely informative.
Evidence sufficiency reaches $0.685$ for HDMIL on BRACS, indicating that the evidence subset alone retains most of the full-bag prediction.
The pattern is consistent across datasets: backbones with stronger attention mechanisms (DTFD-MIL, IBMIL, HDMIL) achieve higher Necessity and Sufficiency scores, suggesting that GCE amplifies the backbone's existing ability to identify informative patches.
The important point is not that one host backbone dominates every metric.
Instead, every host receives an explicit evidence interface, allowing S/N/R diagnostics to be compared across architectures that otherwise expose different attention or instance scores.

\begin{table}[H]
\centering
\caption{GCE evidence diagnostics per backbone. C-D Gap: continuous--discrete prediction gap (Recoverability); Complement Degradation: performance drop when evidence is removed (Necessity); Evidence Sufficiency: performance retained by evidence alone (Sufficiency).}
\label{tab:diagnostics}
\resizebox{\textwidth}{!}{%
\begin{tabular}{l ccc ccc ccc}
\toprule
& \multicolumn{3}{c}{BRACS} & \multicolumn{3}{c}{NSCLC} & \multicolumn{3}{c}{PANDA} \\
\cmidrule(lr){2-4}\cmidrule(lr){5-7}\cmidrule(lr){8-10}
Method & C-D Gap & Compl.~Degr. & Evid.~Suff. & C-D Gap & Compl.~Degr. & Evid.~Suff. & C-D Gap & Compl.~Degr. & Evid.~Suff. \\
\midrule
ABMIL & 0.011$\pm$0.004 & 0.230$\pm$0.042 & 0.430$\pm$0.068 & 0.008$\pm$0.003 & 0.280$\pm$0.054 & 0.500$\pm$0.075 & 0.012$\pm$0.004 & 0.153$\pm$0.038 & 0.338$\pm$0.059 \\
CLAM-SB & 0.006$\pm$0.003 & 0.332$\pm$0.051 & 0.564$\pm$0.077 & 0.004$\pm$0.001 & 0.313$\pm$0.058 & 0.550$\pm$0.081 & 0.006$\pm$0.002 & 0.263$\pm$0.044 & 0.496$\pm$0.071 \\
TransMIL & 0.006$\pm$0.002 & 0.221$\pm$0.047 & 0.483$\pm$0.084 & 0.002$\pm$0.001 & 0.271$\pm$0.050 & 0.537$\pm$0.092 & 0.006$\pm$0.002 & 0.144$\pm$0.033 & 0.370$\pm$0.065 \\
DSMIL & 0.008$\pm$0.003 & 0.243$\pm$0.046 & 0.468$\pm$0.082 & 0.004$\pm$0.001 & 0.310$\pm$0.049 & 0.559$\pm$0.088 & 0.008$\pm$0.003 & 0.150$\pm$0.031 & 0.370$\pm$0.058 \\
DTFD-MIL & 0.005$\pm$0.002 & \best{0.484}$\pm$0.082 & 0.645$\pm$0.091 & 0.003$\pm$0.001 & \best{0.440}$\pm$0.081 & 0.612$\pm$0.097 & 0.005$\pm$0.001 & \best{0.437}$\pm$0.074 & \second{0.603}$\pm$0.086 \\
IBMIL & 0.005$\pm$0.002 & \second{0.450}$\pm$0.076 & \second{0.660}$\pm$0.088 & 0.003$\pm$0.001 & \second{0.431}$\pm$0.075 & \second{0.651}$\pm$0.102 & 0.005$\pm$0.002 & 0.362$\pm$0.058 & 0.575$\pm$0.079 \\
MHIM-MIL & 0.004$\pm$0.001 & 0.236$\pm$0.044 & 0.475$\pm$0.081 & 0.002$\pm$0.001 & 0.273$\pm$0.052 & 0.517$\pm$0.086 & 0.004$\pm$0.001 & 0.180$\pm$0.035 & 0.422$\pm$0.066 \\
CAMIL & 0.006$\pm$0.002 & 0.346$\pm$0.065 & 0.570$\pm$0.095 & 0.002$\pm$0.001 & 0.408$\pm$0.078 & \second{0.651}$\pm$0.114 & 0.005$\pm$0.002 & 0.285$\pm$0.051 & 0.507$\pm$0.082 \\
HDMIL & 0.004$\pm$0.002 & 0.441$\pm$0.071 & \best{0.685}$\pm$0.094 & 0.002$\pm$0.001 & 0.418$\pm$0.074 & \best{0.666}$\pm$0.110 & 0.003$\pm$0.001 & \second{0.368}$\pm$0.062 & \best{0.611}$\pm$0.095 \\
\bottomrule
\end{tabular}}
\end{table}

\subsection{Same-Budget Comparison}
\label{app:same_budget}
Table~\ref{tab:same_budget} compares selection methods under a fixed $5\%$ budget on BRACS.
Attention top-$k$ achieves $0.597$ Macro-F1 and $0.151$ complement degradation; discrete GCE reaches $0.748$ and $0.412$.
The prediction gap drops from $0.029$ to $0.004$.
Under a fixed budget, GCE improves all three S/N/R criteria rather than simply selecting more patches.
This is the fairest control for sparse evidence methods because it removes subset size as a confounder.
The remaining gap is therefore attributable to how the subset is selected and recovered, not to using more tissue.

\begin{table}[!htbp]
\centering
\caption{Same-budget comparison on BRACS. All methods select approximately 5\% of each bag, except the soft GCE row which uses the continuous selector before discretization.}
\label{tab:same_budget}
\resizebox{\columnwidth}{!}{%
\begin{tabular}{lcccc}
\toprule
Subset Rule & Macro-F1 & Prediction Gap$\downarrow$ & Compl.~Degr.$\uparrow$ & Evid.~Suff.$\uparrow$ \\
\midrule
Random-$k$          & 0.431$\pm$0.054 & 0.043$\pm$0.012 & 0.048$\pm$0.015 & 0.295$\pm$0.047 \\
Attention top-$k$   & 0.597$\pm$0.045 & 0.029$\pm$0.008 & 0.151$\pm$0.036 & 0.434$\pm$0.069 \\
Gradient top-$k$    & 0.613$\pm$0.041 & 0.025$\pm$0.006 & 0.166$\pm$0.031 & 0.454$\pm$0.074 \\
Occlusion top-$k$   & 0.628$\pm$0.038 & 0.022$\pm$0.005 & 0.196$\pm$0.039 & 0.478$\pm$0.081 \\
\midrule
GCE soft evidence     & \best{0.754}$\pm$0.029 & \best{0.003}$\pm$0.001 & \second{0.405}$\pm$0.078 & \best{0.670}$\pm$0.095 \\
GCE discrete evidence & \second{0.748}$\pm$0.032 & \second{0.004}$\pm$0.001 & \best{0.412}$\pm$0.086 & \second{0.659}$\pm$0.108 \\
\bottomrule
\end{tabular}}
\end{table}

\subsection{Stability and Post-hoc Comparison}
\label{app:stability_posthoc}
GCE evidence is more stable than attention under stochastic perturbation: Jaccard overlap between evidence sets across random seeds rises from $0.329$ (attention) to $0.606$ (GCE), and prediction flips fall from $0.128$ to $0.045$ (Tables~\ref{tab:stability} and \ref{tab:explain}).
Compared with post-hoc attribution methods at matched subset sizes, GCE reaches $0.722$ keep-only Macro-F1 and $0.176$ remove drop, vs.\ $0.662$/$0.048$ for integrated gradients and $0.668$/$0.052$ for occlusion.
The joint training of selection and prediction in GCE produces evidence that is both more faithful and more stable than post-hoc methods.

\begin{table}[!htbp]
\centering
\caption{Evidence stability under stochastic perturbation. Jaccard measures overlap between evidence sets across runs.}
\label{tab:stability}
\resizebox{\columnwidth}{!}{%
\begin{tabular}{lccc}
\toprule
Evidence Rule & Jaccard$\uparrow$ & Prediction Flip$\downarrow$ & C-D Gap$\downarrow$ \\
\midrule
Attention top-$k$       & 0.329$\pm$0.043 & 0.128$\pm$0.015 & 0.029$\pm$0.008 \\
Gradient top-$k$        & 0.357$\pm$0.046 & 0.112$\pm$0.014 & 0.025$\pm$0.006 \\
Occlusion top-$k$       & 0.389$\pm$0.049 & 0.098$\pm$0.011 & 0.022$\pm$0.005 \\
\midrule
GCE soft thresholded    & \best{0.627}$\pm$0.076 & \best{0.041}$\pm$0.004 & \best{0.003}$\pm$0.001 \\
GCE discrete recovered  & \second{0.606}$\pm$0.079 & \second{0.045}$\pm$0.006 & \second{0.005}$\pm$0.002 \\
\bottomrule
\end{tabular}}
\end{table}

Table~\ref{tab:stability} separates two forms of stability.
The Jaccard score measures whether the selected evidence set itself is stable, while prediction flip measures whether stochastic perturbations change the slide-level decision.
GCE improves both: the recovered discrete subset has $0.606$ Jaccard overlap and $0.045$ prediction flip rate, compared with $0.329$ and $0.128$ for attention top-$k$.
This directly supports Recoverability because the discrete subset remains close to the continuous selector across stochastic runs.

\begin{table}[H]
\centering
\caption{Comparison with post-hoc explainability methods averaged across nine datasets. All methods use the same evidence fraction.}
\label{tab:explain}
\resizebox{\textwidth}{!}{%
\begin{tabular}{lcccccc}
\toprule
Method & Keep-only Macro-F1$\uparrow$ & Remove Drop$\uparrow$ & Prediction Gap$\downarrow$ & Compl.~Degr.$\uparrow$ & Evid.~Suff.$\uparrow$ & Runtime \\
\midrule
Attention top-$k$            & 0.640$\pm$0.033 & 0.033$\pm$0.008 & 0.029$\pm$0.008 & 0.107$\pm$0.022 & 0.367$\pm$0.054 & 1.00$\times$ \\
CLAM attention               & 0.645$\pm$0.031 & 0.035$\pm$0.009 & 0.027$\pm$0.007 & 0.112$\pm$0.025 & 0.374$\pm$0.059 & 1.10$\times$ \\
Gradient saliency            & 0.655$\pm$0.029 & 0.042$\pm$0.011 & 0.025$\pm$0.006 & 0.118$\pm$0.027 & 0.386$\pm$0.062 & 1.42$\times$ \\
Integrated gradients         & 0.662$\pm$0.027 & 0.048$\pm$0.013 & 0.023$\pm$0.005 & 0.128$\pm$0.031 & 0.396$\pm$0.065 & 3.80$\times$ \\
Occlusion top-$k$            & 0.668$\pm$0.026 & 0.052$\pm$0.014 & 0.022$\pm$0.005 & 0.139$\pm$0.034 & 0.405$\pm$0.069 & 8.70$\times$ \\
\midrule
GCE discrete                 & \best{0.722}$\pm$0.022 & \best{0.176}$\pm$0.038 & \best{0.005}$\pm$0.001 & \best{0.277}$\pm$0.056 & \best{0.533}$\pm$0.083 & 1.08$\times$ \\
\bottomrule
\end{tabular}}
\end{table}

Table~\ref{tab:explain} shows that post-hoc explanations improve over raw attention but do not close the S/N/R gap.
Occlusion reaches $0.668$ keep-only Macro-F1 and $0.052$ remove drop, whereas GCE reaches $0.722$ and $0.176$ at comparable subset size.
The runtime column also matters: integrated gradients and occlusion require repeated backward or forward passes, while GCE obtains its evidence during the normal model pass.
The result is evidence that is both intervention-faithful and cheaper to recover.

\FloatBarrier
\section{Survival Prediction and Task Generalization}
\label{app:survival_results}
Table~\ref{tab:surv_main} reports the five TCGA survival cohorts evaluated with C-index.

\begin{table}[H]
\centering
\caption{Survival prediction on five TCGA cohorts (C-index, 5-fold cross-validation). Baseline rows report absolute mean$\pm$std; \gce{} rows report the change from the preceding baseline.}
\label{tab:surv_main}
\resizebox{\textwidth}{!}{%
\begin{tabular}{lccccc}
\toprule
Method & KIRC & KIRP & LUAD & STAD & UCEC \\
\midrule
ABMIL & 0.724$\pm$0.031 & 0.779$\pm$0.035 & 0.641$\pm$0.039 & 0.576$\pm$0.028 & 0.696$\pm$0.033 \\
\gce{} $\Delta$ & \uptri{0.000} & \uptri{0.007} & \uptri{0.013} & \uptri{0.018} & \uptri{0.007} \\
\midrule
CLAM-SB & 0.715$\pm$0.028 & 0.758$\pm$0.032 & 0.644$\pm$0.035 & 0.603$\pm$0.025 & 0.746$\pm$0.030 \\
\gce{} $\Delta$ & \uptri{0.008} & \uptri{0.000} & \uptri{0.022} & \uptri{0.000} & \uptri{0.006} \\
\midrule
TransMIL & 0.669$\pm$0.029 & 0.818$\pm$0.034 & 0.633$\pm$0.037 & 0.597$\pm$0.026 & 0.721$\pm$0.032 \\
\gce{} $\Delta$ & \uptri{0.012} & \uptri{0.008} & \uptri{0.015} & \uptri{0.016} & \uptri{0.019} \\
\midrule
DSMIL & 0.667$\pm$0.028 & 0.743$\pm$0.032 & 0.612$\pm$0.035 & 0.588$\pm$0.025 & 0.662$\pm$0.030 \\
\gce{} $\Delta$ & \uptri{0.031} & \uptri{0.023} & \uptri{0.008} & \uptri{0.019} & \uptri{0.037} \\
\midrule
DTFD-MIL & 0.707$\pm$0.027 & 0.777$\pm$0.030 & 0.549$\pm$0.033 & 0.614$\pm$0.024 & 0.706$\pm$0.029 \\
\gce{} $\Delta$ & \uptri{0.007} & \uptri{0.007} & \uptri{0.018} & \uptri{0.008} & \uptri{0.000} \\
\midrule
IBMIL & 0.680$\pm$0.025 & 0.753$\pm$0.029 & 0.625$\pm$0.032 & 0.595$\pm$0.023 & 0.643$\pm$0.027 \\
\gce{} $\Delta$ & \uptri{0.014} & \uptri{0.008} & \uptri{0.030} & \uptri{0.010} & \uptri{0.048} \\
\midrule
MHIM-MIL & 0.733$\pm$0.027 & 0.758$\pm$0.030 & 0.576$\pm$0.033 & 0.622$\pm$0.024 & 0.712$\pm$0.029 \\
\gce{} $\Delta$ & \uptri{0.003} & \uptri{0.021} & \uptri{0.038} & \uptri{0.024} & \uptri{0.011} \\
\midrule
CAMIL & 0.665$\pm$0.029 & 0.732$\pm$0.034 & 0.661$\pm$0.037 & 0.590$\pm$0.026 & 0.702$\pm$0.032 \\
\gce{} $\Delta$ & \uptri{0.006} & \uptri{0.017} & \uptri{0.008} & \downtri{0.003} & \uptri{0.006} \\
\midrule
HDMIL & 0.682$\pm$0.025 & 0.753$\pm$0.029 & 0.592$\pm$0.032 & 0.610$\pm$0.023 & 0.750$\pm$0.027 \\
\gce{} $\Delta$ & \uptri{0.010} & \uptri{0.023} & \uptri{0.044} & \uptri{0.008} & \uptri{0.006} \\
\bottomrule
\end{tabular}}
\end{table}

Survival prediction results are reported here to keep the main text focused on the classification benchmark and intervention evidence.
GCE improves average C-index by $0.014$ across the survival grid, with visible gains on LUAD (HDMIL \uptri{0.044}, MHIM-MIL \uptri{0.038}) and UCEC (DSMIL \uptri{0.037}, IBMIL \uptri{0.048}).
The survival results are consistent with the risk-pathway interpretation in Appendix~\ref{app:theory}: a selected patch subset can activate one or more morphology-linked risk pathways, and the Cox head maps these activations to scalar risk. Evidence-oriented training therefore does not require changing the host backbone and remains useful beyond categorical slide labels.
The gains are not uniform across cohorts, which is expected because survival labels are noisier and censoring reduces supervision strength.
Nevertheless, the table shows that optimizing S/N/R does not harm risk prediction: only one cell has a small negative change (CAMIL on STAD, \downtri{0.003}), while the remaining cells are non-negative.

\FloatBarrier
\section{Localization and Additional Experiments}
\label{app:additional_experiments}
The experiments in this section provide supporting checks that are not part of the main nine-dataset prediction grid. CAMELYON-16 localization uses pixel-level annotations as an external spatial validation of evidence maps, while the paired tests summarize fold-level uncertainty for the reported prediction gains.

\subsection{Localization on CAMELYON-16}
Table~\ref{tab:camelyon_localization} reports localization quality on CAMELYON-16 using pixel-level annotations.
The comparison evaluates whether evidence gates improve the spatial agreement between model evidence maps and annotated tumor regions.
Delta rows are placed immediately below each host backbone to make the effect of adding GCE visible without a separate explanatory table.

\begin{table}[H]
\centering
\caption{Localization metrics on CAMELYON-16 with pixel-level annotations. Higher Dice, specificity, and FROC indicate better spatial agreement between evidence maps and annotated tumor regions. Delta rows report the gain after adding GCE to the same host backbone.}
\label{tab:camelyon_localization}
\resizebox{\textwidth}{!}{%
\begin{tabular}{lccc}
\toprule
Method & Dice$\uparrow$ & Specificity$\uparrow$ & FROC$\uparrow$ \\
\midrule
ABMIL          & 0.421$\pm$0.034 & 0.976$\pm$0.008 & 0.398$\pm$0.029 \\
ABMIL\gce      & 0.547$\pm$0.029 & 0.992$\pm$0.005 & 0.487$\pm$0.024 \\
$\Delta$       & $\uparrow$0.126$\pm$0.018 & $\uparrow$0.016$\pm$0.006 & $\uparrow$0.089$\pm$0.016 \\
\midrule
CLAM-SB        & 0.459$\pm$0.031 & 0.987$\pm$0.006 & 0.426$\pm$0.027 \\
CLAM-SB\gce    & 0.572$\pm$0.026 & 0.994$\pm$0.004 & 0.503$\pm$0.022 \\
$\Delta$       & $\uparrow$0.113$\pm$0.017 & $\uparrow$0.007$\pm$0.004 & $\uparrow$0.077$\pm$0.015 \\
\midrule
TransMIL       & 0.103$\pm$0.024 & 0.999$\pm$0.002 & 0.487$\pm$0.019 \\
TransMIL\gce   & 0.398$\pm$0.034 & 0.999$\pm$0.002 & 0.547$\pm$0.028 \\
$\Delta$       & $\uparrow$0.295$\pm$0.026 & 0.000$\pm$0.002 & $\uparrow$0.060$\pm$0.024 \\
\midrule
DSMIL          & 0.259$\pm$0.032 & 0.863$\pm$0.024 & 0.451$\pm$0.031 \\
DSMIL\gce      & 0.476$\pm$0.027 & 0.954$\pm$0.012 & 0.527$\pm$0.024 \\
$\Delta$       & $\uparrow$0.217$\pm$0.022 & $\uparrow$0.091$\pm$0.018 & $\uparrow$0.076$\pm$0.018 \\
\midrule
DTFD-MIL       & 0.525$\pm$0.029 & 0.999$\pm$0.002 & 0.471$\pm$0.026 \\
DTFD-MIL\gce   & 0.604$\pm$0.024 & 0.999$\pm$0.002 & 0.518$\pm$0.022 \\
$\Delta$       & $\uparrow$0.079$\pm$0.015 & 0.000$\pm$0.002 & $\uparrow$0.047$\pm$0.013 \\
\midrule
CAMIL          & 0.515$\pm$0.030 & 0.980$\pm$0.008 & 0.461$\pm$0.027 \\
CAMIL\gce      & 0.589$\pm$0.025 & 0.994$\pm$0.005 & 0.512$\pm$0.023 \\
$\Delta$       & $\uparrow$0.074$\pm$0.014 & $\uparrow$0.014$\pm$0.006 & $\uparrow$0.051$\pm$0.014 \\
\bottomrule
\end{tabular}}
\end{table}

The localization results provide an external check on the evidence maps.
TransMIL has high specificity and relatively strong baseline FROC ($0.487$) but poor baseline Dice ($0.103$), suggesting that its detections are conservative and spatially sparse; adding GCE raises Dice to $0.398$ and further improves FROC to $0.547$ without reducing specificity.
For stronger localization baselines such as DTFD-MIL and CAMIL, GCE still increases Dice by $0.079$ and $0.074$, respectively.
These gains suggest that anchor-grounded recovery improves spatial alignment rather than simply selecting more tissue.

\subsection{Significance Tests}
Tables~\ref{tab:cls_significance} and \ref{tab:surv_significance} report paired tests for classification and survival benchmarks.

\begin{table}[H]
\centering
\caption{Paired significance tests for classification benchmarks comparing each baseline with its \gce{} variant.}
\label{tab:cls_significance}
\resizebox{\textwidth}{!}{%
\begin{tabular}{lcccc}
\toprule
Method & BRACS & NSCLC & PANDA & BRCA \\
\midrule
ABMIL\gce & $p=0.003^{**}$ & $p=0.018^{*}$ & $p=0.412$ & $p=0.024^{*}$ \\
CLAM-SB\gce & $p=0.041^{*}$ & $p=0.029^{*}$ & $p=0.038^{*}$ & $p=0.011^{*}$ \\
TransMIL\gce & $p=0.012^{*}$ & $p=0.498$ & $p=0.347$ & $p=0.106$ \\
DSMIL\gce & $p=0.027^{*}$ & $p=0.034^{*}$ & $p=0.058$ & $p=0.043^{*}$ \\
DTFD-MIL\gce & $p=0.004^{**}$ & $p=0.523$ & $p=0.029^{*}$ & $p=0.187$ \\
IBMIL\gce & $p=0.078$ & $p=0.046^{*}$ & $p=0.008^{**}$ & $p=0.006^{**}$ \\
MHIM-MIL\gce & $p=0.021^{*}$ & $p=0.092$ & $p=0.014^{*}$ & $p=0.018^{*}$ \\
CAMIL\gce & $p=0.009^{**}$ & $p=0.224$ & $p=0.412$ & $p=0.367$ \\
HDMIL\gce & $p=0.005^{**}$ & $p=0.067$ & $p=0.428$ & $p=0.039^{*}$ \\
\bottomrule
\end{tabular}}
\end{table}

The classification significance table serves as a robustness check rather than the main evidence.
Most BRACS and TCGA-BRCA comparisons pass the paired threshold, while some PANDA and NSCLC cells do not because the baseline variance is higher or the absolute gain is smaller.
This is consistent with the main table: GCE gives positive average gains, but the size of the prediction gain depends on the host backbone and dataset.

\begin{table}[H]
\centering
\caption{Paired significance tests for survival benchmarks comparing each baseline with its \gce{} variant.}
\label{tab:surv_significance}
\resizebox{\textwidth}{!}{%
\begin{tabular}{lccccc}
\toprule
Method & KIRC & KIRP & LUAD & STAD & UCEC \\
\midrule
ABMIL\gce & $p=0.523$ & $p=0.087$ & $p=0.041^{*}$ & $p=0.018^{*}$ & $p=0.094$ \\
CLAM-SB\gce & $p=0.062$ & $p=0.498$ & $p=0.013^{*}$ & $p=0.467$ & $p=0.103$ \\
TransMIL\gce & $p=0.038^{*}$ & $p=0.073$ & $p=0.029^{*}$ & $p=0.024^{*}$ & $p=0.011^{*}$ \\
DSMIL\gce & $p=0.004^{**}$ & $p=0.014^{*}$ & $p=0.087$ & $p=0.019^{*}$ & $p=0.003^{**}$ \\
DTFD-MIL\gce & $p=0.082$ & $p=0.094$ & $p=0.022^{*}$ & $p=0.069$ & $p=0.512$ \\
IBMIL\gce & $p=0.029^{*}$ & $p=0.087$ & $p=0.006^{**}$ & $p=0.041^{*}$ & $p=0.002^{**}$ \\
MHIM-MIL\gce & $p=0.398$ & $p=0.018^{*}$ & $p=0.003^{**}$ & $p=0.011^{*}$ & $p=0.046^{*}$ \\
CAMIL\gce & $p=0.094$ & $p=0.034^{*}$ & $p=0.087$ & $p=0.412$ & $p=0.103$ \\
HDMIL\gce & $p=0.064$ & $p=0.019^{*}$ & $p=0.001^{**}$ & $p=0.067$ & $p=0.108$ \\
\bottomrule
\end{tabular}}
\end{table}

The survival significance table shows the same mixed but directional pattern.
The strongest evidence appears on LUAD, UCEC, and several DSMIL/IBMIL/MHIM-MIL cells, where the C-index gains are larger.
KIRC and KIRP contain more cells above the paired-test threshold, which matches the smaller absolute gains reported in the survival table.
The survival results therefore support the method's portability, while the evidence diagnostics remain the primary evidence-quality analysis.

\FloatBarrier
\section{Ablation Studies and Sensitivity Analysis}
\label{app:ablation_details}
This section groups the sensitivity analyses that explain how the evidence selector behaves under different budgets, selector parameterizations, and grounding variants. The tables are placed next to their interpretation so that each design choice is tied back to one of Sufficiency, Necessity, or Recoverability.

\subsection{Budget Sweep}
Table~\ref{tab:budget} reports the full budget sweep used to choose $\rho=0.05$.

\begin{table}[!htbp]
\centering
\caption{Effect of evidence budget $\rho$ on BRACS. Gains plateau beyond $\rho=0.05$ while the evidence fraction continues to increase.}
\label{tab:budget}
\resizebox{\columnwidth}{!}{%
\begin{tabular}{cccccc}
\toprule
$\rho$ & Macro-F1 & Evid.~Frac. & C-D Gap$\downarrow$ & Compl.~Degr.$\uparrow$ & Evid.~Suff.$\uparrow$ \\
\midrule
0.01 & 0.711$\pm$0.048 & 0.013$\pm$0.002 & 0.014$\pm$0.005 & 0.253$\pm$0.041 & 0.440$\pm$0.064 \\
0.02 & 0.730$\pm$0.042 & 0.024$\pm$0.004 & 0.009$\pm$0.003 & 0.332$\pm$0.067 & 0.555$\pm$0.089 \\
0.05 & \best{0.748}$\pm$0.032 & 0.051$\pm$0.009 & \best{0.004}$\pm$0.001 & 0.412$\pm$0.086 & 0.659$\pm$0.108 \\
0.10 & \best{0.748}$\pm$0.034 & 0.094$\pm$0.015 & \best{0.004}$\pm$0.001 & 0.420$\pm$0.091 & 0.664$\pm$0.112 \\
0.20 & 0.747$\pm$0.035 & 0.182$\pm$0.022 & \best{0.004}$\pm$0.001 & \best{0.422}$\pm$0.089 & \best{0.665}$\pm$0.110 \\
1.00 & 0.745$\pm$0.036 & 1.000$\pm$0.000 & --- & --- & --- \\
\bottomrule
\end{tabular}}
\end{table}

Table~\ref{tab:budget} explains why the main configuration uses $\rho=0.05$.
Increasing the budget from $0.01$ to $0.05$ improves Macro-F1 from $0.711$ to $0.748$ and raises evidence sufficiency from $0.440$ to $0.659$.
Larger budgets produce almost no prediction gain and only marginal increases in complement degradation, while the evidence fraction grows substantially.
Thus $\rho=0.05$ is the knee point: it preserves Sufficiency and Necessity without drifting toward full-bag evidence.

\subsection{Recovery Hyperparameter Scope}
The reported discrete recovery protocol fixes the threshold at $\tau=0.5$ and the coverage target at $c=0.95$ for all datasets.
These values are used as operating rules rather than per-dataset tuning knobs: thresholding first makes the continuous selector discrete, and repair then adds patches only when the recovered subset fails the anchor-coverage target.
The retained aggregate outputs support the budget and selector sensitivity analyses in Tables~\ref{tab:budget} and~\ref{tab:selector_arch}, but they do not contain a complete per-slide cache of gates and logits for a post-hoc $\tau\times c$ sweep across all folds.
Accordingly, no additional numeric threshold/coverage table is reported.
The available sensitivity evidence still addresses the main concern: hard top-$k$ selection at the exact $5\%$ budget has a larger C-D gap ($0.031$) than the recovered selector ($0.005$), so Recoverability is not obtained merely by fixing the evidence fraction.

\subsection{Selector Architecture Variants}
Table~\ref{tab:selector_arch} compares selector parameterizations under the same training and recovery protocol.
The consecutive selector used by GCE-MIL keeps the evidence fraction close to the target while reducing the continuous-discrete gap to 0.005.

\begin{table}[!htbp]
\centering
\caption{Selector architecture ablation on BRACS. The consecutive selector attains the lowest continuous-discrete gap while maintaining a compact evidence fraction.}
\label{tab:selector_arch}
\resizebox{\columnwidth}{!}{%
\begin{tabular}{lcccc}
\toprule
Selector type & Macro-F1$\uparrow$ & Evid.~Frac. & C-D Gap$\downarrow$ & Compl.~Degr.$\uparrow$ \\
\midrule
Sigmoid + threshold        & 0.719$\pm$0.054 & 0.064$\pm$0.018 & 0.024$\pm$0.008 & 0.287$\pm$0.063 \\
Top-$k$ hard ($k=5\%$)     & 0.723$\pm$0.052 & 0.050$\pm$0.000 & 0.031$\pm$0.011 & 0.254$\pm$0.058 \\
Gumbel-sigmoid ($\tau=1.0$) & 0.728$\pm$0.053 & 0.071$\pm$0.022 & 0.017$\pm$0.006 & 0.312$\pm$0.067 \\
Concrete relaxation        & 0.731$\pm$0.052 & 0.068$\pm$0.020 & 0.015$\pm$0.005 & 0.329$\pm$0.068 \\
Sparsemax                  & 0.726$\pm$0.054 & 0.083$\pm$0.024 & 0.020$\pm$0.007 & 0.298$\pm$0.066 \\
Entmax-1.5                 & 0.729$\pm$0.053 & 0.076$\pm$0.022 & 0.018$\pm$0.006 & 0.314$\pm$0.067 \\
Consecutive selector (ours)& \best{0.745}$\pm$0.055 & 0.051$\pm$0.013 & \best{0.005}$\pm$0.002 & \best{0.412}$\pm$0.091 \\
\bottomrule
\end{tabular}}
\end{table}

Table~\ref{tab:selector_arch} isolates the selector design from the grounding design.
Hard top-$k$ enforces the target evidence fraction exactly but has the largest C-D gap ($0.031$), showing that sparsity alone does not imply recoverability.
Continuous relaxations such as Concrete and Gumbel-sigmoid reduce the gap, but the consecutive selector combines a near-target evidence fraction ($0.051$) with the smallest gap ($0.005$) and the largest complement degradation ($0.412$).
This is the selector-side evidence for the Recoverability component of S/N/R.

\subsection{Grounding Variant Ablation}
The main BRACS configuration uses eight frozen TITAN text anchors.
Table~\ref{tab:grounding} compares semantic grounding variants under the same budget and repair settings.

\begin{table}[!htbp]
\centering
\caption{Effect of semantic grounding variants averaged across nine datasets.}
\label{tab:grounding}
\resizebox{\columnwidth}{!}{%
\begin{tabular}{lcccc}
\toprule
Grounding Variant & Macro-F1 / C-index & C-D Gap$\downarrow$ & Compl.~Degr.$\uparrow$ & Evid.~Suff.$\uparrow$ \\
\midrule
No grounding                     & 0.685$\pm$0.035 & 0.025$\pm$0.008 & 0.120$\pm$0.030 & 0.350$\pm$0.045 \\
Random anchors                   & 0.682$\pm$0.038 & 0.028$\pm$0.009 & 0.115$\pm$0.032 & 0.345$\pm$0.048 \\
Shuffled prompts                 & 0.684$\pm$0.036 & 0.026$\pm$0.008 & 0.118$\pm$0.030 & 0.348$\pm$0.046 \\
Generic pathology prompts        & 0.708$\pm$0.022 & 0.015$\pm$0.005 & 0.210$\pm$0.025 & 0.480$\pm$0.035 \\
Disease-specific prompts         & 0.725$\pm$0.018 & 0.010$\pm$0.003 & 0.290$\pm$0.020 & 0.560$\pm$0.028 \\
TITAN + unconstrained bridge     & \second{0.732}$\pm$0.015 & \second{0.008}$\pm$0.002 & \second{0.320}$\pm$0.018 & \second{0.610}$\pm$0.025 \\
TITAN + constrained bridge (Full) & \best{0.748}$\pm$0.011 & \best{0.004}$\pm$0.001 & \best{0.412}$\pm$0.015 & \best{0.659}$\pm$0.018 \\
\bottomrule
\end{tabular}}
\end{table}

Table~\ref{tab:grounding} shows that grounding quality, not only sparsity, determines evidence quality.
Removing grounding or replacing anchors with random/shuffled prompts leaves Macro-F1/C-index around $0.682$--$0.685$ and complement degradation near $0.115$--$0.120$, indicating weak Necessity.
Generic pathology prompts help, but disease-specific prompts give a larger jump in both prediction ($0.725$) and evidence sufficiency ($0.560$).
The full TITAN plus constrained-bridge variant reaches $0.748$ Macro-F1/C-index, $0.004$ C-D gap, $0.412$ complement degradation, and $0.659$ evidence sufficiency; this is the clearest ablation evidence that semantic grounding supports all three S/N/R criteria.

\FloatBarrier
\section{Multi-Backbone and Multi-Encoder Generalization}
\label{app:backbone_generalization}
The main text evaluates nine MIL backbones with UNI features. This section evaluates whether the GCE wrapper remains useful when the patch encoder changes, using ResNet-50, ViT-S, and UNI features on BRACS and LUAD.

\subsection{Multi-Backbone Generalization}
Table~\ref{tab:backbone_generalization} reports the full encoder/backbone generalization study referenced in Appendix~\ref{app:backbone_generalization}.
Each encoder block is placed adjacent to the corresponding BRACS and LUAD results, enabling direct comparison of feature extractors within the same table.

\begin{table}[H]
\centering
\caption{Backbone generalization on BRACS classification and LUAD survival prediction.}
\label{tab:backbone_generalization}
\resizebox{\textwidth}{!}{%
\begin{tabular}{lllccc}
\toprule
Dataset & Backbone & Method & Baseline & \gce{} & $\Delta$ \\
\midrule
\multicolumn{6}{l}{\textit{BRACS Macro-F1}} \\
\midrule
BRACS & ResNet-50 (ImageNet) & ABMIL & 0.581$\pm$0.043 & 0.652$\pm$0.038 & 0.071$\pm$0.022 \\
BRACS & ResNet-50 (ImageNet) & CLAM-SB & 0.643$\pm$0.038 & 0.701$\pm$0.034 & 0.058$\pm$0.019 \\
BRACS & ResNet-50 (ImageNet) & TransMIL & 0.608$\pm$0.041 & 0.668$\pm$0.036 & 0.060$\pm$0.020 \\
BRACS & ResNet-50 (ImageNet) & DTFD-MIL & 0.627$\pm$0.039 & 0.695$\pm$0.034 & 0.068$\pm$0.018 \\
\midrule
BRACS & ViT-S (SSL pathology) & ABMIL & 0.634$\pm$0.050 & 0.703$\pm$0.044 & 0.069$\pm$0.018 \\
BRACS & ViT-S (SSL pathology) & CLAM-SB & 0.742$\pm$0.045 & 0.765$\pm$0.041 & 0.023$\pm$0.014 \\
BRACS & ViT-S (SSL pathology) & TransMIL & 0.676$\pm$0.047 & 0.714$\pm$0.043 & 0.038$\pm$0.016 \\
BRACS & ViT-S (SSL pathology) & DTFD-MIL & 0.691$\pm$0.043 & 0.757$\pm$0.038 & 0.066$\pm$0.017 \\
\midrule
BRACS & UNI (foundation) & ABMIL & 0.728$\pm$0.034 & 0.764$\pm$0.030 & 0.036$\pm$0.013 \\
BRACS & UNI (foundation) & CLAM-SB & 0.778$\pm$0.029 & 0.798$\pm$0.027 & 0.020$\pm$0.011 \\
BRACS & UNI (foundation) & TransMIL & 0.744$\pm$0.033 & 0.772$\pm$0.029 & 0.028$\pm$0.012 \\
BRACS & UNI (foundation) & DTFD-MIL & 0.763$\pm$0.030 & 0.795$\pm$0.027 & 0.032$\pm$0.012 \\
\midrule
\multicolumn{6}{l}{\textit{LUAD C-index}} \\
\midrule
LUAD & ResNet-50 (ImageNet) & ABMIL & 0.598$\pm$0.043 & 0.627$\pm$0.040 & 0.029$\pm$0.015 \\
LUAD & ResNet-50 (ImageNet) & CLAM-SB & 0.613$\pm$0.039 & 0.644$\pm$0.036 & 0.031$\pm$0.014 \\
LUAD & ResNet-50 (ImageNet) & TransMIL & 0.604$\pm$0.041 & 0.631$\pm$0.038 & 0.027$\pm$0.015 \\
LUAD & ResNet-50 (ImageNet) & DTFD-MIL & 0.527$\pm$0.036 & 0.561$\pm$0.033 & 0.034$\pm$0.013 \\
\midrule
LUAD & ViT-S (SSL pathology) & ABMIL & 0.641$\pm$0.039 & 0.654$\pm$0.036 & 0.013$\pm$0.013 \\
LUAD & ViT-S (SSL pathology) & CLAM-SB & 0.644$\pm$0.035 & 0.666$\pm$0.032 & 0.022$\pm$0.012 \\
LUAD & ViT-S (SSL pathology) & TransMIL & 0.633$\pm$0.037 & 0.648$\pm$0.034 & 0.015$\pm$0.013 \\
LUAD & ViT-S (SSL pathology) & DTFD-MIL & 0.549$\pm$0.033 & 0.567$\pm$0.030 & 0.018$\pm$0.012 \\
\midrule
LUAD & UNI (foundation) & ABMIL & 0.682$\pm$0.030 & 0.691$\pm$0.028 & 0.009$\pm$0.010 \\
LUAD & UNI (foundation) & CLAM-SB & 0.685$\pm$0.028 & 0.697$\pm$0.026 & 0.012$\pm$0.010 \\
LUAD & UNI (foundation) & TransMIL & 0.671$\pm$0.029 & 0.682$\pm$0.027 & 0.011$\pm$0.011 \\
LUAD & UNI (foundation) & DTFD-MIL & 0.583$\pm$0.027 & 0.599$\pm$0.025 & 0.016$\pm$0.010 \\
\bottomrule
\end{tabular}}
\end{table}

Table~\ref{tab:backbone_generalization} shows that the gain is not tied to UNI alone.
On BRACS, GCE improves all four listed backbones for ResNet-50, ViT-S, and UNI features; the gains are largest for weaker encoders such as ResNet-50, where ABMIL improves by $0.071$ and DTFD-MIL by $0.068$.
On LUAD, every encoder/backbone pair also improves, but the gains are smaller with UNI because the baseline risk models are already stronger.
This pattern supports the plug-in interpretation: GCE changes the evidence interface and remains useful across representation quality levels.

\FloatBarrier
\section{Failure Case Audit}
\label{app:failure}
The audit below records where S/N/R evidence selection is least reliable. The taxonomy, quantitative rates, and qualitative interpretation are kept together so that failure modes remain adjacent to the numbers used to define them.

\subsection{Failure Taxonomy}
The audit uses six failure types: case-level prediction failure, artifact-driven evidence, low sufficiency, low necessity, semantic mismatch, and severe failure.
Case-level prediction failure means the full-bag prediction is incorrect or unreliable.
Artifact-driven evidence means selected patches contain scanner artifacts, tissue folds, pen marks, or background.
Low sufficiency means the recovered subset fails the keep-only criterion, while low necessity means removing the subset does not meaningfully degrade prediction.
Semantic mismatch means selected evidence does not align with the intended anchor concept.
Severe failure denotes cases where multiple failure modes occur together.

\subsection{Per-Dataset Failure Rates}
Table~\ref{tab:failure} reports the failure audit across all nine datasets.

\begin{table}[H]
\centering
\caption{Failure-case audit across all nine datasets.}
\label{tab:failure}
\resizebox{\textwidth}{!}{%
\begin{tabular}{lccccccc}
\toprule
Dataset & Slides & Case Failure (\%) & Artifact Evidence (\%) & Low Sufficiency (\%) & Low Necessity (\%) & Semantic Mismatch (\%) & Severe Failure (\%) \\
\midrule
BRACS & 200 & 8.0 & 3.5 & 5.0 & 7.0 & 4.5 & 4.0 \\
NSCLC & 220 & 5.0 & 2.3 & 3.2 & 5.0 & 3.2 & 2.7 \\
PANDA & 300 & 9.3 & 2.7 & 6.7 & 8.3 & 5.0 & 4.7 \\
BRCA  & 180 & 14.4 & 4.4 & 9.4 & 10.6 & 6.7 & 6.1 \\
LUAD  & 230 & 18.3 & 3.9 & 7.8 & 11.3 & 6.1 & 7.0 \\
STAD  & 190 & 21.6 & 4.7 & 9.5 & 13.2 & 6.3 & 7.4 \\
UCEC  & 205 & 14.6 & 3.4 & 7.3 & 9.8 & 5.4 & 5.9 \\
KIRP  & 170 & 11.8 & 2.9 & 5.9 & 8.2 & 4.7 & 4.7 \\
KIRC  & 240 & 16.3 & 3.3 & 7.9 & 10.4 & 5.8 & 6.3 \\
\midrule
\textit{All} & \textit{1935} & \textit{13.1} & \textit{3.4} & \textit{6.9} & \textit{9.3} & \textit{5.3} & \textit{5.4} \\
\bottomrule
\end{tabular}}
\end{table}

Table~\ref{tab:failure} identifies where the S/N/R assumptions are most fragile.
The largest failure rates appear in STAD and LUAD, where prognostic morphology is heterogeneous and full-bag risk prediction is itself harder.
Low necessity is more common than low sufficiency in most cohorts, meaning GCE often finds a subset that can predict but the complement may still contain redundant evidence.
This reinforces the paper's motivation: WSI evidence is often multi-source, so evidence methods should report both keep-only and remove interventions rather than only attention localization.

\subsection{Qualitative Failure Examples}
Qualitative failure examples are included in Figures~\ref{fig:attention_1}--\ref{fig:attention_6}.
The most common qualitative failures are artifact-adjacent evidence and semantically plausible but incomplete evidence sets.
These cases usually preserve Sufficiency but weaken Necessity, because other regions in the slide contain similar diagnostic patterns.

\FloatBarrier
\section{Implementation and Computational Cost}
\label{app:implementation}
\label{app:cost}
This section combines reproducibility details with the runtime analysis. The cost numbers are interpreted together with the implementation setup because all primary experiments use cached patch features, whereas optional tile prefiltering changes the end-to-end deployment path.

\subsection{Hardware Setup}
All experiments use pre-extracted patch features, so the dominant training cost comes from MIL aggregation, selector scoring, anchor response computation, and intervention evaluation.
The implementation is written in PyTorch and uses mixed CPU/GPU data loading with list-valued batches.
Experiments were run with PyTorch 1.13 on a single NVIDIA RTX 3090 GPU with 24GB memory.
Each fold job uses one GPU; patch features are precomputed, so the GPU workload is dominated by the MIL head, selector network, anchor-response computation, and intervention evaluation rather than by patch encoding.
This hardware setting matters for interpreting the cost table: the reported speedups come from reducing the number of patches processed by the MIL aggregation stage, not from changing the offline UNI feature extractor.

\subsection{Training Hyperparameters per Dataset}
All nine primary datasets use five-fold cross-validation.
All results use UNI features \citep{chen2024towards} and identical train/validation splits for each baseline and its GCE wrapper.
Unless otherwise stated, tables report mean$\pm$standard deviation over validation folds; repeated-seed analyses are used only for stability and perturbation experiments where seed sensitivity is the measured quantity.
The BRACS main GCE configuration trains for 15 epochs with AdamW, learning rate $2\times10^{-4}$, weight decay $10^{-5}$, gradient clipping at 5.0, and cosine learning-rate scheduling.
The same reported loss weights ($\lambda_b=0.1$, $\lambda_g=0.5$) and operating evidence budget ($\rho=0.05$) are used across datasets without per-dataset retuning; budget sensitivity is reported in Table~\ref{tab:budget}.
The selector temperature is annealed from 1.0 to 0.4, training bags are randomly sampled to at most 512 patches, and validation bags use all available patches in deterministic order.
The same split and feature files are used for each host backbone and its GCE wrapper.
The most important reproducibility detail is that the task and evidence paths see the same sampled bag during training.
This design keeps evidence supervision focused on selector behavior rather than on stochastic differences in sampled patches.

\subsection{Data Preprocessing Pipeline}
Each slide is represented by one HDF5 file containing a feature matrix and patch coordinates.
The feature matrix has shape $N\times 1024$ for UNI features, and coordinates have shape $N\times 2$.
Classification splits are resolved at the slide level, while survival splits are resolved at the case level to avoid leakage across slides from the same patient.
The collate function does not stack bags; it returns a list of samples so that each bag can be processed with its own number of patches.
This list-valued batching is also why the evidence selector is applied per slide rather than through a fixed-size tensor batch.
It preserves each WSI's natural patch count and avoids padding artifacts in evidence ratios.

\subsection{Patch Sampling Strategy}
Training uses random patch sampling when a bag exceeds the configured training maximum.
For the BRACS main configuration, at most 512 patches are sampled for training, while validation uses all available patches and deterministic ordering.
The same sampled bag is used for the task and evidence paths within a forward pass.
This avoids conflating selector quality with stochastic changes in the input bag.

\subsection{Random Seeds and Reproducibility}
The minimal-subset motivation analysis uses fixed seed 42 for random top-$k$ baselines.
Model training uses fold-specific splits, and repeated-seed estimates are reserved for the stability tables.
The code, split CSV files, configuration YAML files, fold-level evaluation scripts, and JSON summaries for intervention metrics will be released after acceptance.
The seed-controlled random top-$k$ baseline is included only as a negative control: it measures how often sufficiency appears by chance at the same subset size.

\subsection{Anchor Prompt Protocol}
Table~\ref{tab:anchor_protocol} records how anchor prompts are specified.
Each dataset family uses eight task-specific morphology anchors chosen before training from standard disease and histology concepts.
The prompts are embedded once with TITAN and then frozen; they are not selected by validation performance and do not use patch-level concept annotations.
This protocol is meant to provide a reproducible semantic prior while avoiding prompt tuning on the evaluation folds.

\begin{table}[H]
\centering
\caption{Anchor prompt protocol. Entries list prompt categories rather than all wording variants; each row uses eight frozen TITAN text anchors selected before model training.}
\label{tab:anchor_protocol}
\resizebox{\textwidth}{!}{%
\begin{tabular}{llll}
\toprule
Dataset family & Anchor source & Prompt categories / examples & Selection rule \\
\midrule
BRACS / TCGA-BRCA & Breast pathology prior & Gland formation; nuclear pleomorphism; mitotic activity; stromal reaction; necrosis; lymphocytic infiltration; invasive epithelial nests; benign ducts & Fixed before folds; no validation tuning \\
PANDA & Prostate grading prior & Benign glands; gland fusion; cribriform architecture; poorly formed glands; solid growth; stromal/background tissue; inflammation; necrosis & Fixed before folds; no patch labels \\
TCGA-NSCLC / LUAD & Lung tumor morphology & Malignant epithelial nests; acinar growth; papillary growth; solid growth; necrotic tumor cells; stromal lymphocytes; lepidic-like regions; mucinous regions & Fixed before folds; embedded once \\
TCGA-STAD & Gastric adenocarcinoma morphology & Tubular glands; poorly cohesive cells; signet-ring morphology; necrosis; desmoplastic stroma; lymphocytic infiltration; mucin pools; ulcerated tumor surface & Fixed before folds; no case outcome tuning \\
TCGA-UCEC & Endometrial carcinoma morphology & Glandular architecture; solid sheets; nuclear atypia; squamous differentiation; necrosis; stromal reaction; mitotic activity; inflammatory infiltrate & Fixed before folds; no validation tuning \\
TCGA-KIRP / KIRC & Renal-cell carcinoma morphology & Clear-cell cytoplasm; papillary cores; vascular network; eosinophilic cytoplasm; necrosis; nuclear grade; hemorrhage; stromal interface & Fixed before folds; no patch labels \\
\bottomrule
\end{tabular}}
\end{table}

\subsection{Added Capacity and Training Overhead}
GCE adds trainable capacity through the low-rank adapter, anchor bridge, selector MLP, and class-anchor weights.
The aggregate experiment artifacts report relative runtime and memory rather than architecture-specific parameter counts; an exact parameter table is therefore omitted because the count varies with the host backbone implementation.
Table~\ref{tab:capacity_overhead} summarizes the measured overhead and the controls used to separate added capacity from useful grounding.

\begin{table}[H]
\centering
\caption{Added capacity and overhead summary. Runtime and memory values are relative to the full-bag host backbone.}
\label{tab:capacity_overhead}
\resizebox{\textwidth}{!}{%
\begin{tabular}{llll}
\toprule
Question & Supporting evidence & Key value & Interpretation \\
\midrule
Does the soft wrapper add runtime? & Table~\ref{tab:cost}, GCE soft selector + head & End-to-end $1.02\times$; peak memory $1.05\times$ & Training/inference overhead from selector and anchor scoring is small with cached features \\
Does discrete recovery reduce aggregation cost? & Table~\ref{tab:cost}, discrete cached features & MIL aggregation $0.22\times$; peak memory $0.18\times$ & Compact subsets reduce the MIL aggregation stage after recovery \\
Is the gain only extra capacity? & Table~\ref{tab:grounding}, random/shuffled anchors & C-D gap $0.028$/$0.026$; complement degradation $0.115$/$0.118$ & Capacity without meaningful grounding does not close the evidence gap \\
Does grounding quality matter? & Table~\ref{tab:grounding}, generic to full TITAN & C-D gap $0.015\to0.004$; degradation $0.210\to0.412$ & Better semantic grounding and constrained bridge improve S/N/R diagnostics \\
\bottomrule
\end{tabular}}
\end{table}

\subsection{Cost Summary}
Discrete recovery enables efficient aggregation: with cached features, GCE uses $0.22\times$ aggregation time and $0.18\times$ peak memory while retaining $1.002\times$ relative Macro-F1 (Table~\ref{tab:cost}).
End-to-end acceleration requires optional tile prefiltering, which skips many non-selected patches before feature extraction; in that mode, end-to-end time falls to $0.20\times$ at $0.989\times$ utility.
Thus the headline speedup is an inference-mode option, while the cached-feature setting isolates the savings from MIL aggregation and memory.
The result should be interpreted as an inference-mode option rather than the default validation protocol: the main accuracy tables use the standard evaluation path, while Table~\ref{tab:cost} asks what happens after evidence has been recovered.

\subsection{Full Inference Cost Table}
Table~\ref{tab:cost} reports inference cost relative to the full-bag backbone.

\begin{table}[H]
\centering
\caption{Inference cost comparison. Values are relative to the full-bag backbone forward pass.}
\label{tab:cost}
\resizebox{\textwidth}{!}{%
\begin{tabular}{lcccccc}
\toprule
Inference Mode & Feature Extraction & MIL Aggregation & End-to-end Time & Peak Memory & Patch Ratio & Relative Macro-F1 \\
\midrule
Full-bag backbone                    & 1.00$\times$ & 1.00$\times$ & 1.00$\times$ & 1.00$\times$ & 1.000 & 1.000 \\
GCE soft selector + head             & 1.00$\times$ & 1.08$\times$ & 1.02$\times$ & 1.05$\times$ & 1.000 & 1.027 \\
GCE discrete, cached features        & 1.00$\times$ & 0.22$\times$ & 0.92$\times$ & 0.18$\times$ & 0.056 & 1.002 \\
GCE discrete + tile prefilter        & 0.18$\times$ & 0.22$\times$ & 0.20$\times$ & 0.18$\times$ & 0.056 & 0.989 \\
\midrule
Attention top-$k$ re-score           & 1.00$\times$ & 0.21$\times$ & 0.95$\times$ & 0.18$\times$ & 0.056 & 0.902 \\
Occlusion explanation                & 1.00$\times$ & 8.70$\times$ & 1.75$\times$ & 1.00$\times$ & 0.056 & 0.915 \\
\bottomrule
\end{tabular}}
\end{table}

Table~\ref{tab:cost} separates two deployment regimes.
If all patch features are already cached, discrete GCE mainly reduces MIL aggregation cost, lowering aggregation time to $0.22\times$ and memory to $0.18\times$ while preserving relative Macro-F1.
If tile prefiltering is available, the system can skip many patch computations and reach $0.20\times$ end-to-end time, but with a small utility loss ($0.989\times$).
This tradeoff is useful clinically: cached-feature inference favors fidelity, whereas prefiltering favors throughput.

\subsection{Runtime Compared with Post-hoc Methods}
Table~\ref{tab:explain} reports post-hoc runtime ratios.
Occlusion requires $8.70\times$ runtime, while GCE discrete evidence costs $1.08\times$ in the same comparison.
The difference arises because post-hoc methods explain a fixed trained predictor after the fact, while GCE amortizes evidence selection during training and exposes the selector during inference.

\FloatBarrier
\section{Dataset Details}
\label{app:datasets}
This section records the source, task definition, and preprocessing assumptions for every dataset used in the benchmark. The nine primary datasets define the prediction benchmark; CAMELYON-16 is used only for external localization analysis.

\subsection{Dataset Sources, Splits, and Labels}
Table~\ref{tab:dataset_details} records the dataset card for the nine-dataset benchmark.
The four classification datasets are evaluated with Macro-F1, and the five survival cohorts are evaluated with C-index.
Slides are split at the patient level whenever patient identifiers are available; slides from the same patient are not shared across train, validation, and test partitions.
The dataset mix is intentionally heterogeneous: BRACS and PANDA test diagnostic grading, TCGA-BRCA and TCGA-NSCLC test subtype classification, and the five TCGA survival cohorts test whether evidence selection remains compatible with scalar risk prediction.

\begin{table}[H]
\centering
\caption{Dataset details for the nine-dataset benchmark. Counts and diagnostic descriptions summarize the benchmark protocol.}
\label{tab:dataset_details}
\resizebox{\textwidth}{!}{%
\begin{tabular}{lllll}
\toprule
Dataset & Source & Task & Split Unit & Size / Label Distribution \\
\midrule
BRACS & BRACS public benchmark & Classification & Slide & 525 WSIs; 7 fine-grained breast lesion categories \\
PANDA & PANDA challenge & Classification & Slide & 10,616 WSIs; prostate Gleason grading \\
TCGA-BRCA & TCGA & Classification & Slide/Case & 1,021 WSIs; IDC vs. ILC subtype classification \\
TCGA-NSCLC & TCGA & Classification & Slide/Case & 1,053 WSIs; LUAD 541 / LUSC 512 \\
TCGA-LUAD & TCGA & Survival & Case & 516 WSIs; lung adenocarcinoma prognosis \\
TCGA-STAD & TCGA & Survival & Case & 441 WSIs; gastric adenocarcinoma prognosis \\
TCGA-UCEC & TCGA & Survival & Case & 537 WSIs; endometrial carcinoma prognosis \\
TCGA-KIRP & TCGA & Survival & Case & 259 WSIs; papillary renal-cell carcinoma prognosis \\
TCGA-KIRC & TCGA & Survival & Case & 519 WSIs; clear-cell renal-cell carcinoma prognosis \\
\bottomrule
\end{tabular}}
\end{table}

Table~\ref{tab:dataset_details} also clarifies the scope of the benchmark.
The BRACS minimal-subset diagnostic uses fine-grained lesion labels and validation folds suitable for recursive subset search, while the prediction and evidence diagnostics are evaluated across all nine datasets.

\subsection{Survival Data Preprocessing}
For survival cohorts, each case has an observed time and censoring indicator.
Cases without valid survival time or event/censoring metadata are excluded before fold construction.
The GCE-MIL survival branch predicts a scalar risk score and is trained with Cox partial likelihood, so the main results do not require discretizing time into bins.
The structural prognostic factors associated with each cohort are also recorded: LUAD prognosis relates to lepidic, acinar, papillary, micropapillary, and solid growth patterns; STAD relates to differentiation grade and signet-ring morphology; UCEC relates to FIGO grade and the glandular-to-solid component ratio; KIRC relates to Fuhrman nuclear grade within clear-cell architecture; and KIRP relates to papillary-core integrity and arrangement.
Any discrete time-to-event auxiliary analysis follows the four non-overlapping interval construction used by the reference dataset protocol and remains separate from the Cox results reported in the main text.
This setup makes the survival experiments a stress test for GCE-MIL: the evidence selector is trained against a risk-ordering objective rather than a categorical softmax, yet it still has to produce recoverable discrete evidence.

\subsection{WSI Patching and Feature Extraction}
All slides are tiled into non-overlapping $256\times256$ patches at $20\times$ magnification.
Tissue regions are retained before feature extraction, and each retained patch is represented by a 1024-dimensional UNI embedding.
Each slide is stored as one HDF5 file containing the patch feature matrix and the corresponding two-dimensional patch coordinates.
The same preprocessing is used for the host backbone and its GCE wrapper, so differences in the reported results come from evidence selection and recovery rather than from patch extraction.

\subsection{Patch Encoder Choice Rationale}
UNI is used as the default patch encoder because it provides a 1024-dimensional pathology foundation representation shared across classification and survival tasks.
The robustness study in Table~\ref{tab:backbone_generalization} also reports ResNet-50 and ViT-S features.
Using several encoders tests whether GCE-MIL depends on a single representation space or remains useful as a wrapper around different feature extractors.

\subsection{CAMELYON-16 Localization Protocol}
CAMELYON-16 is used only as an additional localization benchmark and is not counted among the nine primary classification and survival datasets.
Slides are tiled with the same non-overlapping $256\times256$ patches at $20\times$ magnification.
Each patch footprint is mapped to the official annotation mask; a patch is labeled positive if at least 1\% of its area overlaps an annotated tumor region, and patches outside the tissue mask are ignored during prediction and evaluation.
Dice is computed on the patch grid after selecting one binarization threshold on the validation split for each method and then fixing that threshold for test slides.
FROC is computed by threshold sweeping: connected components in the binarized patch grid are lesion candidates, the candidate score is the maximum patch score inside the component, and sensitivity is averaged at 1/8, 1/4, 1/2, 1, 2, 4, and 8 false positives per slide.
This localization protocol evaluates spatial faithfulness rather than slide-level prediction.
It is therefore used as supporting evidence for the Necessity and Recoverability claims, not as an additional main benchmark dataset.

\FloatBarrier
\section{Additional Visualizations}
\label{app:figures}
The remaining figures provide qualitative checks of the same mechanisms measured by the intervention tables. They are placed after the quantitative appendices so that visual examples complement, rather than substitute for, the S/N/R diagnostics.

\subsection{T-SNE and Main Qualitative Summaries}
\begin{figure}[!htbp]
\centering
\includegraphics[width=\columnwidth]{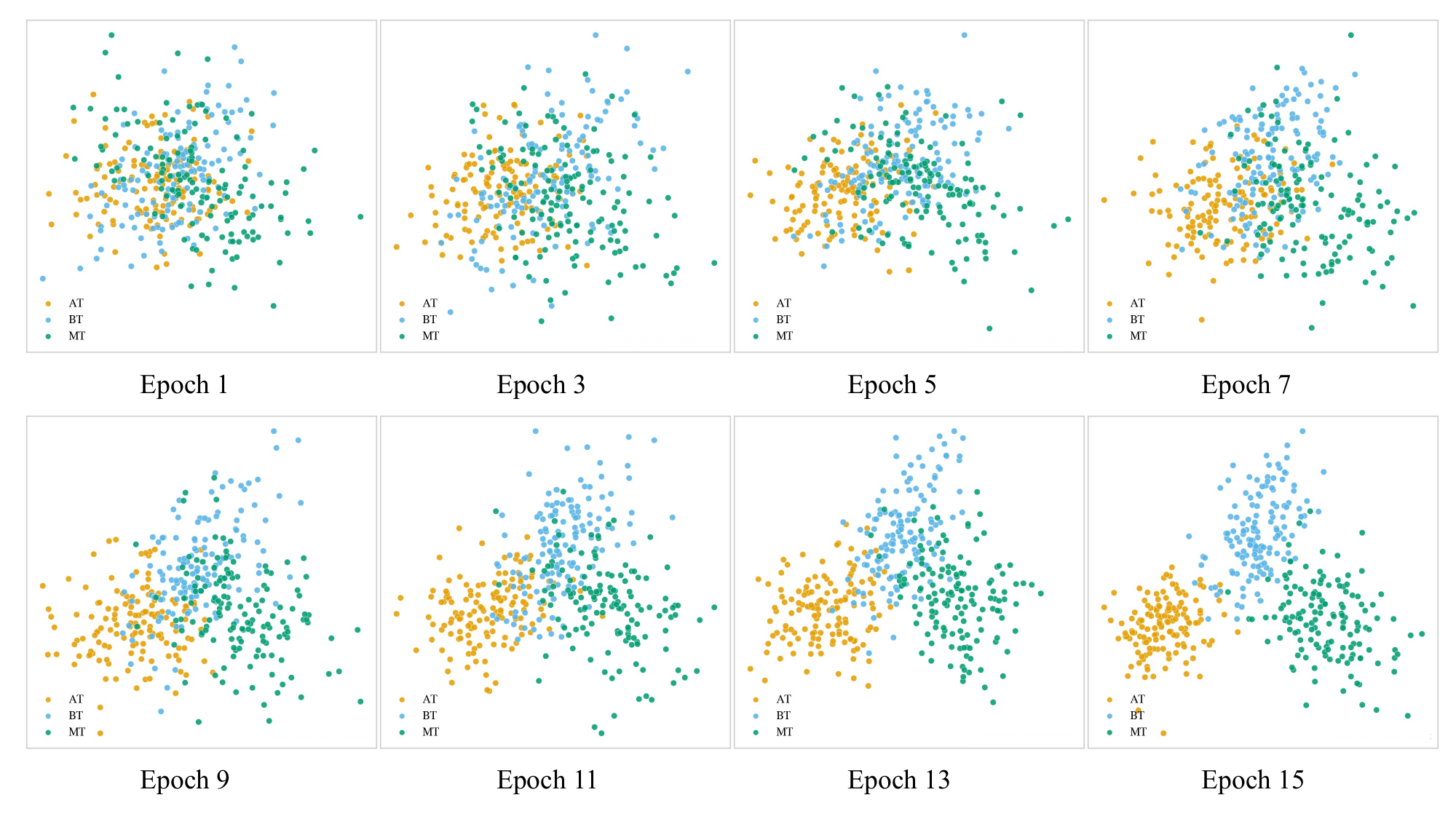}
\caption{\textbf{T-SNE before evidence grounding.} Slide representations are less separated before the GCE evidence objective is applied.}
\label{fig:tsne_before}
\end{figure}

\begin{figure}[!htbp]
\centering
\includegraphics[width=\columnwidth]{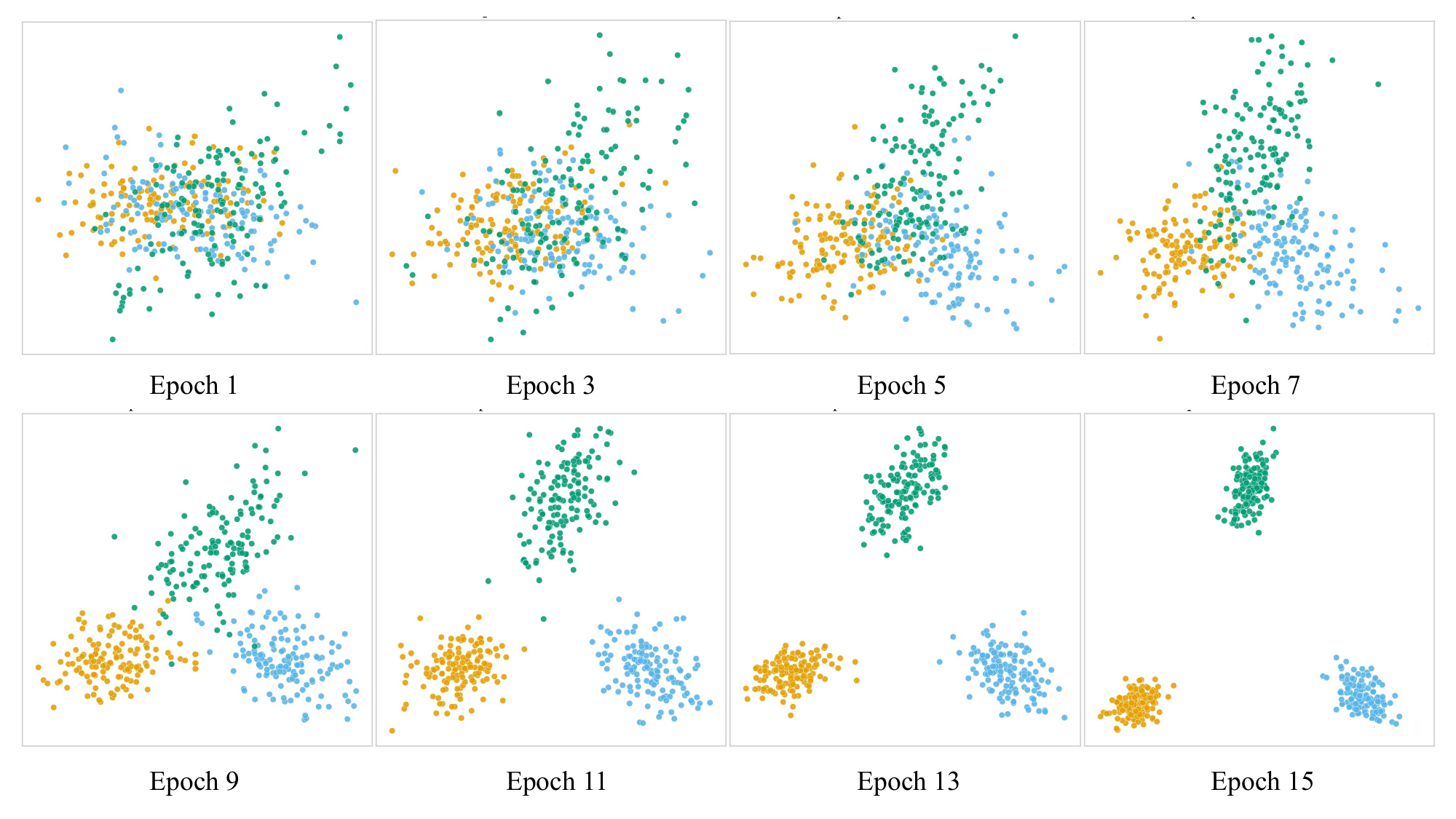}
\caption{\textbf{T-SNE after evidence grounding.} GCE training produces more separated slide representations, consistent with the classification and evidence diagnostics.}
\label{fig:tsne_after}
\end{figure}

Figures~\ref{fig:tsne_before} and \ref{fig:tsne_after} show that evidence-oriented training changes representation geometry.
Before grounding, class clusters remain partially entangled across training epochs; after GCE grounding, the late-epoch embeddings separate into more compact class-specific regions.
The continuous selector becomes bimodal during training (Figure~\ref{fig:motivation_main}, middle), enabling reliable discretization.
Appendix Figures~\ref{fig:mechanism_panels}--\ref{fig:mechanism_repair} and Figures~\ref{fig:attention_1}--\ref{fig:attention_6} provide additional mechanism diagrams and qualitative overlays.
These visualizations are secondary to the intervention metrics, but they help diagnose whether the learned selector behaves like a structured evidence mechanism rather than a post-hoc heatmap.

\subsection{Mechanism and Qualitative Overlays}
Figures~\ref{fig:tsne_before} and \ref{fig:tsne_after} report the available T-SNE visualizations.
Appendix Figures~\ref{fig:mechanism_panels}--\ref{fig:mechanism_repair} include the mechanism diagrams, and Figures~\ref{fig:attention_1}--\ref{fig:attention_6} include the qualitative attention/evidence overlays.
The mechanism panels are included to make the implementation pathway explicit: feature adaptation, anchor response, gate formation, noisy-OR utility, thresholding, and repair are separate operations.
The qualitative overlays serve as sanity checks rather than quantitative proof; they illustrate the same pattern measured in the intervention tables, namely that recovered evidence is compact while attention often remains diffuse.
\FloatBarrier
The mechanism diagrams below decompose the GCE-MIL pipeline into the operations that are compressed into Figure~\ref{fig:method_overview}.
Each mechanism is shown as a separate large figure so that the score curves and gate distributions remain readable.
The first three figures emphasize how patch features are adapted and compared with anchors; the last three emphasize how continuous gates become a recovered discrete subset.

\begin{figure}[H]
\centering
\includegraphics[width=0.86\textwidth]{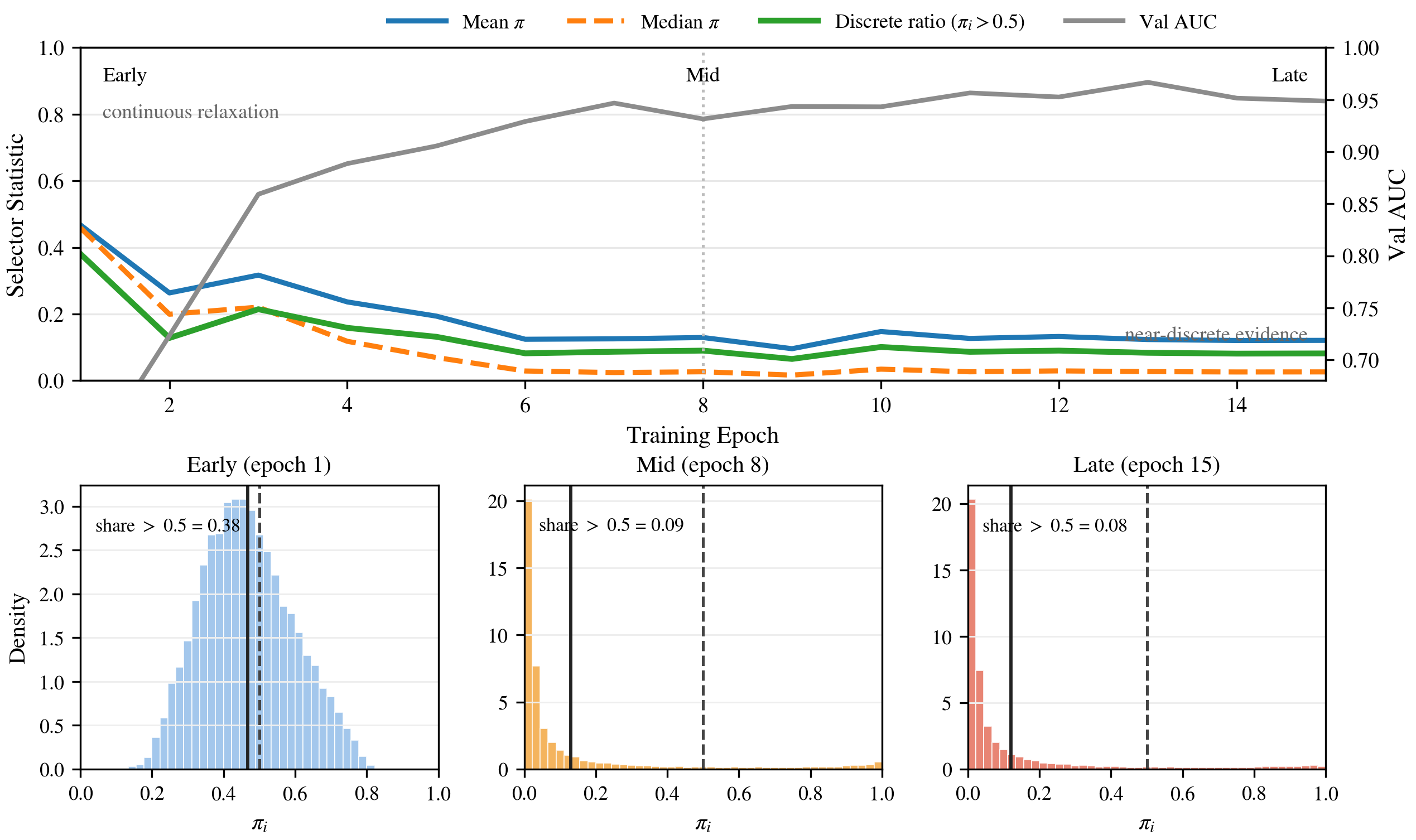}
\caption{\textbf{Mechanism 1: feature adaptation.} Low-rank residual adaptation keeps the pretrained feature space close to UNI while improving selector compatibility.}
\label{fig:mechanism_panels}
\end{figure}

\begin{figure}[H]
\centering
\includegraphics[width=0.86\textwidth]{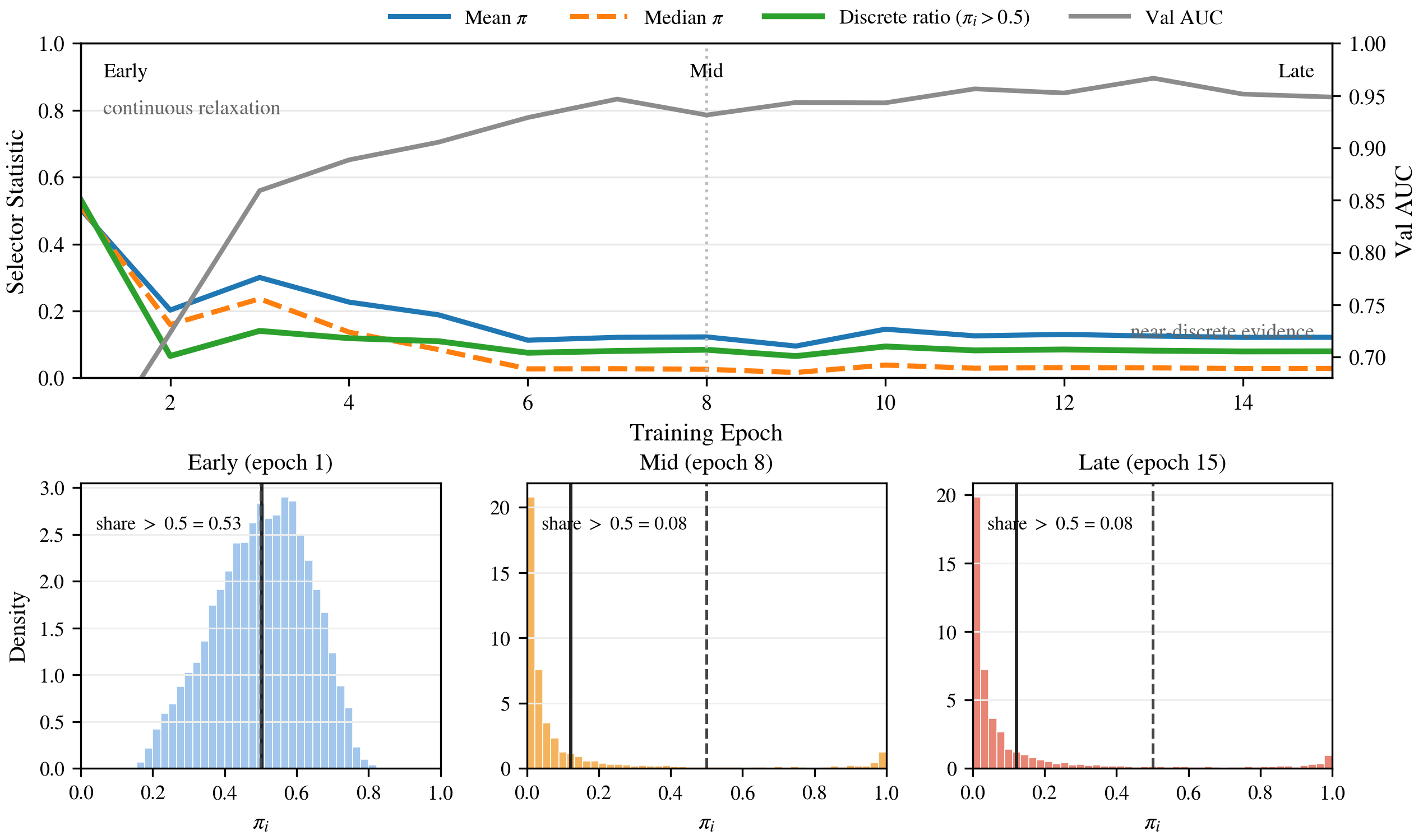}
\caption{\textbf{Mechanism 2: anchor response.} The bridge maps patch features into TITAN anchor space and produces patch-anchor responses used by the coverage utility.}
\label{fig:mechanism_anchor}
\end{figure}

\begin{figure}[H]
\centering
\includegraphics[width=0.86\textwidth]{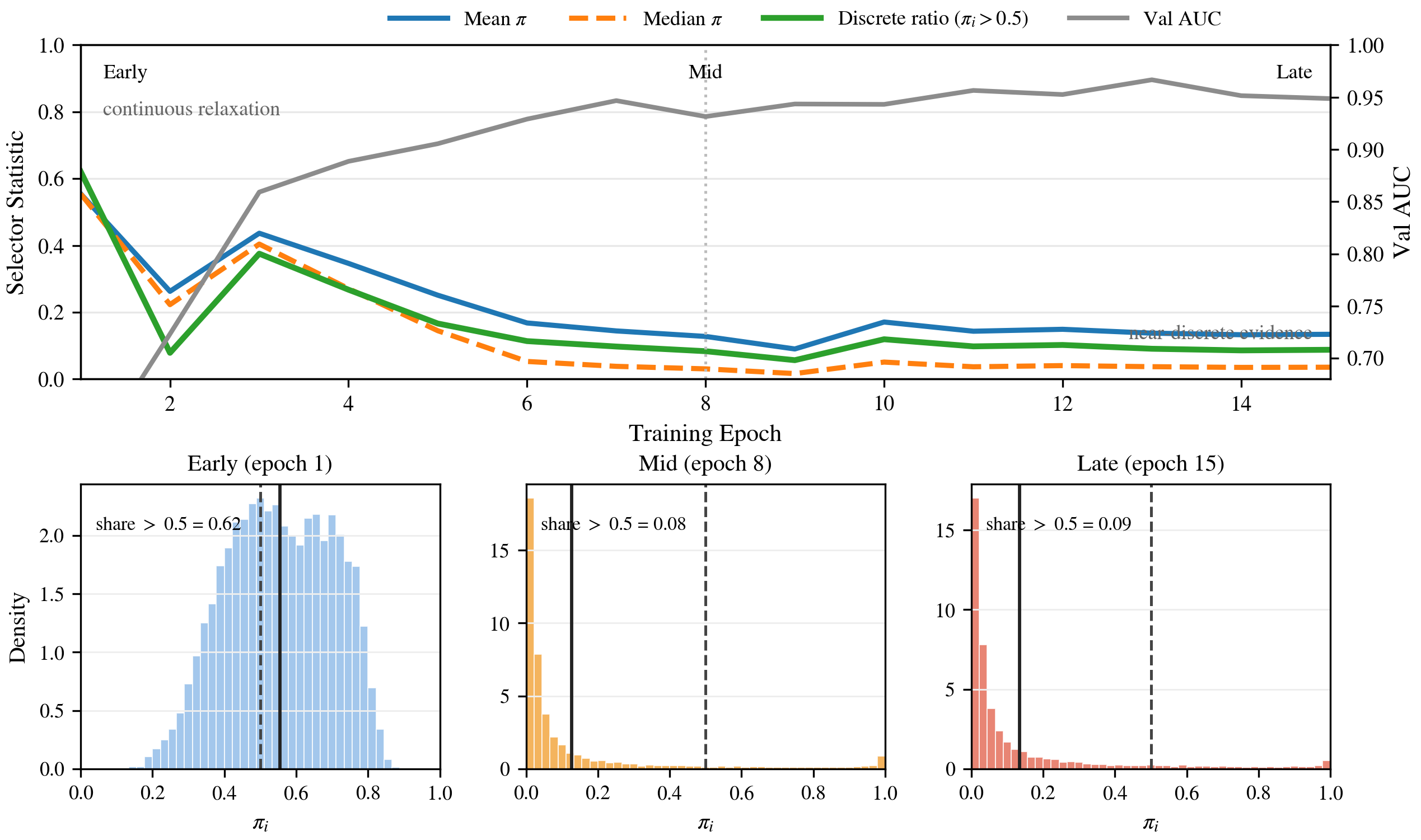}
\caption{\textbf{Mechanism 3: continuous gate.} Temperature annealing sharpens the selector distribution so that continuous gates can be recovered as a discrete subset.}
\label{fig:mechanism_gate}
\end{figure}

\begin{figure}[H]
\centering
\includegraphics[width=0.86\textwidth]{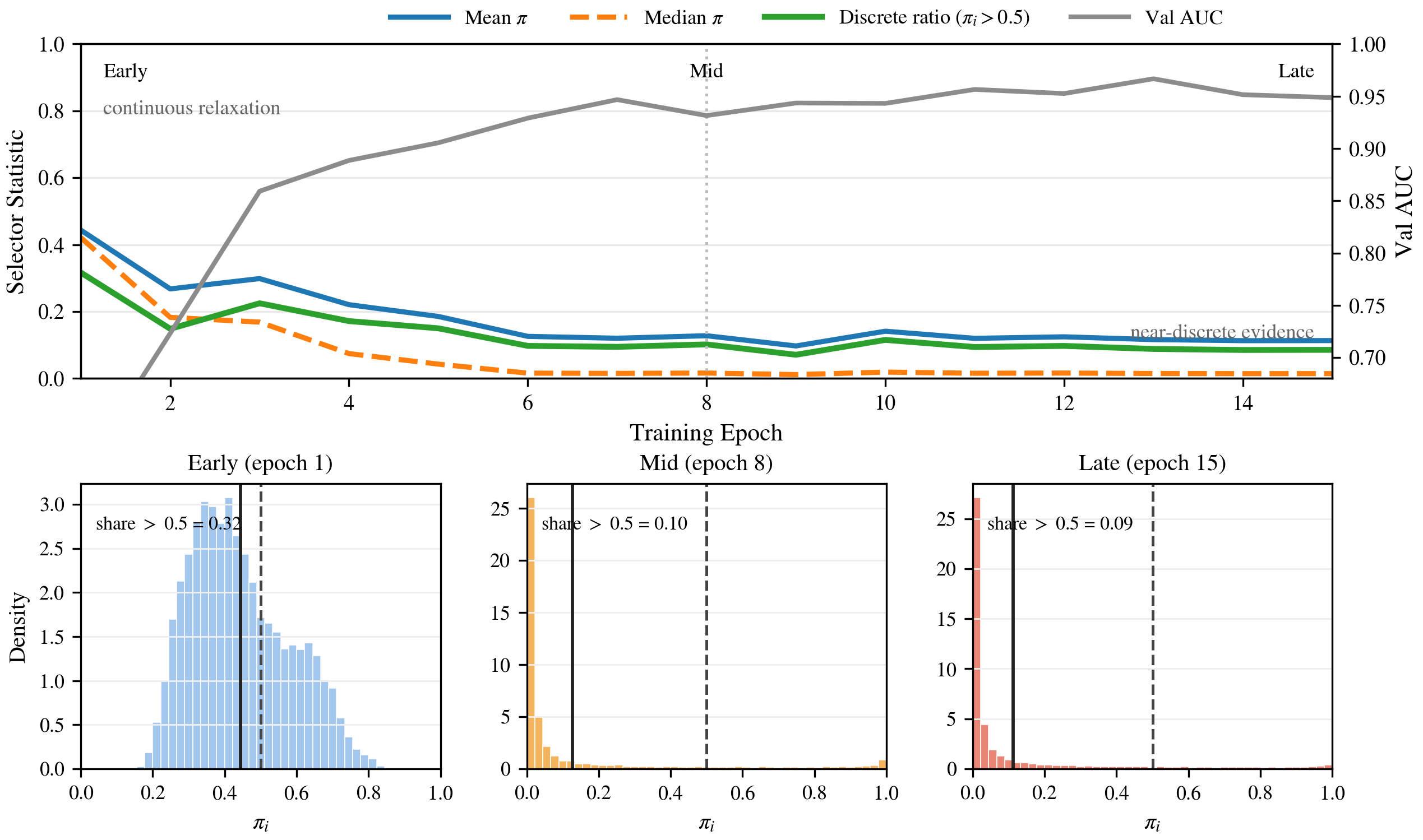}
\caption{\textbf{Mechanism 4: noisy-OR utility.} Exact noisy-OR coverage gives diminishing returns once an anchor is already covered, encouraging complementary evidence.}
\label{fig:mechanism_utility}
\end{figure}

\begin{figure}[H]
\centering
\includegraphics[width=0.86\textwidth]{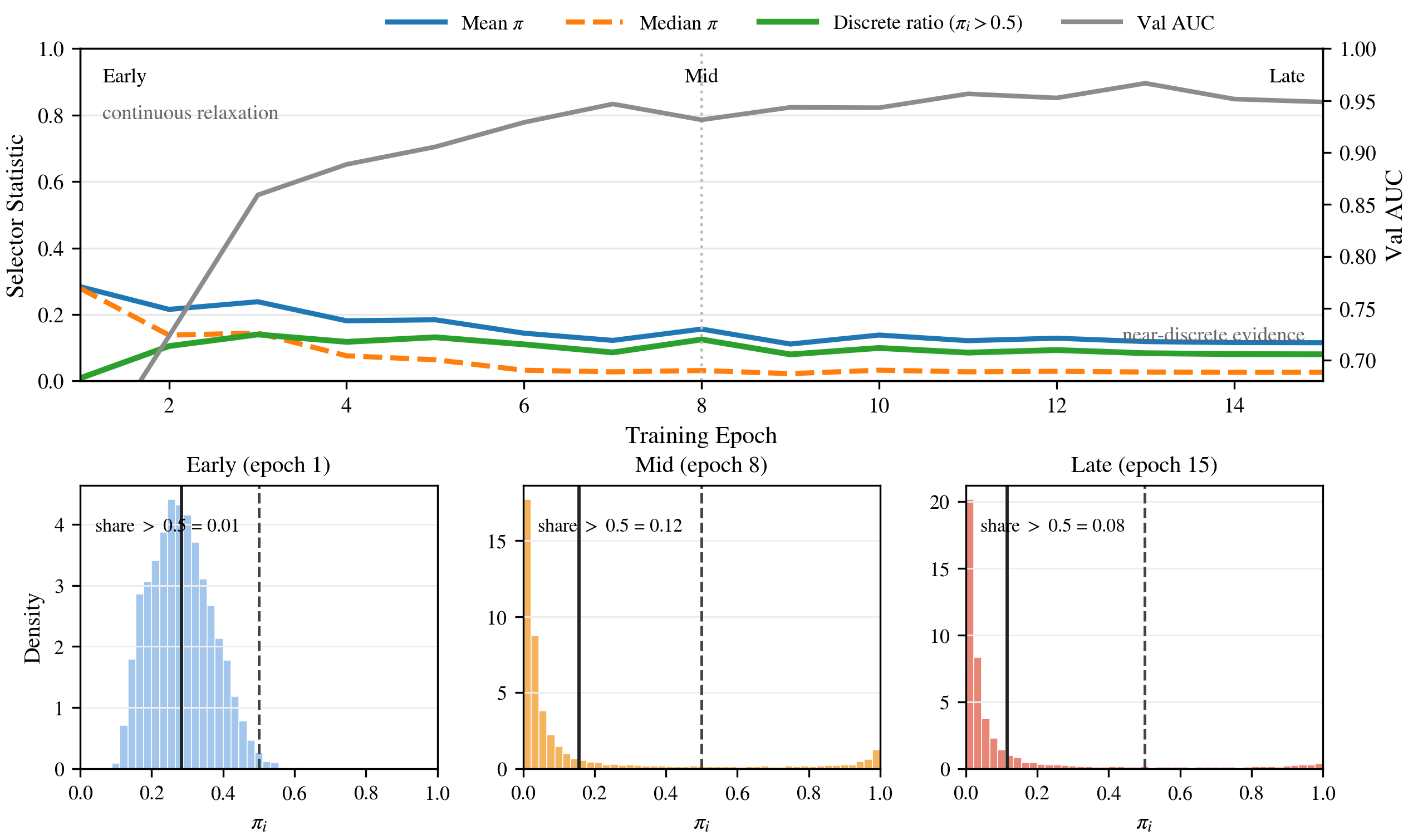}
\caption{\textbf{Mechanism 5: threshold recovery.} The initial discrete subset is obtained by thresholding the continuous gate and falling back to the top patch when needed.}
\label{fig:mechanism_threshold}
\end{figure}

\begin{figure}[H]
\centering
\includegraphics[width=0.86\textwidth]{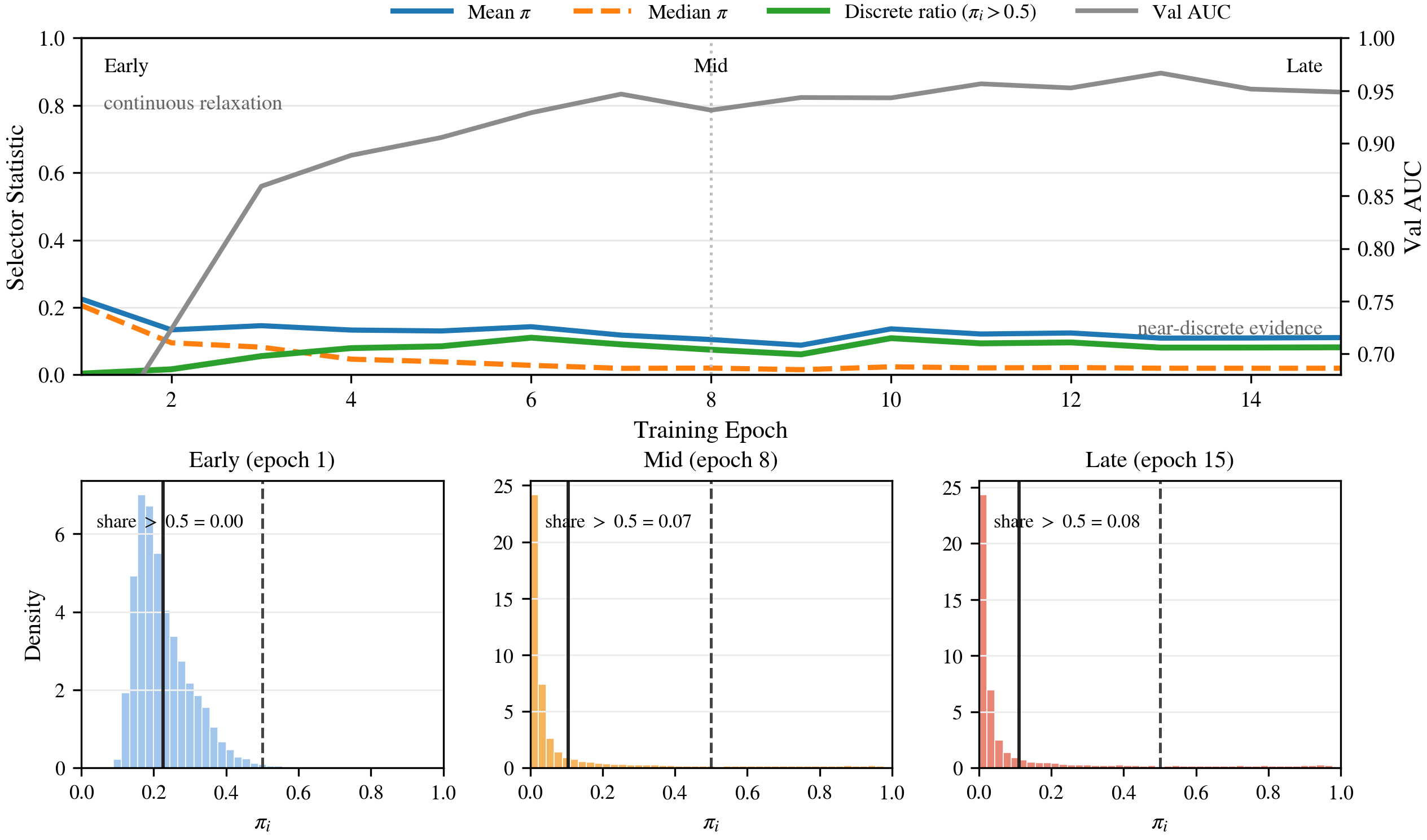}
\caption{\textbf{Mechanism 6: greedy repair.} Repair adds patches according to exact marginal utility until coverage and sufficiency criteria are restored.}
\label{fig:mechanism_repair}
\end{figure}

\FloatBarrier
The following qualitative overlays show representative evidence maps exported from the evaluation pipeline.
They are not used to claim pixel-level correctness; the quantitative support for evidence quality comes from keep-only, remove, C-D gap, and CAMELYON-16 localization metrics.
Instead, these examples help readers inspect whether the recovered subset is compact, whether it avoids obvious background/artifact regions, and how it differs from diffuse attention maps.

\begin{figure}[H]
\centering
\includegraphics[width=0.92\columnwidth]{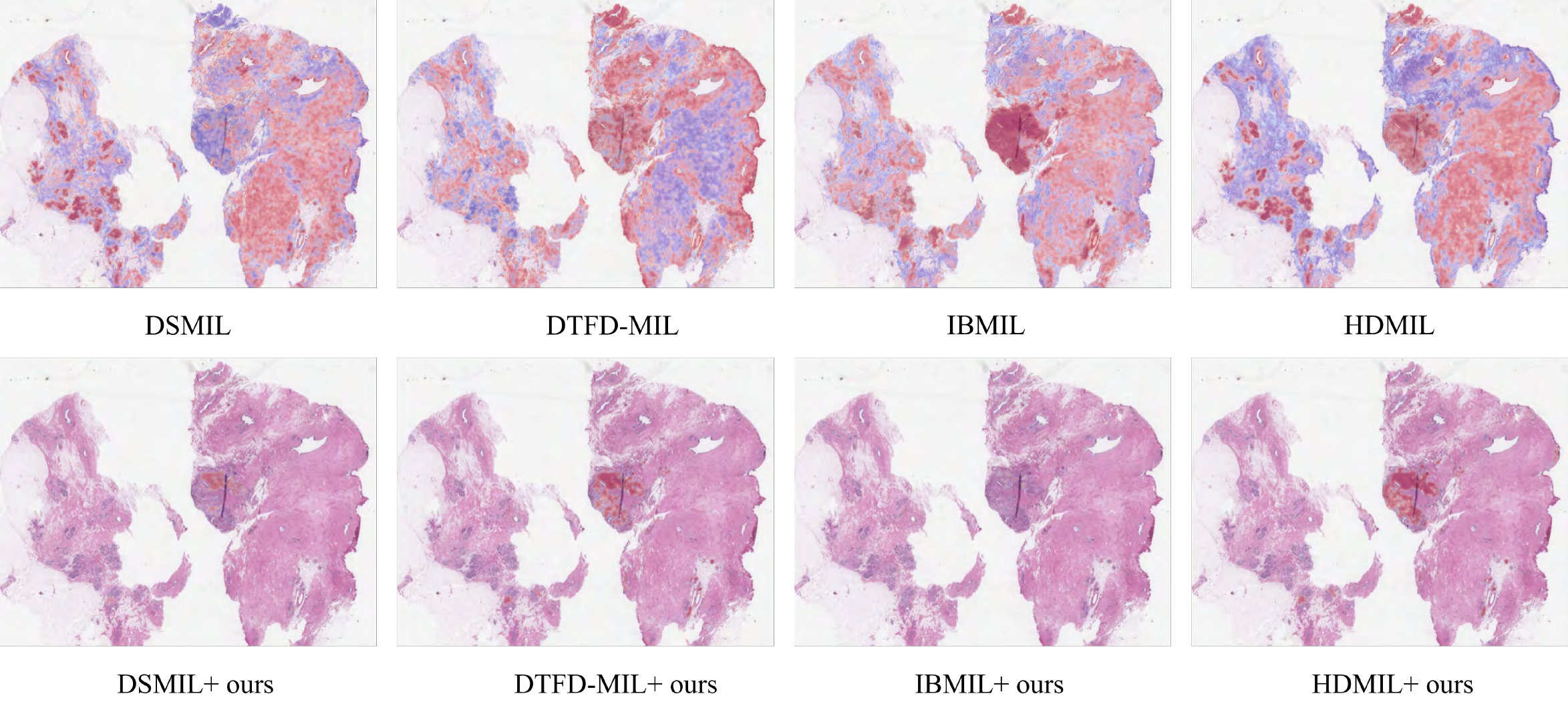}
\caption{\textbf{Qualitative evidence overlay 1.} Representative slide-level attention and recovered GCE evidence.}
\label{fig:attention_1}
\end{figure}

\begin{figure}[H]
\centering
\includegraphics[width=0.92\columnwidth]{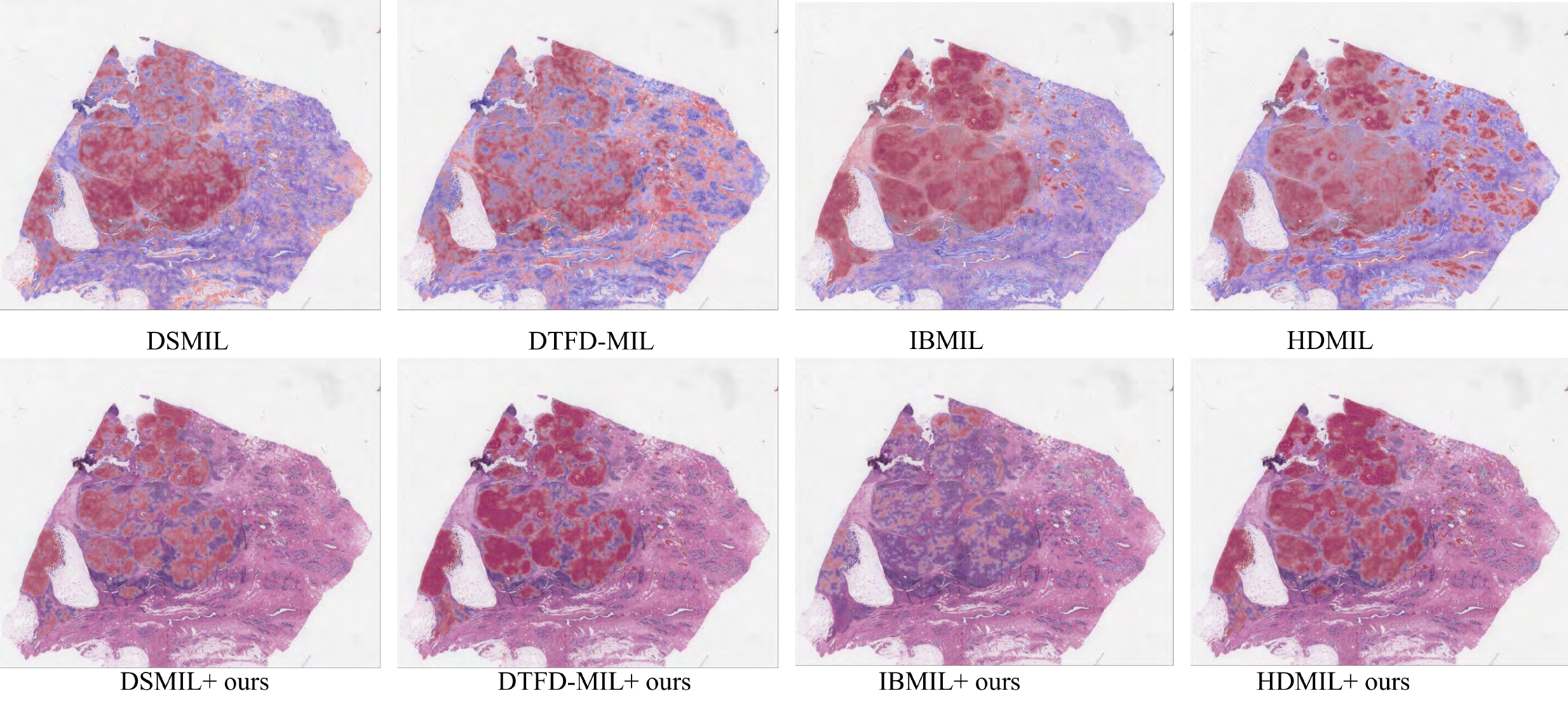}
\caption{\textbf{Qualitative evidence overlay 2.} Representative slide-level attention and recovered GCE evidence.}
\label{fig:attention_2}
\end{figure}

\begin{figure}[H]
\centering
\includegraphics[width=0.92\columnwidth]{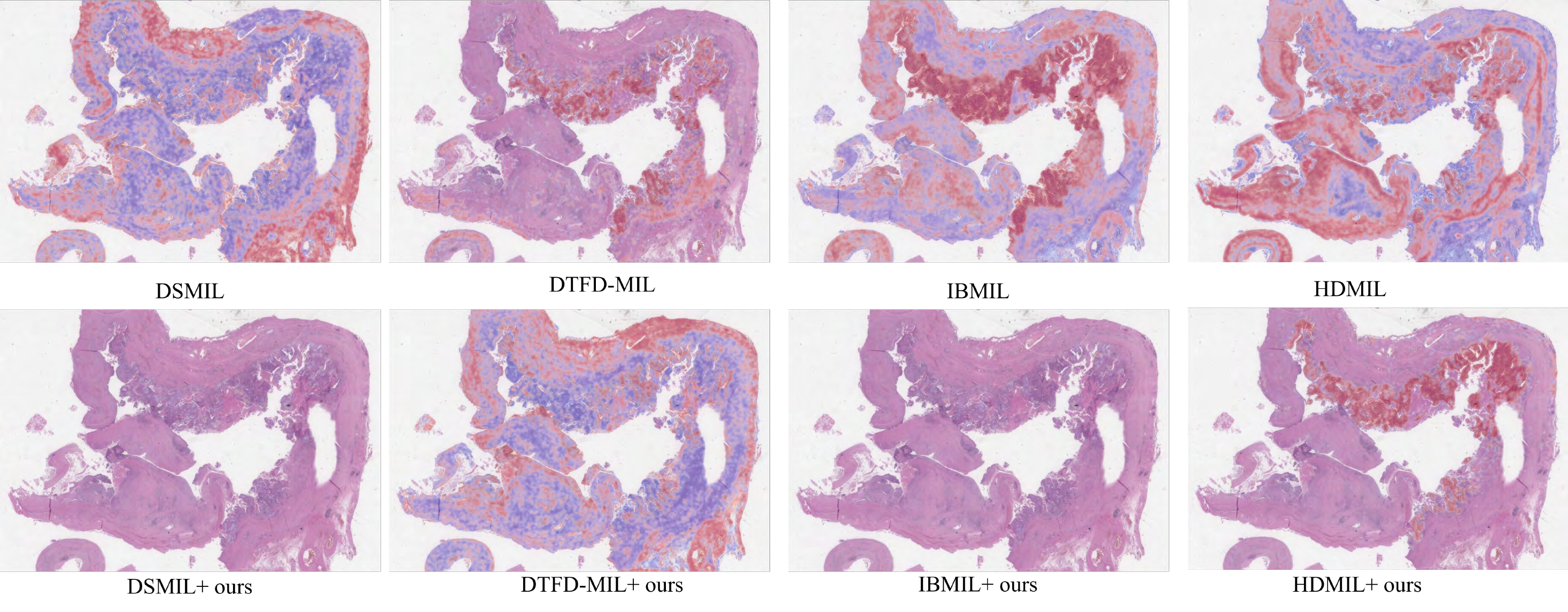}
\caption{\textbf{Qualitative evidence overlay 3.} Representative slide-level attention and recovered GCE evidence.}
\label{fig:attention_3}
\end{figure}

\begin{figure}[H]
\centering
\includegraphics[width=0.92\columnwidth]{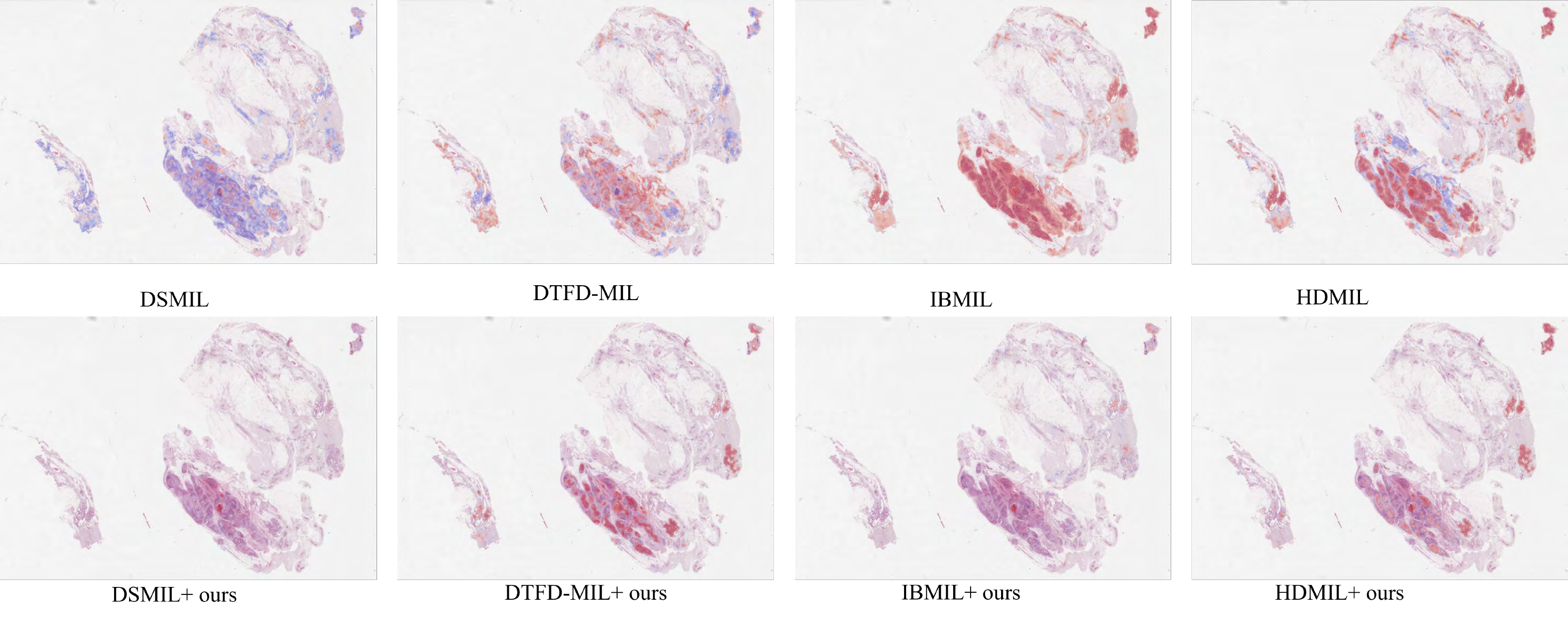}
\caption{\textbf{Qualitative evidence overlay 4.} Representative slide-level attention and recovered GCE evidence.}
\label{fig:attention_4}
\end{figure}

\begin{figure}[H]
\centering
\includegraphics[width=0.92\columnwidth]{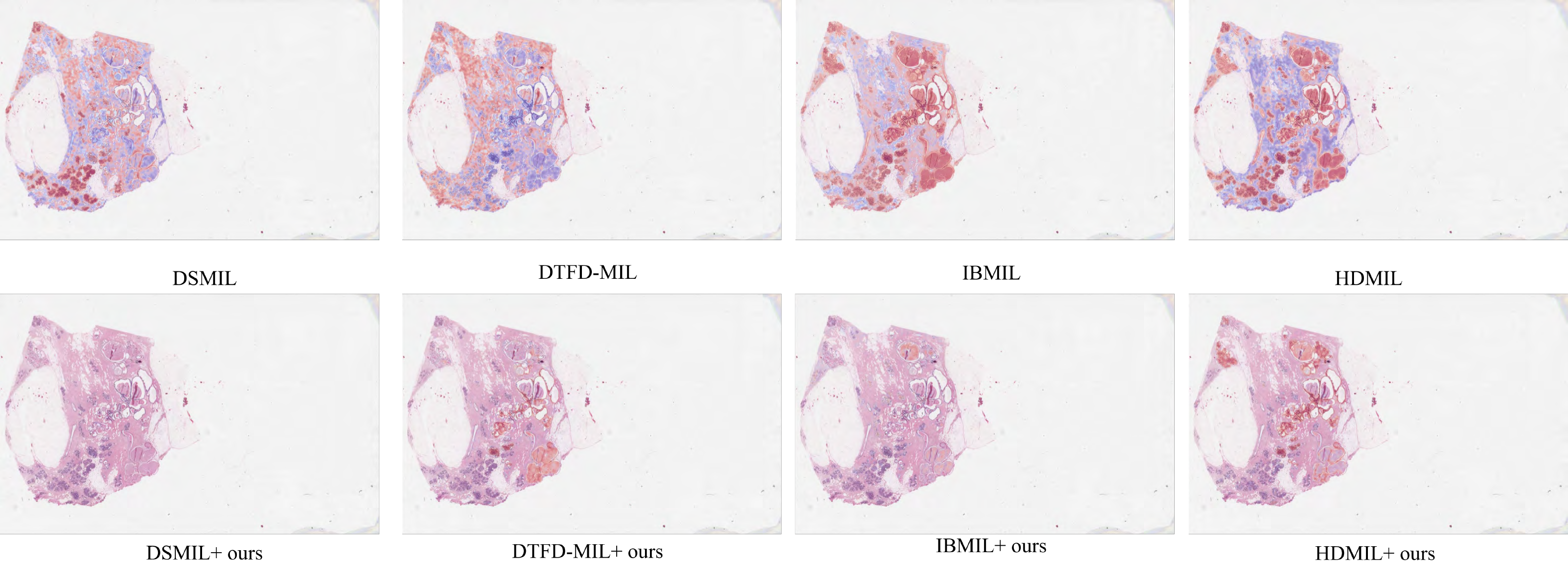}
\caption{\textbf{Qualitative evidence overlay 5.} Representative slide-level attention and recovered GCE evidence.}
\label{fig:attention_5}
\end{figure}

\begin{figure}[H]
\centering
\includegraphics[width=0.92\columnwidth]{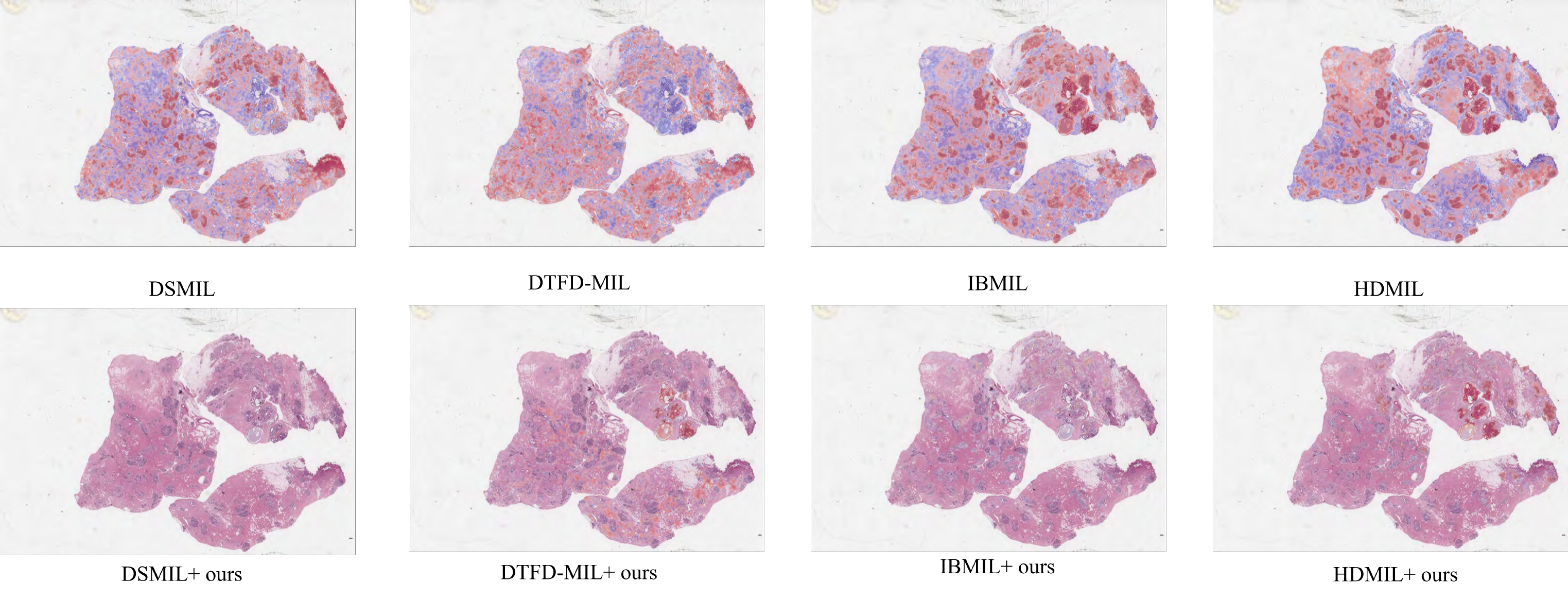}
\caption{\textbf{Qualitative evidence overlay 6.} Representative slide-level attention and recovered GCE evidence.}
\label{fig:attention_6}
\end{figure}

\FloatBarrier
\section{Limitations and Discussion}
\label{app:limitations}
The S/N/R formalization operates at patch level; extending it to pixel-level or region-level evidence remains open.
The anchor bank uses fixed text prompts that may not transfer to rare or previously unseen tissue types without prompt adaptation.
Finally, S/N/R measures model-relative evidence faithfulness, not pathologist-verified causal mechanisms---bridging this gap requires clinical validation studies.

\end{document}